\crefname{appendix}{Appendix}{Appendices}
\Crefname{appendix}{Appendix}{Appendices}
  \def\cref@section@alias{appendix}%
  \def\cref@subsection@alias{appendix}%
  \def\gls#1{#1}\def\Gls#1{#1}\def\glspl#1{#1}\def\Glspl#1{#1}%
\newacronym{sota}{SOTA}{state-of-the-art}
\newacronym{ad}{AD}{auto-differentiation}
\newacronym{lj}{LJ}{Lennard-Jones}
\newacronym{fbm}{fBM}{fractional brownian motion}
\newacronym{rl}{RL}{Riemann-Liouville}
\newacronym{ou}{OU}{Ornstein-Uhlenbeck}
\newacronym{rbergomi}{rBergomi}{rough Bergomi}
\newacronym{rge}{RGE}{rotational geodesic error}
\newacronym{mse}{MSE}{mean squared error}
\newacronym{mmd}{MMD}{maximum mean discrepancy}
\newacronym{ode}{ODE}{ordinary differential equation}
\newacronym{pde}{PDE}{partial differential equation}
\newacronym{sde}{SDE}{stochastic differential equation}
\newacronym{cde}{CDE}{controlled differential equation}
\newacronym{yde}{YDE}{Young differential equation}
\newacronym{rde}{RDE}{rough differential equation}
\newacronym{rsde}{RSDE}{rough stochastic differential equation}
\newacronym{spde}{SPDE}{stochastic partial differential equation}
\newacronym{rk}{RK}{Runge--Kutta}
\newacronym{rkmk}{RKMK}{Runge--Kutta--Munthe--Kaas}
\newacronym{sg}{SG}{Savitzky--Golay}
\newacronym{node}{NODE}{neural ordinary differential equation}
\newacronym{ncde}{NCDE}{neural controlled differential equation}
\newacronym{nrde}{NRDE}{neural rough differential equation}
\newacronym{bnrde}{B-NRDE}{branched neural rough differential equation}
\newacronym{nspde}{NSPDE}{neural stochastic partial differential equation}
\newacronym{dlrnet}{DLR-Net}{Deep Latent Regularity Net}
\newacronym{nors}{NORS}{Neural Operator with Regularity Structure}
\newacronym{gl}{GL}{Grossman--Larson}
\newacronym{bck}{BCK}{Butcher--Connes--Kreimer}
\newacronym{mkw}{MKW}{Munthe--Kaas--Wright}
\newacronym{smiles}{SMILES}{Simplified Molecular-input Line-entry System}
\newacronym{dft}{DFT}{density functional theory}
\newacronym{md}{MD}{molecular dynamics}
\newacronym{spdgraph}{SPD}{shorest path distance}
\newacronym{spdman}{SPD}{symmetric positive definite}
\newacronym{aiman}{AI}{affine invariant}
\newacronym{bw}{BW}{Bures-Wasserstein}
\newacronym{mlp}{MLP}{multilayer perceptron}
\newacronym{nlp}{NLP}{natural language processing}
\newacronym{llm}{LLM}{large language models}
\newacronym{ffn}{FFN}{feedforward neural network}
\newacronym{cnn}{CNN}{convolutional neural network}
\newacronym{mpnn}{MPNN}{message passing neural network}
\newacronym{mt}{MT}{multitask}
\newacronym{zs}{ZS}{zero-shot}
\newacronym{relu}{ReLU}{rectified linear unit}
\newacronym{gelu}{GELU}{Gaussian error linear unit}
\newacronym{silu}{SiLU}{sigmoid linear unit}
\newacronym{gnn}{GNN}{graph neural network}
\newacronym{gtn}{GTN}{graph transformer neural network}
\newacronym{egnn}{EGNN}{Equivariant Graph Neural Network}
\newacronym{gat}{GAT}{graph attention network}
\newacronym{gatv2}{GATv2}{graph attention network v2}
\newacronym{mha}{MHA}{multi-headed scaled dot-product attention}
\newacronym{rope}{RoPE}{Rotary Position Embedding}
\newacronym{trope}{T-RoPE}{Temporal Rotary Position Embedding}
\newacronym{nope}{NoPE}{no positional encoding}
\newacronym{rwpe}{RWPE}{random walk positional encoding}
\newacronym{rrwp}{RRWP}{relative random walk probabilities}
\newacronym{no}{NO}{neural operator}
\newacronym{atoms}{ATOM}{Atomistic Transformer Operator for Molecules}
\newacronym{fno}{FNO}{Fourier Neural Operator}
\newacronym{egno}{EGNO}{Equivariant Graph Neural Operator}
\newacronym{gnot}{GNOT}{General Neural Operator Transformer}
\newacronym{hnca}{HNCA}{Heterogenous Normalised Cross-Attention}
\newacronym{ks}{KS}{Kolmogorov-Smirnov}
\newacronym{omd}{OMD}{Oxford Multi-motion Dataset}
\let\cite\citep
\let\textcite\citet
\newtheorem{theorem}{Theorem}[section]
\newtheorem{proposition}[theorem]{Proposition}
\newtheorem{definition}[theorem]{Definition}
\title{ATOM: A Pretrained Neural Operator for\\ Multitask Molecular Dynamics}
\author{Luke Thompson \\
  The University of Sydney
  \And
  Davy Guan \\
  Data61 CSIRO 
  \And
  Dai Shi \\
  University of Cambridge
  \And
  Slade Matthews\thanks{These authors contributed equally as senior authors.} \\
  The University of Sydney 
  \And
  Junbin Gao\footnotemark[1] \\
  The University of Sydney 
  \And
  Andi Han\footnotemark[1] \ \thanks{Correspondence to: \texttt{andi.han@sydney.edu.au}.} \\
  The University of Sydney 
}
\begin{document}

\maketitle
\thispagestyle{fancy}
\pagestyle{fancy}
\lhead{Published as a conference paper at ICLR 2026}
\begin{abstract}

\Gls{md} simulations underpin modern computational drug discovery, materials science, and biochemistry. Recent machine learning models provide high-fidelity \gls{md} predictions without the need to repeatedly solve quantum mechanical forces, enabling significant speedups over conventional pipelines. Yet many such methods typically enforce strict equivariance and rely on sequential rollouts, thus limiting their flexibility and simulation efficiency. They are also commonly single-task, trained on individual molecules and fixed timeframes, which restricts generalization to unseen compounds and extended timesteps. To address these issues, we propose \gls{atoms}, a pretrained transformer neural operator for multitask molecular dynamics. \gls{atoms} adopts a quasi-equivariant design that requires no explicit molecular graph and employs a temporal attention mechanism, enabling accurate parallel decoding of multiple future states. To support operator pretraining across chemicals and timescales, we curate TG80, a large, diverse, and numerically stable \gls{md} dataset with over 2.5 million femtoseconds of trajectories across 80 compounds. \gls{atoms} achieves state-of-the-art performance on established single-task benchmarks, such as MD17, RMD17 and MD22. After multitask pretraining on TG80, \gls{atoms} shows exceptional zero-shot generalization to unseen molecules across varying time horizons. We believe \gls{atoms} represents a significant step toward accurate, efficient, and transferable molecular dynamics models.

\end{abstract}

 \addtocontents{toc}{\protect\setcounter{tocdepth}{-1}}
\section{Introduction}   
\glsresetall

        
        
    

\Gls{md} serves as a computational microscope of atomic motion and is now integral to drug discovery and materials science pipelines \cite{dror_biomolecular_2012, de_vivo_role_2016}. In ab initio molecular dynamics, quantum-mechanical \gls{dft} is used to compute atomic forces, and the resulting equations of motion are integrated to generate high-fidelity trajectories. However, \gls{dft}'s computational complexity scales at least cubically with the number of atoms, and relies on double-precision arithmetic that limits GPU acceleration \cite{kresse_efficiency_1996, stein_double-hybrid_2020, li_scaling_2024}. 

Neural approaches have recently emerged as a promising solution to the scalability bottleneck. \textit{Equivariant} architectures, in particular, encode physical symmetries to model interatomic dynamics, achieving ab initio-level accuracy at significantly reduced computational cost  \cite{batzner_e3-equivariant_2022, musaelian_learning_2022, batatia_design_2022, batatia_mace_2023, xu_equivariant_2024}. While equivariance is often deemed essential for improving generalization, strict symmetry preservation involves substantial tradeoffs \cite{xu_equivariant_2024, schreiner_implicit_2023}. Architectures that enforce exact equivariance at every layer often increase computational overhead, restrict model expressivity, and complicate optimization \cite{fuchs_se3-transformers_2020, brehmer_geometric_2023, elhag_relaxed_2025}. It is unclear whether symmetry constraints can be relaxed without sacrificing accuracy for molecular dynamics.

Furthermore, most existing methods for molecular dynamics are \textit{autoregressive}, predicting the next state based on the current one \citep{kohler_equivariant_2019,fuchs_se3-transformers_2020,thiemann2025force}.  Autoregressive approaches often struggle to capture long-horizon temporal dependencies and accumulate error as the prediction horizon grows \cite{bengio_scheduled_2015, bergsma_sutranets_2023, taieb_bias_2016}. Inference speeds are also constrained by the need for sequential integration, failing to exploit modern, highly parallel compute architectures. One exception is \gls{egno} \cite{xu_equivariant_2024}, which models the entire trajectory with neural operator learning. Nevertheless, \gls{egno} enforces strict equivariance and is single-task in nature, i.e., it is trained and evaluated on trajectories of each molecule separately with a fixed time horizon, which limits \textit{zero-shot generalization} to unseen molecules or timeframes. 

\textbf{Our Main Contributions.} In this work, we address the above issues regarding equivariance, autoregression, and zero-shot generalization within a unified framework, which we call \textit{\gls{atoms}}. To this end, we propose a pre-trained neural operator with a transformer backbone for molecular dynamics and introduce a new \gls{md} dataset, TG80, which is both chemically diverse and numerically stable for multitask pretraining and benchmarking. 
\begin{itemize}[leftmargin=0.15in]
    \item \textit{Design innovations}. \gls{atoms} is \textit{quasi-equivariant} by employing an equivariant lifting layer that produces symmetry-aware features, while allowing subsequent transformer blocks to be unconstrained for flexibility and expressiveness. 
    Unlike autoregressive models, \gls{atoms} allows \textit{parallel decoding} of molecule states across multiple timesteps, directly learning the trajectory operator. By encoding time lags via a novel temporal rotary position embedding, \gls{atoms} enhances temporal interpolation and extrapolation, enabling robust predictions across multiple time horizons. Finally, \gls{atoms} requires no predefined molecular graph and operates directly on \textit{point clouds}, naturally accommodating long-range spatial interactions without the need for hand-crafted connectivity. 

    \item \textit{Performance highlights}. \gls{atoms} sets new state-of-the-art on single-task \gls{md} benchmarks. For larger, sparsely connected molecules in MD22, \gls{atoms} significantly outperforms existing graph-based baselines by capturing the long-range atomic interactions. In the multitask regime, we pretrain \gls{atoms} on TG80 trajectories from multiple molecules and varying timeframes, demonstrating significant zero-shot transfer to both unseen molecules and timesteps, improving existing baselines by 39.75\% on average. This achieves performance on par with existing specialized baselines tailored for such molecules and timeframes.  To the best of our knowledge, this is the first method that demonstrates such generalization capability in molecular dynamics.
\end{itemize}

We believe our work represents a shift in molecular dynamics modeling, where we demonstrate the potential of quasi-equivariance designs and zero-shot generalization to out-of-domain systems, which is enabled by the comprehensive TG80 \gls{md} dataset. \gls{atoms} and TG80 are available at \href{https://github.com/luke-a-thompson/ATOM}{this repository}.

\section{Related work}
\textbf{Equivariant Neural Networks.}
Equivariance (to transformations such as rotation, reflection, and translation) has emerged as an essential physics-informed prior for deep learning models on molecular data \cite{bronstein_geometric_2021, duval_hitchhikers_2023}. 
Early works employed convolutional approaches to achieve translation equivariance in E(3) \cite{weiler_3d_2018, wu_pointconv_2020} or tensor product attention and spherical harmonics to enforce roto-translational equivariance in SE(3) \cite{fuchs_se3-transformers_2020,thomas_tensor_2018}. In contrast, \gls{mpnn} frameworks, such as \gls{egnn} and others \cite{garcia_satorras_en_2021, gasteiger_gemnet_2021, huang_equivariant_2022}, achieve equivariance by operating on strictly equivariant features, such as inter-node distances and directions. While effective, \glspl{mpnn} typically assume a fixed molecular graph. This is problematic when the underlying structure contains non-local interactions and dynamic bonding effects (e.g., resonances, transient interactions), which render predefined graphs inaccurate over time \cite{knutson_dynamic_2022,luo_predicting_2021}. To address this issue, we model molecules as point clouds, with our attention represented as a fully connected graph that allows unrestricted information propagation across the molecule.


{\textbf{Time-coarsened Molecular Dynamics}
Time coarsening is a coarse-graining method which preserves molecular structure, but compresses many short integration steps into a few large-stride updates to reduce the cost of long-time simulation \cite{kmiecik_coarse-grained_2016}.
Stochastic coarse-graining approaches often learn transition kernels on configuration space, bypassing explicit integration of the equations of motion. \textcite{klein_timewarp_2023} learns such a kernel with a normalizing flow and uses it as an MCMC proposal targeting the Boltzmann distribution, \textcite{hsu_score_2024} uses a conditional diffusion model to learn a transition probability matrix, and \textcite{yu_unisim_2025} uses flow-matching to learn a vector field transporting current states to future states.
Closer to our framework, deterministic methods such as MDNet \cite{zheng_learning_2021} and TrajCast \cite{thiemann2025force} learn a GNN and \gls{egnn}, respectively, which autoregressively predict fixed strides 10-100 times larger than those of \gls{md} integrators. \textcite{bigi_flashmd_2025} incorporates Hamiltonian structure and explicit energy-conservation. Most of the methods require direct force learning and are sequential in nature, while \Gls{atoms} may be interpreted as a \textit{force-free} deterministic coarse-graining approach, wherein temporal pushforward is approximated by a learned propagation operator which is decoded \textit{in parallel}.}

\textbf{Neural Operators.}
Neural operators are deep learning methods for learning operators between function spaces \cite{kovachki_neural_2021}. 
A wide variety of architectures have been proposed for such operator learning. Notably, \gls{fno} \cite{li_fourier_2021} learns an operator in the Fourier domain, while its derivatives G-FNO \cite{helwig_group_2023} and PINO \cite{li_physics-informed_2023}, respectively, add group equivariance and physics-informed properties. \citet{xu_equivariant_2024} bridges this framework with molecular dynamics by recasting the task as learning a propagation operator that evolves historical atomic positions into their future configurations. Specifically, \gls{egno} \cite{xu_equivariant_2024} integrates \gls{egnn} and \gls{fno} layers to learn dynamic trajectories, capturing spatial and temporal correlations.
Recently, transformer neural operators \cite{bryutkin_hamlet_2024,hao_gnot_2023,li_transformer_2023} have surpassed the performance of \gls{fno} in most \gls{pde} tasks. Notably, OFormer \cite{li_transformer_2023} uses a linear Galerkin-type attention mechanism, which omits the softmax and instead interprets the latent column vectors as basis functions. \gls{gnot} \cite{hao_gnot_2023} employs a novel subquadratic cross-attention methodology to integrate multiple feature types (e.g., shape and point relationships) into their transformer blocks. With \gls{atoms}, we unify the \gls{md} problem formulation and temporal discretization approach introduced by \gls{egno} with the increased representational power of transformers in operator settings.


\textbf{MD Benchmarks.}
Research on graph machine learning for molecular dynamics suffers from poor benchmarking \cite{bechler-speicher_position_2025}. 
For example, despite the fact that MD17 Benzene exhibits non-physical noise approximately 1000 times higher compared to other compounds \cite{christensen_role_2020}, it is still regularly employed to benchmark new models \cite{bihani_egraffbench_2023,huang_equivariant_2022,liao_equiformer_2023, xu_equivariant_2024}. The practical relevance of single-task learning on these datasets is also dubious, as predicting trajectories for molecules with existing numerical solutions offers minimal benefit. We believe the strengths of neural approaches emerge in transfer learning, where models generalize to unseen compounds, thereby circumventing the computational costs associated with explicit numerical simulations. This motivates our development of TG80 to facilitate multitask dynamics learning across molecular systems.

\section{Atomistic Transformer Operator for Molecules (ATOM)}
\label{sect:atom}

In this section, we first introduce the problem formulation (\Cref{sect:problem}) and then propose the framework of \gls{atoms} by introducing the key model and training designs (\Cref{sect:main_model}). We then discuss the multitask pretraining for \gls{atoms} and introduce TG80 \gls{md} dataset (\Cref{sect:multi-pretrain}).

\subsection{Problem Formulation}
\label{sect:problem}
We follow \cite{xu_equivariant_2024} to cast molecular dynamics prediction as operator learning. We model a molecule of $N$ atoms as a point cloud in $\mathbb R^3$, which we denote as $\mathcal{G}^{(t)}$ for a given system state time $t$. In particular, we write $\mathcal{G}^{(t)} = (\mathbf x_i^{(t)}, \mathbf v_i^{(t)})_{i= 1}^N$ that represent molecules in terms of the atom positions $\mathbf{x}$ and velocities $\mathbf{v}$.
Our objective is to predict a future trajectory $\mathcal{G}^{(t + \Delta t)}$, where $\Delta t \in [0, \Delta T]$. 

Similar to \cite{xu_equivariant_2024}, we focus on predicting the position states only. Let $\mathcal{U} \colon [0, \Delta T] \rightarrow \mathbb R^{N \times  3}$ be the trajectory function mapping $\Delta t$ to $U(\Delta t) \in \mathbb R^{N \times  3}$ representing molecule positions $\Delta t$ in the future. We assume a solution operator $F^\dagger \colon \mathcal{G}^{(t)} \rightarrow \mathcal{U}$ exists which provides the underlying future trajectory given system states at $t$. Thus, the goal of molecular dynamics prediction becomes training a neural operator $F_\theta(\mathcal{G}^{(t)})$ to approximate the target trajectory function $F^\dagger(\mathcal{G}^{(t)})$: $\min_\theta \mathbb E_{\mathcal{G}^{(t)}} \mathcal{L} \big(F_\theta(\mathcal{G}^{(t)}) (t) , F^\dagger(\mathcal{G}^{(t)})(t) \big)$,
for some loss function $\mathcal{L} \colon \mathcal{U} \times \mathcal{U}\rightarrow \mathbb R$. Here, expectation is with respect to the different initial states.
By discretizing over the temporal domain and considering $L_2$ loss, we optimize the neural operator with a discretized temporal sampling of the states:
\begin{align}
    &\min_{\theta} \frac{1}{P} \sum_{p=1}^P \mathbb{E}_{\substack{
            \mathcal{G}^{(t)}  \\
            }
        } 
         \left\|
            F_\theta\left( \mathcal{G}^{(t)}  \right)(\Delta t_p) - F^\dagger \left (\mathcal{G}^{(t)} \right) (\Delta t_p)
        \right\|_{2}^{2}.
        \label{eq:objective_function}
\end{align}
where $\{ \Delta t_1, ..., \Delta t_p \}$ are discrete timesteps. We replace the true future state $F^\dagger(\mathcal{G}^{(t)})(\Delta t_p)$ with the known future ground truth node positions $\mathbf{x}^{(t+\Delta t_p)}$ for $\Delta t_p \in [0, \Delta T]$.

{
\textbf{Quasi-equivariance.}
\label{def:quasi_equivariance}
We formally define quasi-equivariance, motivated by \cite{elhag_relaxed_2025}.
\begin{definition}[$\varepsilon$-quasi-equivariance]
We call a function $f: \mathcal{X} \rightarrow \mathcal{Y}$, $\varepsilon$-quasi-equivariant with respect to group $G$ if it satisfies $\mathbb E_{x \in \mathcal{X}} \|  \int_G f( \phi(g)(x) ) d \mu (g)  - \int_G \rho(g) (f(x)) d\mu(g)
\| \leq \varepsilon$, 
where $\mu$ denotes the normalized Haar measure. 
\end{definition}
In practice, we approximate the group integration with Monte Carlo samples from $G$. 
}

\textbf{Single- and multitask.}
\label{sec:st_and_mt}
Unlike prior works \cite{schreiner_implicit_2023, xu_equivariant_2024}, we consider both single-task and multitask settings. \textit{Single-task} refers to the case where a separate model is independently trained and evaluated on each molecule and fixed timeframes. This corresponds to the conventional practice in molecular dynamics benchmarks. \textit{Multitask} instead pretrains one unified model on several molecules across varying time lags and evaluates out-of-domain trajectories on unseen molecules, thereby directly testing zero-shot cross-molecule generalization. Under a multitask setting, the objective \eqref{eq:objective_function} computes the expectation over trajectories of multiple molecules.



\subsection{ATOM Model and Training Design}
\label{sect:main_model}


Here we outline the pipeline of \gls{atoms}. At its core is an \textit{equivariant lifting} layer (\Cref{sec:equivariant_lifting}), which maps atomic positions, velocities and their phase features into a richer embedding space while preserving symmetry under the Euclidean group \(E(3)\). The lifted embeddings are then processed by the \gls{atoms} attention block, which applies \textit{heterogeneous attention} over positions, velocities, and phase features with chemical augmentation (\Cref{sect:atom_attn}).  To capture temporal dynamics, we incorporate a \textit{temporal rotary position embedding} (T-RoPE) (\Cref{sect:atom_attn}) that depends only on time lags and is shared across atoms, ensuring translation invariance in time and permutation invariance within each molecule. {The parameterized \gls{atoms} can be written by 
\begin{equation*}
    F_\theta \coloneqq \mathcal{P} \circ \sigma(\mathcal{K}_L) \circ \cdots \circ \sigma(\mathcal{K}_1) \circ \mathcal{Q}
\end{equation*}
where $\mathcal{Q}$, $\mathcal{P}$ denotes the equivariant lifting and projection operators respectively. $\mathcal{K}_l, l = 1,..., L$ are the data-dependent kernels induced by cross attention (See Appendix \ref{app:kernel}), and $\sigma$ denotes some nonlinear activation function.
}

Finally, to counter numerical noise in training trajectories, we inject randomly sampled position and velocity perturbations during training (\Cref{sec:label_noise}), which improves robustness and acts as a regularizer against overfitting. 
The overall pipeline of \gls{atoms} is in \Cref{fig:atom-arch}. 


\begin{figure}[t]
    \centering
    \includegraphics[width=1\linewidth]{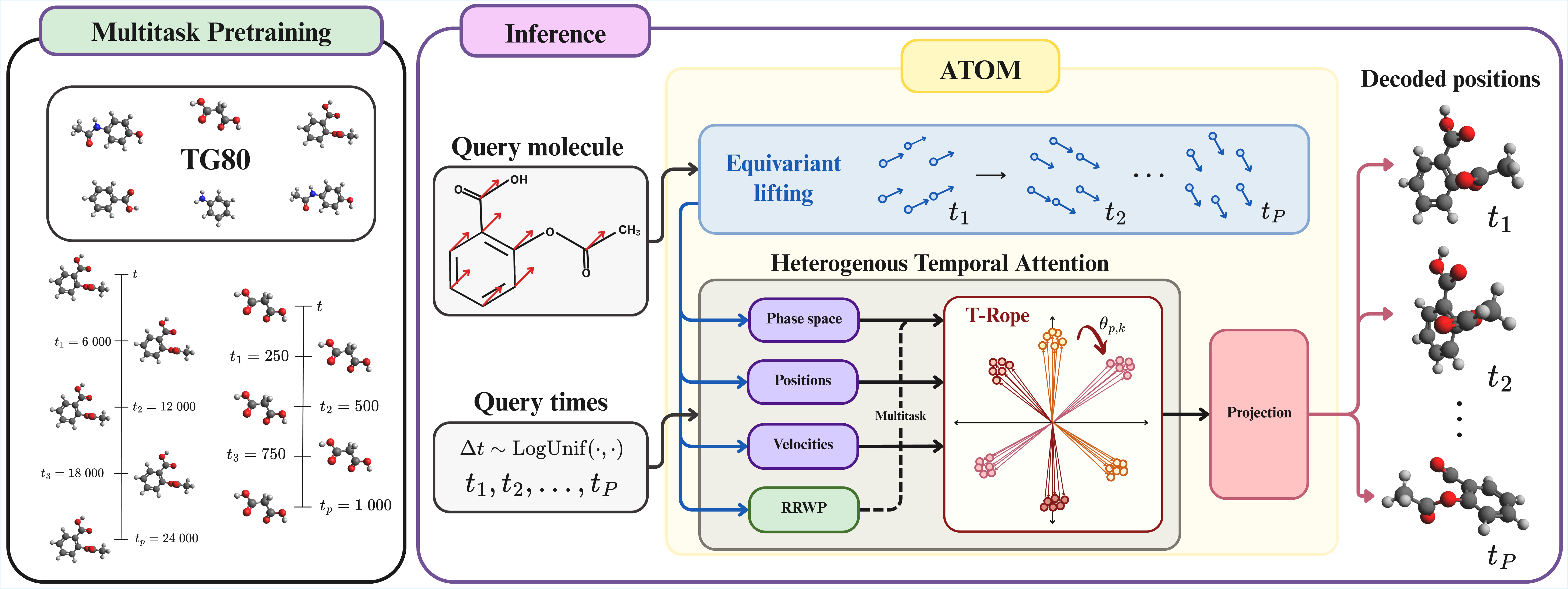}
    \caption{\textbf{\gls{atoms} Pipeline}. We pretrain \gls{atoms} on the TG80 dataset across multiple molecules with stochastic time lags. At inference, \gls{atoms} takes a query molecule and timestamps and directly outputs corresponding molecular states. 
    }
    \label{fig:atom-arch}
\end{figure}

\subsubsection{\textit{E}(3) Equivariant Lifting}
\label{sec:equivariant_lifting}
To model atomic states in a symmetry-respecting way, each atom is encoded with its 3D position and velocity, augmented with their norms: $\mathbf{x} = (x,y,z,\sqrt{x^2+y^2+z^2}), \, \mathbf{v} = (v_x, v_y, v_z, \sqrt{v_x^2+v_y^2+v_z^2})$.
To construct higher-dimensional features that remain consistent with $E(3)$ symmetry, we apply \textit{equivariant lifting} that maps the inputs through learnable functions that preserve group actions.
Specifically, we use \(E(3)\)-equivariant linear layers \cite{geiger_e3nn_2022} that lift the position and velocity vectors to a feature space
wherein they satisfy equivariance constraints by construction. We further construct phase space featrues for each atom by augmenting the position and velocity vectors with atomic number, which is subsequently processed by a learnable equivariant layer to obtain a lifted representation. The final lifted embedding for a molecule is given by $({\mathbf X}, {\mathbf{V}}, {\mathbf{Z}}) \in \mathbb R^{3 \times N P \times d_v}$ corresponding to position, velocity and phase features. The second dimension aggregates nodes and time for attention and $d_v$ is the embedding space dimension. 

We highlight that after the equivariant lifting layer, we do not enforce equivariance in the subsequent transformer blocks. This relaxation improves performance over fully equivariant designs while still showing robustness to random trajectory rotations versus non-equivariant baselines (see \Cref{sec:ablations}).

\subsubsection{ATOM Heterogeneous Temporal Attention}
\label{sect:atom_attn}

We employ a heterogeneous temporal attention mechanism to enable mixing between multiple features $({\mathbf{X}}, {\mathbf{V}}, {\mathbf{Z}}) \in \mathbb R^{3 \times N P \times d_v}$ across spatial and temporal dimensions.
\FPeval{\PctImproveCrossAttvsStdAtt}{round((7.39-6.92)/7.39*100,2)} 
We use the phase space embedding ${\mathbf Z}$ as the query and attend to the key-value pairs formed from all features ${\mathbf{X}}, {\mathbf{V}}, {\mathbf{Z}} \in \mathbb R^{NP \times d_v}$. In \Cref{fig:MD17_ablation}, we show that this improves performance by \PctImproveCrossAttvsStdAtt\% over standard self-attention for single-task prediction. To encode temporal information, we introduce Temporal RoPE (T-RoPE), adapting \gls{rope} \citep{su_roformer_2023} to irregular time lags by scaling the phase rotations according to cumulative timestamps constructed from per-step increments ${ \Delta t}$.

Let the hidden dimension per head be $d_h$ (even). We define frequencies $\omega_k=b^{-2k/d_h}$ for $k=0,\dots,d_h/2-1$.
Given per-step time increments $\{\Delta t_p\}_{p=1}^{P}$, we build timestamps
$t_p=t+\sum_{r=1}^{p}\Delta t_r$, and assign a \emph{single} rotation to all $N$ atoms at timestep $p$: $\mathbf R_{p}=\operatorname{diag}\left(\mathbf R(\theta_{p,0}),\dots,\mathbf R(\theta_{p,d_h/2-1})\right) \in \mathbb R^{d_h \times d_h}$, where $\theta_{p,k}=\frac{\omega_k}{\tau}\,\bigl(t_p-t_0\bigr)$ and
$\mathbf R(\theta) \in \mathbb R^{2 \times 2}$ is the rotation matrix with angle $\theta$ and $\tau > 0$ is a timescale hyperparameter. Suppose the query molecule state at time $p$ is given as $\mathbf Q_p \in \mathbb R^{N \times d_h}$ and key molecule state at time $p'$ is $\mathbf K_{p'} \in \mathbb R^{N \times d_h}$. We apply $\mathbf R_p, \mathbf R_{p'}$ to $\mathbf Q_p, \mathbf K_{p'}$ respectively so that the rotary dot product $\mathbf Q_p \mathbf R_p (\mathbf K_{p'} \mathbf R_{p'})^\top$  
depends only on the time interval $t_{p'}-t_p$. This makes attention \textit{translation invariant} in time, which allows for interpolation and extrapolation across irregular increments $\{\Delta t_p\}$. In addition, sharing the same $\mathbf R_p$ across all $N$ atoms in a molecule ensures \textit{permutation-invariance} within a timestep.
For aggregated query and key matrices $\mathbf Q, \mathbf K \in \mathbb R^{NP \times d_h}$, we denote the application of temporal \gls{rope} across $P$ timesteps and $N$ atoms as $\operatorname{T-RoPE}(\mathbf Q), \operatorname{T-RoPE}(\mathbf K) \in \mathbb R^{NP \times d_h}$.

Specifically, a single-head attention layer of \gls{atoms} computes 
\begin{equation*}
    \sum_{\mathbf F   \in \{{\mathbf{X}}, {\mathbf{V}}, {\mathbf{Z}} \}} \gamma_{\mathbf F}  \, \operatorname{softmax} \Big( \frac{ \operatorname{T-RoPE}(Q({\mathbf{Z}})) \, \operatorname{T-RoPE}( K(\mathbf F) )^\top}{\sqrt{d_h}} \Big) 
     V(\mathbf F),
\end{equation*}
where $Q(\cdot), K(\cdot), V(\cdot)$ represent the query, key and value projections. 
We introduce learnable weights $\gamma_{\mathbf F}$ to modulate the relative importance of each feature. 
In \Cref{appx:kernel_integral_cross_att}, we show that heterogeneous attention is equivalent to a kernel integral operator and discuss its properties.

\subsubsection{Training with Label Noise Regularization}
\label{sec:label_noise}
Many \gls{dft} datasets are inherently noisy \cite{christensen_role_2020}, and \gls{md} models can overfit to this noise. Motivated by the regularization effect of label noise \cite{damian_label_2021,haochen_shape_2020}, we augment the observed node  positions \(\mathbf{x}\) and velocities \(\mathbf{v}\) by random Gaussian noise \(\boldsymbol{\xi}_\mathbf{x}, \boldsymbol{\xi}_\mathbf{v} \sim \mathcal{N}(\mathbf{0}, \sigma^2 \mathbf{I})\) during training. 
Let ${\mathcal{G}}^{(t)}_{\boldsymbol{\xi}} = ( \mathbf x_i^{(t)} + \boldsymbol{\xi}_{x, i}, \mathbf v^{(t)} + \boldsymbol{\xi}_{v,i})$ be the noised initial state at time $t$. We minimize the following regularized loss
\begin{align*}
    &\min_{\theta} \frac{1}{P} \sum_{p=1}^P \mathbb{E}_{ 
            \mathcal{G}^{(t)}, \boldsymbol{\xi}, \boldsymbol{\xi}_x^p} 
         \left\|
            F_\theta\left( \mathcal{G}^{(t)}_{\boldsymbol{\xi}}   \right)(\Delta t_p) - (\mathbf{x}^{(t+ \Delta t_p) } + \boldsymbol{\xi}_x^p)
        \right\|_{2}^{2}.
\end{align*}
A similar strategy has also appeared in \gls{gnn}-based  \gls{md} models and neural operator pretraining \cite{dauparas_robust_2022, zhou_strategies_2024, hao_dpot_2024}. 
We only apply noise augmentation during training and evaluate on the unperturbed ground-truth trajectories.


{\textbf{Comparison to \gls{egno}}. We highlight that \gls{atoms} adopts different design choices compared to \gls{egno}. First, EGNO is an EGNN operating on fixed bond connectivity, whereas \gls{atoms} uses an E(3)-equivariant lifting layer followed by globally connected point-cloud attention, which better handles long-range and sparsely bonded interactions. Second, EGNO is strictly equivariant end-to-end, while ATOM is quasi-equivariant, enforcing equivariance only in the lifting stage and relaxing it in deeper transformer layers, which our ablations show improves accuracy. Third, EGNO models time via Fourier temporal convolution, whereas ATOM uses Temporal RoPE, allowing translation-invariant handling of irregular time gaps and stronger temporal extrapolation. T-RoPE also uniquely allows modifying the time-horizon \(\Delta T\) \textit{at inference} by modulating the rotary phases (\Cref{sect:atom_attn}). Consequently, a pretrained \gls{atoms} can be evaluated at arbitrary \(\Delta T\) values without retraining.}

\subsection{Multitask \gls{atoms} Pretraining and TG80 Dataset}
\label{sect:multi-pretrain}

\vspace{-0.1em}

This section adapts \gls{atoms} for the multitask setting, where the aim is to predict future trajectories for unseen molecules. In order to more effectively distinguish molecules, we construct a radius graph of 1.6 \AA{} based on atomic positions, and apply \textit{random walk positional encoding} \cite{ma_graph_2023, lobato_highs_2021} to augment the phase vector $\mathbf{z}$. We describe the process in detail in \Cref{sec:positional_encoding} and highlight that such a graph depends only on atomic positions, not chemical bonds. 

During multitask training, each mini-batch contains trajectories from multiple molecules. 
In addition, we perform random sampling for the time lags $\Delta t$ from a log-uniform distribution between $\Delta t_{\rm min}$ and $\Delta T$, namely $\Delta t \sim {\rm LogUnif}(\Delta t_{\rm min}, \Delta T)$. This aims to enhance the robustness of interpolation and extrapolation in the temporal domain, a consideration that has been similarly explored in \citep{schreiner_implicit_2023}. Let $\mathcal{M}$ denote the set of training molecules and let $\mathcal{G}_m^{(t)}$ represent the state of molecule $m \in \mathcal{M}$ at timestamp $t$. We can write the pretraining multitask objective as 
\begin{equation*}
\min_{\theta}\;
\frac{1}{|\mathcal{M}|}\sum_{m\in\mathcal{M}}
\mathbb{E}_{\mathcal{G}^{(t)}_{m},\, \Delta t \sim \mathrm{LogUnif}(\Delta t_{\min},\, \Delta T)}
\left\|
F_\theta\!\left(\mathcal{G}^{(t)}_{m}, \Delta t\right)(\Delta t) - \mathbf{x}^{(t+\Delta t)}_{m}
\right\|_2^2,
\end{equation*}
where we take expectation with respect to initial states of multiple molecules in the training set, as well as the time lags. Here, \gls{atoms} also takes a time lag, $\Delta t$, as input, to modulate T-RoPE phase.





\textbf{TG80 Dataset.}
To facilitate pretraining of our neural operator, we introduce \textbf{TG80}, a superset of the MD17 dataset. The initial seed set comprises 40 molecules: 8 MD17 compounds and 32 additional drug-like molecules selected through expert review. We then augment the seed molecules with structurally similar molecules from the PubChem dataset of 173 million compounds \cite{bolton_pubchem3d_2011}. 
%
Accepted candidates had an ECFP-4 Tanimoto similarity between 0.875 and 0.925 to at least one seed molecule, and no more than 0.80 similarity to previously accepted molecules, alongside other criteria detailed in \Cref{sec:tg80_algorithm} \cite{greg_landrum_rdkitrdkit_2025, rogers_extended-connectivity_2010, rogers_computer_1960}. These thresholds follow common practice in the literature, balancing diversity while avoiding collapse into overly narrow chemical subspaces \cite{matter_selecting_1997, menke_natural_2021, eastman_spice_2023, harper_research_2004, zhang_activity_2023}.

We generate all trajectories using ORCA V6.01 \cite{neese_software_2022} with the PBE functional \cite{perdew_generalized_1996}, def2-SVP basis set \cite{weigend_balanced_2005}, \(\Delta\)4 dispersion corrections \cite{caldeweyher_generally_2019, caldeweyher_extension_2020, wittmann_extension_2024} at one femtosecond resolution, 300K temperature, in vacuum. This resembles an enhanced RMD17, with more modern dispersion corrections to improve stability and allow for a larger step size \cite{christensen_role_2020}. As a result, TG80 exhibits \textit{more diverse dynamics} and \textit{improved numerical stability}, with no compound exceeding 50~\AA{} center-of-mass drift in \Cref{fig:numerical_stability}\footnote{Simulations ran on 32 AMD EPYC 7543 cores with 256\,GB RAM per molecule, totalling 806{,}400 CPU-hours (quoted market cost USD~150\;000).}.
\vspace{-0.3em}
\section{Experiment Results}
\vspace{-0.5em}
\textbf{Metrics.}
We use \textit{State-to-trajectory}  ($\operatorname{S2T}$) and \textit{state-to-state} ($\operatorname{S2S}$) error to evaluate \gls{atoms}  \cite{xu_equivariant_2024}. Specifically,  $\operatorname{S2T} = \frac{1}{P}\sum_{p=1}^P\|\hat{\mathbf{x}}_p - \mathbf{x}_p\|_2^2$, measures the average discrepancy between the predicted \(\hat{\mathbf{x}}\) and ground-truth positions \(\mathbf{x}\) across entire trajectories, while  $\operatorname{S2S} = \|\hat{\mathbf{x}}_P - \mathbf{x}_P\|_2^2$, quantifies the error at the final predicted timestep.

\textbf{Baselines.} For comparison, we include a range of classic to state-of-the-art baselines, including Radial Field (RF) \cite{kohler_equivariant_2019}, Tensor Field Networks (TFN) \citep{thomas_tensor_2018}, SE(3) Transformer (SE(3)-Tr.) \citep{fuchs_se3-transformers_2020}, E(n) equivariant graph neural networks (EGNN) \citep{garcia_satorras_en_2021}, MACE \cite{batatia_mace_2023}, and \gls{egno} \citep{xu_equivariant_2024}. We note that MACE is pretrained on the authors’ own dataset, so this is not a strictly like-for-like comparison. Our EGNN baselines are {EGNN-R}ollout (EGNN-R), which predicts timesteps autoregressively, and {EGNN-S}equential (EGNN-S), which uses the output of each \gls{gnn} as the prediction of a given frame. 
We set all baseline hyperparameters following previous works \cite{xu_equivariant_2024, xu_geodiff_2022, shi_learning_2021} and tune \gls{atoms} and \gls{egno} hyperparameters as in \Cref{tab:atom_hparams} and \Cref{tab:egno_hparams}.

\textbf{Training setups.}
For training of \gls{atoms} and \gls{egno}, we consider two temporal discretization strategies in selecting the timestamps $t_p =t+\sum_{r=1}^{p}\Delta t_r$: (1) \textit{Uniform discretization} selects $t_p = t + p/P \Delta T$ and (2) \textit{Tail discretization} selects $t_p = t + \overline{\Delta} + p/P (\Delta T - \overline{\Delta})$ for a lag $\overline{\Delta} \in [0, \Delta T]$. In the main paper, we present experiment results with {uniform discretization} and include the results with tail discretization in Appendix \ref{app:further_experiment}.  We perform early stopping on the lowest S2S validation loss checkpoint and report results as mean \(\pm 2 \sigma\) over \textit{three training runs}. All experiments are run on an NVIDIA\textsuperscript{\textregistered} RTX 5080 with wall-clock time and FLOP utilization detailed in \Cref{tab:md17_compute_cost}.

\subsection{Single-task Learning}
\label{sec:singletask_learning}
We benchmark on the MD17, RMD17, {and MD22} \gls{dft} \gls{md} trajectory datasets \cite{chmiela_machine_2017, christensen_role_2020, chmiela_accurate_2023}. We partition the trajectories into train/validation/test splits of sizes 500/2000/2000, set \(\Delta T = 3000\) fs and \(P = 8\), and train for 2500 epochs following \cite{xu_equivariant_2024}. For the performance on MD17 ( \Cref{tab:md17_results}), we directly quote the results from \cite{xu_equivariant_2024} except for EGNO. 
We design \gls{atoms} to have six transformer blocks with a hidden size of 256.

\textbf{MD17 and RMD17.} 
As shown in \Cref{tab:md17_results}, \gls{atoms} compares favorably with \gls{sota} baselines on MD17 dataset, 
yielding average reductions of 14.96\% (S2S \gls{mse}) and 8.3\% (S2T \gls{mse}) on average\footnote{We exclude benzene from the table due to the previously discussed high numerical noise.}. In \Cref{tab:rmd17_results} (Appendix~\ref{sect:rmd17}), we benchmark \gls{atoms} on RMD17, and observe similarly competitive performance against \gls{egno}.

\begin{wrapfigure}{r}{0.25\textwidth}
    \vspace{-1em}
    \centering
    \includegraphics[width=\linewidth]{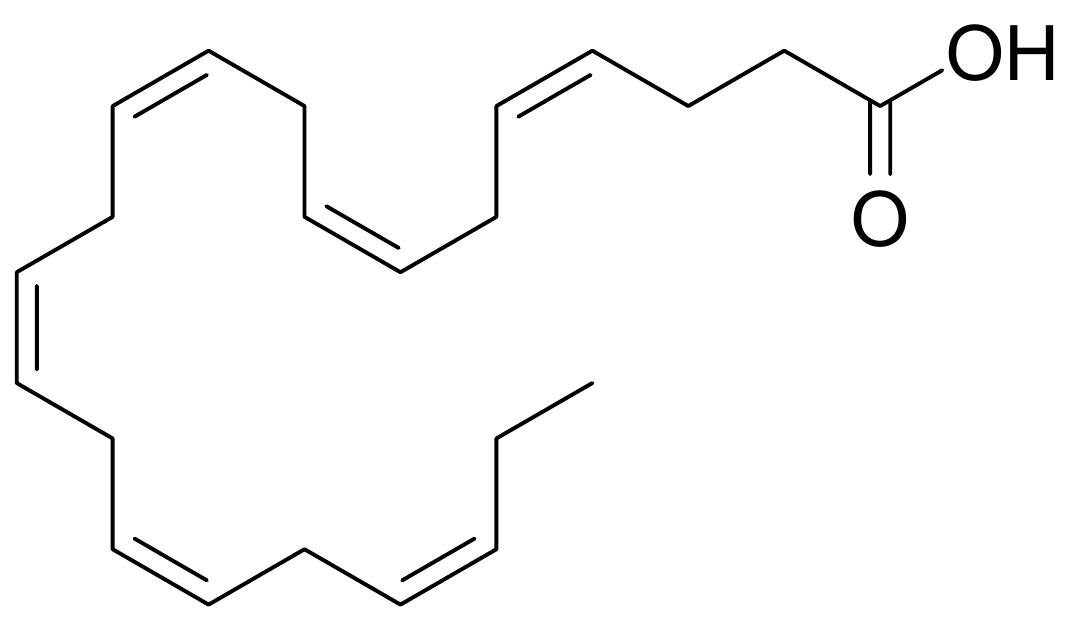}
    \caption{Docosahexaenoic acid (DHA)}
    \label{fig:dha}
    \vspace{-1em}
\end{wrapfigure}
\textbf{MD22.} To evaluate performance on larger molecules, we consider Ac-Ala3-NHMe (20 heavy atoms), docosahexaenoic acid (DHA with 24 heavy atoms), and stachyose (45 heavy atoms) from the MD22 dataset \cite{chmiela_accurate_2023}. \gls{atoms} remains competitive on these systems; whereas \gls{egno} fails to converge (\Cref{tab:md22_results}). We attribute this discrepancy to differing inductive biases: \Glspl{gnn} such as \gls{egno} restrict message passing to a predefined bond or radius graph and can therefore under-represent long-range, non-bonded steric and electrostatic interactions that dominate the behavior of large, sparsely connected molecules \cite{alon_bottleneck_2021,kosmala_ewald-based_2023}. This explains the poor performance of \gls{egno} on MD22, which contains prototypically sparse molecules such as DHA, shown in \Cref{fig:dha} \cite{nv_polyunsaturated_2003}. {We further disentangle the role of connectivity from the use of attention by training a variant, \gls{atoms}-GATv2, in which our heterogeneous temporal attention is replaced by \gls{gatv2} layers \cite{brody_how_2022} operating on the same bond/radius graph as \gls{egno}. \gls{atoms}-\gls{gatv2} still substantially underperforms the full \gls{atoms} model, indicating that the performance gains stem from the fully connected point-cloud interaction pattern rather than from attention alone.}

\begin{table}[t]
\centering
\caption{Single-task MSE (\(\times 10^{-2}\)) on MD17. Upper part: S2S \gls{mse}. Lower part: S2T \gls{mse}.}
\label{tab:md17_results}
\resizebox{\linewidth}{!}{%
    \begin{tabular}{lccccccc}
        \toprule
         & Aspirin & Ethanol & Malonaldehyde & Naphthalene & Salicylic & Toluene & Uracil \\
        \midrule
        RF & \(10.94{\scriptstyle \pm0.02}\) & \(4.64{\scriptstyle \pm0.02}\) & \(13.93{\scriptstyle \pm0.06}\) & \(0.50{\scriptstyle \pm0.02}\) & \(1.23{\scriptstyle \pm0.04}\) & \(10.93{\scriptstyle \pm0.08}\) & \(0.64{\scriptstyle \pm0.02}\) \\
        TFN  & \(12.37{\scriptstyle \pm0.36}\) & \(4.81{\scriptstyle \pm0.08}\) & \(13.62{\scriptstyle \pm0.16}\) & \(0.49{\scriptstyle \pm0.02}\) & \(1.03{\scriptstyle \pm0.04}\) & \(10.89{\scriptstyle \pm0.02}\) & \(0.84{\scriptstyle \pm0.04}\) \\
        SE(3)-Tr. & \(11.12{\scriptstyle \pm0.12}\) & \(4.74{\scriptstyle \pm0.02}\) & \(13.89{\scriptstyle \pm0.04}\) & \(0.52{\scriptstyle \pm0.02}\) & \(1.13{\scriptstyle \pm0.04}\) & \(10.88{\scriptstyle \pm0.12}\) & \(0.79{\scriptstyle \pm0.04}\) \\
        EGNN & \(14.41{\scriptstyle \pm0.30}\) & \(4.64{\scriptstyle \pm0.04}\) & \(13.64{\scriptstyle \pm0.02}\) & \(0.47{\scriptstyle \pm0.04}\) & \(1.02{\scriptstyle \pm0.04}\) & \(11.78{\scriptstyle \pm0.14}\) & \(0.64{\scriptstyle \pm0.02}\) \\
        EGNN-R & \(9.96{\scriptstyle \pm0.14}\) & \(4.61{\scriptstyle \pm0.01}\) & \(13.04{\scriptstyle \pm0.03}\) & \(0.44{\scriptstyle \pm0.05}\) & \(0.96{\scriptstyle \pm0.01}\) & \(10.19{\scriptstyle \pm0.15}\) & \(1.11{\scriptstyle \pm0.04}\) \\
        EGNN-S & \(10.25{\scriptstyle \pm0.09}\) & \(4.61{\scriptstyle \pm0.01}\) & \(13.06{\scriptstyle \pm0.01}\) & \(0.53{\scriptstyle \pm0.01}\) & \(1.06{\scriptstyle \pm0.05}\) & \(10.83{\scriptstyle \pm0.09}\) & \(0.62{\scriptstyle \pm0.01}\) \\
        \gls{egno} & \(9.64{\scriptstyle \pm0.15}\) & \(4.57{\scriptstyle \pm0.01}\) & \(\mathbf{12.92}{\scriptstyle \pm0.00}\) & \(\mathbf{0.39}{\scriptstyle \pm0.00}\) & \(0.89{\scriptstyle \pm0.01}\) & \(11.00{\scriptstyle \pm0.00}\) & \(\mathbf{0.58}{\scriptstyle \pm0.02}\) \\
          MACE & \(6.95\pm\scriptstyle{0.00}\) & \(2.06\pm\scriptstyle{0.00}\) & \(17.99\pm\scriptstyle{0.26}\) & \(0.72\pm\scriptstyle{0.00}\) & \(1.05\pm\scriptstyle{0.00}\) & \(6.44\pm\scriptstyle{0.00}\) & \(0.75\pm\scriptstyle{0.00}\) \\  
        \textbf{\gls{atoms}} & \(\mathbf{6.82}{\scriptstyle \pm0.06}\) & \(\mathbf{3.52}{\scriptstyle \pm0.04}\) & \(14.72{\scriptstyle \pm0.01}\) & \(0.50{\scriptstyle \pm0.00}\) & \(\mathbf{0.88}{\scriptstyle \pm0.01}\) & \(\mathbf{4.66}{\scriptstyle \pm0.21}\) & \(0.63{\scriptstyle \pm0.00}\) \\
        \midrule
        EGNN-R & \(7.35{\scriptstyle \pm0.19}\) & \(3.21{\scriptstyle \pm0.00}\) & \(\mathbf{10.75}{\scriptstyle \pm0.04}\) & \(\mathbf{0.34}{\scriptstyle \pm0.06}\) & \(1.09{\scriptstyle \pm0.12}\) & \(4.53{\scriptstyle \pm0.08}\) & \(0.89{\scriptstyle \pm0.02}\) \\
        EGNN-S & \(9.01{\scriptstyle \pm0.34}\) & \(3.21{\scriptstyle \pm0.00}\) & \(11.20{\scriptstyle \pm0.03}\) & \(0.42{\scriptstyle \pm0.01}\) & \(1.41{\scriptstyle \pm0.00}\) & \(4.86{\scriptstyle \pm0.04}\) & \(0.65{\scriptstyle \pm0.01}\) \\
        \gls{egno} & \(9.64{\scriptstyle \pm0.15}\) & \(4.57{\scriptstyle \pm0.01}\) & \(12.92{\scriptstyle \pm0.00}\) & \(0.39{\scriptstyle \pm0.00}\) & \(0.90{\scriptstyle \pm0.01}\) & \(10.99{\scriptstyle \pm0.00}\) & \(\mathbf{0.58}{\scriptstyle \pm0.02}\) \\
         MACE & \(5.06\pm\scriptstyle{0.00}\) & \(2.84\pm\scriptstyle{0.00}\) & \(16.09\pm\scriptstyle{0.03}\) & \(0.57\pm\scriptstyle{0.00}\) & \(0.55\pm\scriptstyle{0.00}\) & \(3.26\pm\scriptstyle{0.00}\) & \(1.08\pm\scriptstyle{0.00}\) \\
        \textbf{\gls{atoms}} & \(\mathbf{5.62}{\scriptstyle \pm0.05}\) & \(\mathbf{2.62}{\scriptstyle \pm0.04}\) & \(12.49{\scriptstyle \pm0.01}\) & \(0.43{\scriptstyle \pm0.00}\) & \(\mathbf{0.86}{\scriptstyle \pm0.01}\) & \(\mathbf{2.27}{\scriptstyle \pm0.10}\) & \(0.61{\scriptstyle \pm0.00}\) \\
        \bottomrule
    \end{tabular}
}
\end{table}



\subsection{Multitask Learning on TG80}
\label{sec:multitask_learning}

We pretrain \gls{atoms} on TG80, scaling to six attention blocks with a hidden size of 256. We select stochastic horizons \(\Delta T \sim \operatorname{LogUnif}(8 \text{ fs}, 24\,000 \text{ fs})\) and use a five-fold, cluster-based cross-validation. Specifically, we compute ECFP-4 fingerprints \cite{rogers_extended-connectivity_2010}, embed them using UMAP \cite{mcinnes_umap_2018}, and apply agglomerative clustering \cite{ward_hierarchical_1963} to partition compounds into ten disjoint clusters. The folds are then formed by holding out clusters, ensuring that the train/validation/test sets occupy distinct regions of chemical space. This cluster-wise protocol minimizes leakage and more closely reflects the prospective scientific setting in which models must generalize to unseen molecules. Cluster-based approaches present more challenging generalization problems than random splits or common chemical-scaffold-based splits \cite{guo_scaffold_2024}. In Appendix \ref{sect:random_split}, we also consider pretraining on a standard random split of molecules. 

Table \ref{tab:tg80_umap} benchmarks \gls{atoms} by assessing both \textit{in-distribution} (ID) and \textit{out-of-domain} (OOD) S2T performance. For the \textit{in-distribution} setting, we train, validate, and test on molecules from the same cluster. We observe that \gls{atoms} outperforms existing baselines by an average of 83.96\% in terms of S2T \gls{mse}. We then assess \textit{out-of-domain} (OOD) generalization performance by predicting the dynamics of unseen compounds drawn from disjoint clusters. Under OOD settings, \gls{atoms} nearly halves the S2T \gls{mse} of \gls{egno}, with an average improvement of $39.74\%$ across five cluster splits. Notably, OOD \gls{atoms} beats ID EGNO performance in four of five folds. This striking zero-shot generalization, realized without any exposure to the test molecules, confirms that \gls{atoms} uniquely learns robust, transferable knowledge of molecular dynamics. In Appendix \ref{app:s2s_result}, we show similar outperformance in S2S prediction. In Appendix \ref{sec:model_compute_time}, we show that the significantly improved multitask performance comes with a modest overhead in training time and in inference latency.

\begin{figure}[t]
  \centering
  \begin{minipage}[t]{0.41\textwidth}
    \centering
    \captionof{table}{{Single-task \gls{mse} (\(\times 10^{-2}\)) on MD22. Upper: S2S. Lower: S2T}}
    \label{tab:md22_results}
    \vspace{-0.8em}
    \resizebox{\linewidth}{!}{%
    \begin{tabular}{lccc}
        \toprule
         & \textbf{Ac-Ala3-NHME} & \textbf{DHA} & \textbf{Stachyose} \\
        \midrule
        \gls{egno} & \(357.89{\scriptstyle \pm3.94}\) & \(178.39{\scriptstyle \pm4.91}\) & \(42.11{\scriptstyle \pm0.10}\) \\
          \gls{atoms}-\gls{gatv2} & \(223.57{\scriptstyle \pm0.66}\) & \(16.72{\scriptstyle \pm0.44}\) & \(41.40{\scriptstyle \pm0.37}\) \\
        \textbf{\gls{atoms}} & \(\mathbf{9.65}{\scriptstyle \pm0.75}\) & \(\mathbf{10.60}{\scriptstyle \pm1.11}\) & \(\mathbf{21.25}{\scriptstyle \pm4.20}\) \\
        \midrule
        \rowcolor{gray!20} Gap & \(+97.30\%\) & \(+94.06\%\) & \(+49.54\%\) \\
        \midrule
        \gls{egno} & \(232.40{\scriptstyle \pm6.75}\) & \(116.45{\scriptstyle \pm3.34}\) & \(30.84{\scriptstyle \pm0.03}\) \\
        \gls{atoms}-\gls{gatv2} & \(113.26{\scriptstyle \pm0.04}\) & \(14.39{\scriptstyle \pm0.32}\) & \(29.70{\scriptstyle \pm0.15}\) \\
        \textbf{\gls{atoms}} & \(\mathbf{7.55}{\scriptstyle \pm0.42}\) & \(\mathbf{9.66}{\scriptstyle \pm1.16}\) & \(\mathbf{18.13}{\scriptstyle \pm3.78}\) \\
        \midrule
        \rowcolor{gray!20} Gap & \(+96.75\%\) & \(+91.70\%\) & \(+41.22\%\) \\
        \bottomrule
    \end{tabular}
        }
  \end{minipage}
  \hfill
    \begin{minipage}[t]{0.57\textwidth}
        \centering
          \captionof{table}{Multitask S2T \gls{mse} \((\times10^{-2})\) on TG80 across five UMAP cluster assignments.}
          \label{tab:tg80_umap}
          \vspace{-0.8em}
          \resizebox{\linewidth}{!}{%
          \begin{tabular}{llccccc}
            \toprule
             &  & \textbf{Cluster 1} & \textbf{Cluster 2} & \textbf{Cluster 3} & \textbf{Cluster 4} & \textbf{Cluster 5} \\
            \midrule
            \multirow{3}{*}{ID}
              & \gls{egno}  & \(44.23{\scriptstyle \pm0.68}\) & \(95.52{\scriptstyle \pm0.73}\) & \(141.16{\scriptstyle \pm0.21}\) & \(150.92{\scriptstyle \pm0.11}\) & \(107.47{\scriptstyle \pm0.36}\) \\
              & \textbf{\gls{atoms}} & \(\mathbf{9.71}{\scriptstyle \pm0.75}\) & \(\mathbf{18.26}{\scriptstyle \pm1.58}\) & \(\mathbf{16.82}{\scriptstyle \pm1.46}\) & \(\mathbf{16.93}{\scriptstyle \pm3.65}\) & \(\mathbf{17.20}{\scriptstyle \pm0.46}\) \\ \cmidrule(lr){2-7}
              \rowcolor{gray!20} & Gap & \(78.04\%\) & \(80.89\%\) & \(88.09\%\) & \(88.78\%\) & \(83.99\%\) \\
              \midrule
              \multirow{6}{*}{\cellcolor{white} OOD}
               & MACE & \(134.26\) & \(224.12\) & \(325.97\) & \(316.26\) & \(229.64\) \\
              & \gls{egno}  & \(45.95{\scriptstyle \pm0.80}\) & \(115.43{\scriptstyle \pm13.23}\) & \(151.74{\scriptstyle \pm0.57}\) & \(163.90{\scriptstyle \pm0.69}\) & \(113.68{\scriptstyle \pm2.50}\) \\
              & EGNN\text{-}S & \(45.44{\scriptstyle \pm0.57}\) & \(7386.15{\scriptstyle \pm6931.89}\) & \(152.72{\scriptstyle \pm0.83}\) & \(464.22{\scriptstyle \pm509.48}\) & \(114.30{\scriptstyle \pm0.79}\) \\
              & EGNN\text{-}R & \(44.88{\scriptstyle \pm0.68}\) & \(109.62{\scriptstyle \pm1.92}\) & \(148.05{\scriptstyle \pm0.70}\) & \(161.54{\scriptstyle \pm0.68}\) & \(110.10{\scriptstyle \pm0.96}\) \\
              & \textbf{\gls{atoms}} & \(\mathbf{35.05}{\scriptstyle \pm0.97}\) & \(\mathbf{106.99}{\scriptstyle \pm104.64}\) & \(\mathbf{60.95}{\scriptstyle \pm4.86}\) & \(\mathbf{66.68}{\scriptstyle \pm0.96}\) & \(\mathbf{47.49}{\scriptstyle \pm1.59}\) \\ 
              \cmidrule(lr){2-7}
             \rowcolor{gray!20} & Gap & \(21.93\%\) & \(2.40\%\) & \(58.83\%\) & \(58.71\%\) & \(56.88\%\) \\
            \bottomrule
          \end{tabular}
          }

    \end{minipage}
\end{figure}

\subsection{Temporal Gap and Timestep Invariance Properties}
\textbf{\(\Delta T\) Invariance.}
We evaluate the performance of {pretrained} \gls{atoms} {(at fixed $\Delta T =3000$)} with varying $\Delta T$ {at inference}. We compare \gls{atoms}, \gls{egno}, and \gls{egnn} on S2T \gls{mse} by fixing \(P=8\) and sweeping \(\Delta T\) logarithmically from {10} to 10\,000 {fs} on an in-distribution (Cluster 1) multitask model. In \Cref{fig:delta_t_invariance}, we show that \gls{atoms} maintains its extrapolation advantage across the range compared to \gls{egno}, particularly at larger $\Delta T$. Ablating T-RoPE (NoPE) removes this advantage by exhibiting an EGNO-like error trend with substantially higher MSE. This 
underscores T-RoPE's role in stable time-gap extrapolation.



\textbf{\(P\) Invariance.} 
Following the discretization invariance in neural operators, we expect \gls{atoms} and \gls{egno} models to show consistent \gls{mse} as \(P\) varies under uniform discretization \cite{kovachki_neural_2021}.  \Cref{fig:p_invariance} confirms such a conjecture by showing that multitask \gls{atoms} pretrained at \(P=8\) maintain constant S2T \gls{mse} as \(P\) ranges from 4 to 24 {at inference}.


\begin{figure}[H]
\centering
\hfill
\begin{minipage}[t]{0.45\linewidth}
    \centering
    \includegraphics[width=\linewidth]{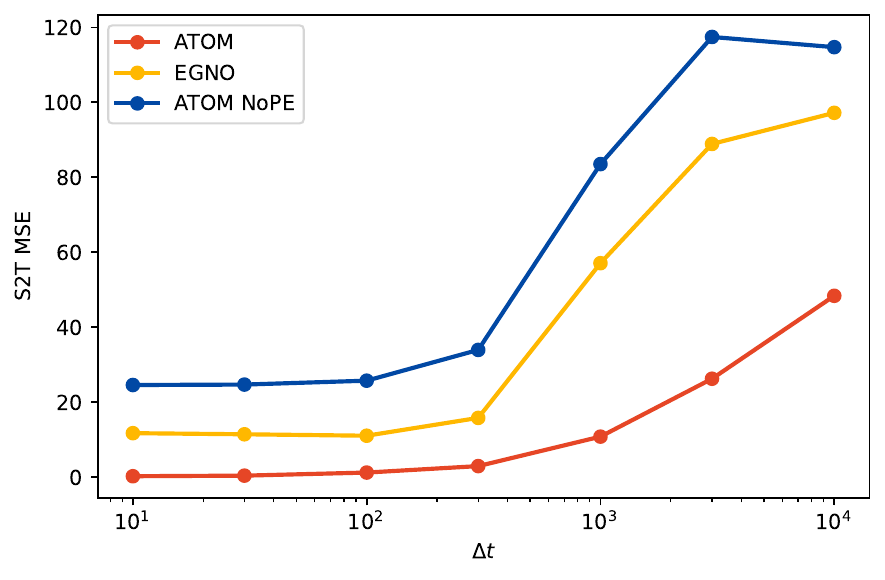}
    \caption{Pretrained ({\(\Delta T = 3000\)}, ID) multitask S2T \gls{mse} across varying \(\Delta T\) values.}
    \label{fig:delta_t_invariance}
\end{minipage}
\hfill
\begin{minipage}[t]{0.45\linewidth}
    \centering
    \includegraphics[width=\linewidth]{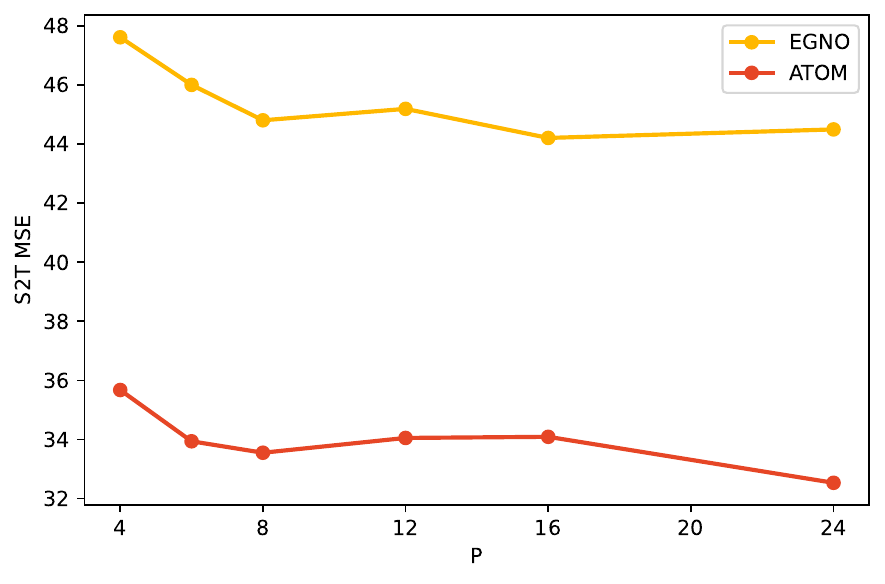}
    \caption{{Pretrained (\(P=8\))} \gls{atoms} and \gls{egno} are discretization invariant, showing stable S2T \gls{mse}.}
    \label{fig:p_invariance}
\end{minipage}
\hfill
\label{fig:invariances}
\end{figure}

\subsection{Ablation studies}
\label{sec:ablations}

We perform extensive ablations to assess each design choice in \gls{atoms}. For single-task performance (Fig.~\ref{fig:MD17_ablation}) and multitask performance (Fig.~\ref{fig:TG80_ablation}), we independently toggle components and measure their contributions. Our analysis focuses on equivariant lifting, T-RoPE, label-noise regularization, heterogeneous attention, and random-walk positional encoding (under multitask pretraining).

\FPeval{\NoEquivariantLiftingvsDefault}{round(29.40 - 6.92, 2)}
\FPeval{\FullyEquivariantvsDefault}{round(10.25 - 6.77, 2)}
\FPeval{\AtomRotationPenalty}{round(73.04 / 6.76, 2)}
\FPeval{\NoLiftRotationPenalty}{round(660.97 / 33.44, 2)}

\begin{wrapfigure}{r}{0.48\textwidth}
    \centering
    \setlength{\extrarowheight}{1pt}   
    \vspace{-10pt}
    \captionof{table}{S2T MSE (\(\times 10^{-2}\)) of a fixed input frame rotated and unrotated by an \(\mathrm{SO}(3)\) matrix.}
    \resizebox{\linewidth}{!}{%
    \begin{tabular}{l cc}
        \toprule
        & ATOM & No equivariant Lift \\
        \midrule
        Unrotated & \(6.76{\scriptstyle \pm0.69}\) & \(33.44{\scriptstyle \pm23.42}\) \\
        Rotated & \(73.04{\scriptstyle \pm27.01}\) & \(660.97{\scriptstyle \pm945.86}\) \\
        \midrule
        Increase & \(66.28{\scriptstyle \pm26.32}\) & \(627.53{\scriptstyle \pm926.53}\) \\
        \rowcolor{gray!20} Rotation penalty (\(\times\)) & \(\AtomRotationPenalty\) & \(\NoLiftRotationPenalty\) \\
        \bottomrule
    \end{tabular}
    }
    \label{tab:equivariance_error}
\end{wrapfigure}

\textbf{Equivariant lifting.} We assess the quasi-equivariant design against a non-equivariant ATOM. As shown in \Cref{fig:MD17_ablation}, replacing the equivariant lifting introduced in \Cref{sec:equivariant_lifting} with standard linear layers (no equivariant lifting) markedly degrades the performance of \gls{atoms}, increasing S2T \gls{mse} by \NoEquivariantLiftingvsDefault{}. \Cref{tab:equivariance_error} further quantifies sensitivity to \(\mathrm{SO}(3)\) rotations: ATOM's S2T \gls{mse} increases by \(\AtomRotationPenalty\times\) under rotation, compared to \(\NoLiftRotationPenalty\times\) without equivariant lifting. Notably, the fully-equivariant variant of \gls{atoms}, described in appendix \Cref{sec:fully_equivariant}, also underperforms \gls{atoms} in both single-task (\Cref{fig:MD17_ablation}) and multitask (\Cref{fig:TG80_ablation}) settings, with the gap exaggerated in the multitask setting. This aligns with recent findings on relaxed equivariance, suggesting that strict equivariance can limit model capacity and complicate the optimization process \cite{elhag_relaxed_2025}. We present estimates of the quasi-equivariance \(\varepsilon\) in \Cref{app:quasi_equivariance_mc}.

\FPeval{\CrossAttvsStdAtt}{round(7.39 - 6.92, 2)}
\textbf{Heterogeneous attention.} We find that substituting heterogeneous temporal attention with standard self-attention on the phase space features increases S2S MSE by \CrossAttvsStdAtt{}, suggesting that cross-attention enables access to non-trivial feature interactions.


\FPeval{\NoPEvsDefault}{round(5.262 - 4.197, 2)}
\textbf{\gls{trope}.} 
In the single-task regime with fixed $\Delta T$ (\Cref{fig:MD17_ablation}), \gls{trope} contributes little to the performance of \gls{atoms}, as it effectively reduces to a constant rotational shift. By contrast, with stochastic $\Delta T$,  disabling \gls{trope} (NoPE) increases \gls{mse} by \NoPEvsDefault{}, consistent with \gls{atoms} leveraging the \(\tau\) parameter to encode  variable time gaps (\Cref{fig:TG80_ablation}).  An \gls{egno}-style sinusoidal positional encoding produces a similar performance degradation.


\textbf{Label noise regularization.}
\FPeval{\NoNoisevsDefault}{round(8.13 - 6.92, 2)}
We also test the utility of label noise regularization as in \Cref{sec:label_noise}. From \Cref{fig:MD17_ablation}, we observe that removing augmented noise from the position and velocity features increased S2T \gls{mse} by \NoNoisevsDefault{}. For the multitask ablation on TG80, we suppress label noise regularization, as the dataset is designed to be numerically stable with small noise.

\textbf{RWPE.} We assess random-walk positional encoding (RWPE) in the multitask pretraining. Figure~\ref{fig:TG80_ablation} indicates that RWPE facilitates molecule identification, yielding improved multitask performance.


\begin{figure}[t]
\centering
\begin{minipage}[t]{0.46\linewidth}
    \centering
    \includegraphics[width=\linewidth]{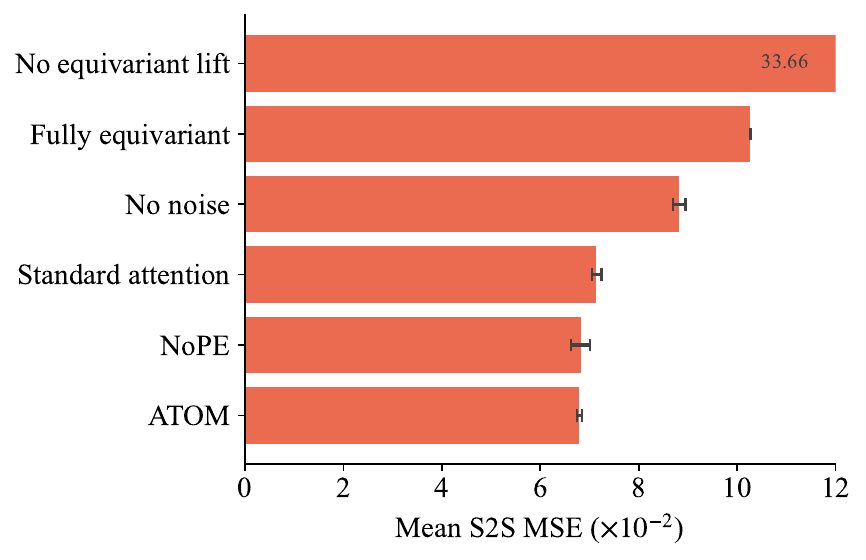}
    \caption{\gls{atoms} ablation on MD17 Aspirin.}
    \label{fig:MD17_ablation}
\end{minipage}
\hfill
\begin{minipage}[t]{0.46\linewidth}
    \centering
    \includegraphics[width=\linewidth]{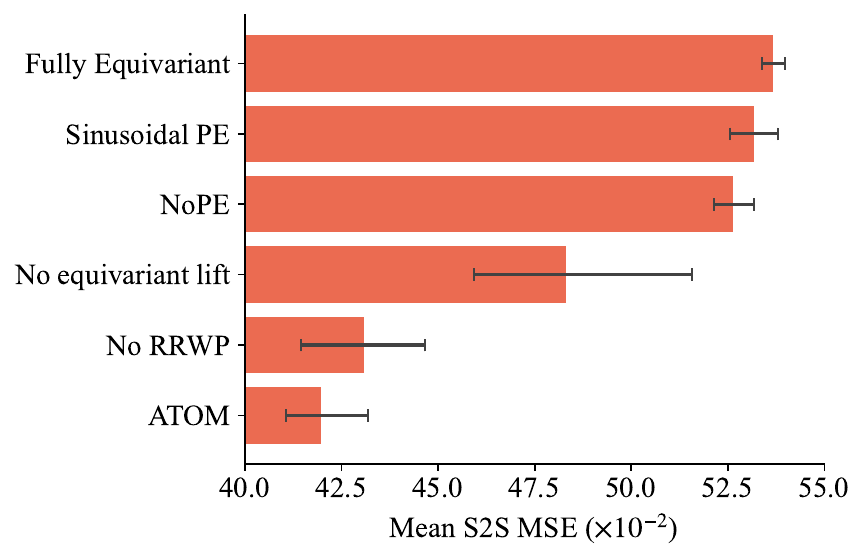}
    \caption{\gls{atoms} ablation on TG80 Cluster 1.}
    \label{fig:TG80_ablation}
\end{minipage}
\label{fig:ablations}
\end{figure}

\section{Conclusions}

In this work, we demonstrate that carefully designed transformer neural operators enable zero-shot generalization to unseen chemical dynamics. Our experiments on MD17 demonstrate continued good single-task performance, and we present the first molecular neural operator that can successfully learn large molecule dynamics using MD22. Our multitask experiments show that our method learns transferable dynamics knowledge, even without explicit graph representations. In combination with our TG80 dataset, we provide a large-scale open-source benchmark and baselines to evaluate future models and spur further operator research with concrete scientific applicability.

\paragraph{Limitations} We remark that TG80 does not contain trajectories for large molecules with more than 15 heavy atoms, despite their obvious chemical and pharmacological relevance. In follow-up work, we intend to enrich TG80 with such molecules, calculated with a higher-resolution DFT basis set, \(\omega\)B97X-3c \cite{muller_omegab97x-3c_2023}. Regarding \gls{atoms}, it lacks an explicit energy-based inductive bias, which may permit long-horizon drift. A natural extension is therefore a framewise energy head \(E_\theta(\mathbf{x}_{t_p})\) with force supervision \(\mathbf{F}_{t_p}=-\nabla_{\mathbf{x}_{t_p}}E_\theta(\mathbf{x}_{t_p})\). {This energy term also defines the drift in the Langevin dynamics, where an additional stochastic term accounts for thermal perturbations of atomic positions. Incorporating such physics-informed stochastic dynamics into our operator learning framework is a natural next step, and we view this as a promising direction for future \gls{md} research.}




\section*{Reproducibility Statement}
We provide experiment details, such as choice of hyperparameters and other training configurations in Appendix \ref{sect:details}. In addition, we will release the TG80 dataset upon acceptance under MIT license for reproducibility.
\newpage

\bibliographystyle{ICLR2026/iclr2026_conference}
\bibliography{library}

@article{paszke_pytorch_2019,
	title = {{PyTorch}: {An} {Imperative} {Style}, {High}-{Performance} {Deep} {Learning} {Library}},
	url = {https://ui.adsabs.harvard.edu/abs/2019arXiv191201703P},
	doi = {10.48550/arXiv.1912.01703},
	abstract = {Deep learning frameworks have often focused on either usability or speed, but not both. PyTorch is a machine learning library that shows that these two goals are in fact compatible: it provides an imperative and Pythonic programming style that supports code as a model, makes debugging easy and is consistent with other popular scientific computing libraries, while remaining efficient and supporting hardware accelerators such as GPUs. In this paper, we detail the principles that drove the implementation of PyTorch and how they are reflected in its architecture. We emphasize that every aspect of PyTorch is a regular Python program under the full control of its user. We also explain how the careful and pragmatic implementation of the key components of its runtime enables them to work together to achieve compelling performance. We demonstrate the efficiency of individual subsystems, as well as the overall speed of PyTorch on several common benchmarks.},
	journal = {arXiv e-prints},
	author = {Paszke, Adam and Gross, Sam and Massa, Francisco and Lerer, Adam and Bradbury, James and Chanan, Gregory and Killeen, Trevor and Lin, Zeming and Gimelshein, Natalia and Antiga, Luca and Desmaison, Alban and Köpf, Andreas and Yang, Edward and DeVito, Zach and Raison, Martin and Tejani, Alykhan and Chilamkurthy, Sasank and Steiner, Benoit and Fang, Lu and Bai, Junjie and Chintala, Soumith},
	month = dec,
	year = {2019},
	keywords = {Statistics - Machine Learning, Computer Science - Machine Learning, Computer Science - Mathematical Software},
	pages = {arXiv:1912.01703},
}

@article{duval_hitchhikers_2023,
	title = {A {Hitchhiker}'s {Guide} to {Geometric} {GNNs} for {3D} {Atomic} {Systems}},
	doi = {10.48550/arXiv.2312.07511},
	abstract = {Recent advances in computational modelling of atomic systems, spanning molecules, proteins, and materials, represent them as geometric graphs with atoms embedded as nodes in 3D Euclidean space. In these graphs, the geometric attributes transform according to the inherent physical symmetries of 3D atomic systems, including rotations and translations in Euclidean space, as well as node permutations. In recent years, Geometric Graph Neural Networks have emerged as the preferred machine learning architecture powering applications ranging from protein structure prediction to molecular simulations and material generation. Their specificity lies in the inductive biases they leverage -- such as physical symmetries and chemical properties -- to learn informative representations of these geometric graphs. In this opinionated paper, we provide a comprehensive and self-contained overview of the field of Geometric GNNs for 3D atomic systems. We cover fundamental background material and introduce a pedagogical taxonomy of Geometric GNN architectures:(1) invariant networks, (2) equivariant networks in Cartesian basis, (3) equivariant networks in spherical basis, and (4) unconstrained networks. Additionally, we outline key datasets and application areas and suggest future research directions. The objective of this work is to present a structured perspective on the field, making it accessible to newcomers and aiding practitioners in gaining an intuition for its mathematical abstractions.},
	author = {Duval, Alexandre and Mathis, Simon V. and Joshi, Chaitanya K. and Schmidt, Victor and Miret, Santiago and Malliaros, Fragkiskos D. and Cohen, Taco and Lio, Pietro and Bengio, Yoshua and Bronstein, Michael},
	month = dec,
	year = {2023},
	keywords = {Computer Science - Machine Learning, Computer Science - Artificial, Intelligence, Machine Learning, Quantitative Biology - Quantitative Methods, Statistics -},
	pages = {arXiv:2312.07511},
	annote = {Hitchhiker's},
	annote = {Hitchhiker's},
}

@article{gasteiger_gemnet_2021,
	title = {{GemNet}: {Universal} {Directional} {Graph} {Neural} {Networks} for {Molecules}},
	doi = {10.48550/arXiv.2106.08903},
	abstract = {Effectively predicting molecular interactions has the potential to accelerate molecular dynamics by multiple orders of magnitude and thus revolutionize chemical simulations. Graph neural networks (GNNs) have recently shown great successes for this task, overtaking classical methods based on fixed molecular kernels. However, they still appear very limited from a theoretical perspective, since regular GNNs cannot distinguish certain types of graphs. In this work we close this gap between theory and practice. We show that GNNs with directed edge embeddings and two-hop message passing are indeed universal approximators for predictions that are invariant to translation, and equivariant to permutation and rotation. We then leverage these insights and multiple structural improvements to propose the geometric message passing neural network (GemNet). We demonstrate the benefits of the proposed changes in multiple ablation studies. GemNet outperforms previous models on the COLL, MD17, and OC20 datasets by 34\%, 41\%, and 20\%, respectively, and performs especially well on the most challenging molecules. Our implementation is available online.},
	author = {Gasteiger, Johannes and Becker, Florian and Günnemann, Stephan},
	month = jun,
	year = {2021},
	keywords = {Statistics - Machine Learning, Computer Science - Machine Learning, Physics - Chemical Physics, Physics - Computational Physics},
	pages = {arXiv:2106.08903},
	annote = {Archetypal invariant GNN featured in Hitchhiker's guide guide to GNNsPublished as a conference paper at NeurIPS 2021. Author name changed from Johannes Klicpera to Johannes Gasteiger},
	annote = {Archetypal invariant GNN featured in Hitchhiker's guide guide to GNNsPublished as a conference paper at NeurIPS 2021. Author name changed from Johannes Klicpera to Johannes Gasteiger},
}

@article{rogers_extended-connectivity_2010,
	title = {Extended-{Connectivity} {Fingerprints}},
	volume = {50},
	issn = {1549-9596},
	doi = {10.1021/ci100050t},
	number = {5},
	journal = {Journal of chemical information and modeling},
	author = {Rogers, David and Hahn, Mathew},
	month = may,
	year = {2010},
	pages = {742--754},
	annote = {ECFP},
	annote = {ECFP},
	annote = {The following values have no corresponding Zotero field:publisher: American Chemical Society},
	annote = {The following values have no corresponding Zotero field:publisher: American Chemical Society},
}

@article{rogers_computer_1960,
	title = {A {Computer} {Program} for {Classifying} {Plants}},
	volume = {132},
	issn = {0036-8075},
	doi = {10.1126/science.132.3434.1115},
	journal = {Science},
	author = {Rogers, David J. and Tanimoto, Taffee T.},
	month = oct,
	year = {1960},
	pages = {1115--1118},
}

@article{ward_hierarchical_1963,
	title = {Hierarchical {Grouping} to {Optimize} an {Objective} {Function}},
	volume = {58},
	url = {https://www.tandfonline.com/doi/abs/10.1080/01621459.1963.10500845},
	doi = {10.1080/01621459.1963.10500845},
	number = {301},
	journal = {Journal of the American Statistical Association},
	author = {Ward, Joe H.},
	year = {1963},
	note = {Publisher: Taylor \& Francis
\_eprint: https://www.tandfonline.com/doi/pdf/10.1080/01621459.1963.10500845},
	pages = {236--244},
}

@article{mcinnes_umap_2018,
	title = {{UMAP}: {Uniform} {Manifold} {Approximation} and {Projection} for {Dimension} {Reduction}},
	doi = {10.48550/arXiv.1802.03426},
	journal = {arXiv e-prints},
	author = {McInnes, Leland and Healy, John and Melville, James},
	month = feb,
	year = {2018},
	note = {\_eprint: 1802.03426},
	keywords = {Statistics - Machine Learning, Computer Science - Machine Learning, Computer Science - Computational Geometry},
	pages = {arXiv:1802.03426},
}

@article{loshchilov_decoupled_2017,
	title = {Decoupled {Weight} {Decay} {Regularization}},
	doi = {10.48550/arXiv.1711.05101},
	journal = {arXiv e-prints},
	author = {Loshchilov, Ilya and Hutter, Frank},
	month = nov,
	year = {2017},
	note = {\_eprint: 1711.05101},
	keywords = {Computer Science - Neural and Evolutionary Computing, Computer Science - Machine Learning, Mathematics - Optimization and Control},
	pages = {arXiv:1711.05101},
}

@misc{xu_equivariant_2024,
	title = {Equivariant {Graph} {Neural} {Operator} for {Modeling} {3D} {Dynamics}},
	url = {http://arxiv.org/abs/2401.11037},
	doi = {10.48550/arXiv.2401.11037},
	abstract = {Modeling the complex three-dimensional (3D) dynamics of relational systems is an important problem in the natural sciences, with applications ranging from molecular simulations to particle mechanics. Machine learning methods have achieved good success by learning graph neural networks to model spatial interactions. However, these approaches do not faithfully capture temporal correlations since they only model next-step predictions. In this work, we propose Equivariant Graph Neural Operator (EGNO), a novel and principled method that directly models dynamics as trajectories instead of just next-step prediction. Different from existing methods, EGNO explicitly learns the temporal evolution of 3D dynamics where we formulate the dynamics as a function over time and learn neural operators to approximate it. To capture the temporal correlations while keeping the intrinsic SE(3)-equivariance, we develop equivariant temporal convolutions parameterized in the Fourier space and build EGNO by stacking the Fourier layers over equivariant networks. EGNO is the first operator learning framework that is capable of modeling solution dynamics functions over time while retaining 3D equivariance. Comprehensive experiments in multiple domains, including particle simulations, human motion capture, and molecular dynamics, demonstrate the significantly superior performance of EGNO against existing methods, thanks to the equivariant temporal modeling. Our code is available at https://github.com/MinkaiXu/egno.},
	urldate = {2024-11-23},
	publisher = {arXiv},
	author = {Xu, Minkai and Han, Jiaqi and Lou, Aaron and Kossaifi, Jean and Ramanathan, Arvind and Azizzadenesheli, Kamyar and Leskovec, Jure and Ermon, Stefano and Anandkumar, Anima},
	month = jun,
	year = {2024},
	note = {arXiv:2401.11037},
	keywords = {Computer Science - Machine Learning, Quantitative Biology - Quantitative Methods, Computer Science - Numerical Analysis, Mathematics - Numerical Analysis},
	file = {Preprint PDF:C\:\\Users\\luke-\\Zotero\\storage\\9GQUVDWI\\Xu et al. - 2024 - Equivariant Graph Neural Operator for Modeling 3D Dynamics.pdf:application/pdf;Snapshot:C\:\\Users\\luke-\\Zotero\\storage\\B8E7BT5K\\2401.html:text/html},
}

@misc{li_fourier_2021,
	title = {Fourier {Neural} {Operator} for {Parametric} {Partial} {Differential} {Equations}},
	url = {http://arxiv.org/abs/2010.08895},
	doi = {10.48550/arXiv.2010.08895},
	abstract = {The classical development of neural networks has primarily focused on learning mappings between finite-dimensional Euclidean spaces. Recently, this has been generalized to neural operators that learn mappings between function spaces. For partial differential equations (PDEs), neural operators directly learn the mapping from any functional parametric dependence to the solution. Thus, they learn an entire family of PDEs, in contrast to classical methods which solve one instance of the equation. In this work, we formulate a new neural operator by parameterizing the integral kernel directly in Fourier space, allowing for an expressive and efficient architecture. We perform experiments on Burgers' equation, Darcy flow, and Navier-Stokes equation. The Fourier neural operator is the first ML-based method to successfully model turbulent flows with zero-shot super-resolution. It is up to three orders of magnitude faster compared to traditional PDE solvers. Additionally, it achieves superior accuracy compared to previous learning-based solvers under fixed resolution.},
	urldate = {2024-11-23},
	publisher = {arXiv},
	author = {Li, Zongyi and Kovachki, Nikola and Azizzadenesheli, Kamyar and Liu, Burigede and Bhattacharya, Kaushik and Stuart, Andrew and Anandkumar, Anima},
	month = may,
	year = {2021},
	note = {arXiv:2010.08895},
	keywords = {Computer Science - Machine Learning, Computer Science - Numerical Analysis, Mathematics - Numerical Analysis},
	file = {Preprint PDF:C\:\\Users\\luke-\\Zotero\\storage\\6ZSDSTEI\\Li et al. - 2021 - Fourier Neural Operator for Parametric Partial Differential Equations.pdf:application/pdf;Snapshot:C\:\\Users\\luke-\\Zotero\\storage\\IE3PIR9F\\2010.html:text/html},
}

@misc{hao_gnot_2023,
	title = {{GNOT}: {A} {General} {Neural} {Operator} {Transformer} for {Operator} {Learning}},
	shorttitle = {{GNOT}},
	url = {http://arxiv.org/abs/2302.14376},
	doi = {10.48550/arXiv.2302.14376},
	abstract = {Learning partial differential equations' (PDEs) solution operators is an essential problem in machine learning. However, there are several challenges for learning operators in practical applications like the irregular mesh, multiple input functions, and complexity of the PDEs' solution. To address these challenges, we propose a general neural operator transformer (GNOT), a scalable and effective transformer-based framework for learning operators. By designing a novel heterogeneous normalized attention layer, our model is highly flexible to handle multiple input functions and irregular meshes. Besides, we introduce a geometric gating mechanism which could be viewed as a soft domain decomposition to solve the multi-scale problems. The large model capacity of the transformer architecture grants our model the possibility to scale to large datasets and practical problems. We conduct extensive experiments on multiple challenging datasets from different domains and achieve a remarkable improvement compared with alternative methods. Our code and data are publicly available at {\textbackslash}url\{https://github.com/thu-ml/GNOT\}.},
	urldate = {2024-11-24},
	publisher = {arXiv},
	author = {Hao, Zhongkai and Wang, Zhengyi and Su, Hang and Ying, Chengyang and Dong, Yinpeng and Liu, Songming and Cheng, Ze and Song, Jian and Zhu, Jun},
	month = jun,
	year = {2023},
	note = {arXiv:2302.14376},
	keywords = {Computer Science - Machine Learning, Physics - Computational Physics, Computer Science - Numerical Analysis, Mathematics - Numerical Analysis},
	file = {Preprint PDF:C\:\\Users\\luke-\\Zotero\\storage\\HF4LTYL4\\Hao et al. - 2023 - GNOT A General Neural Operator Transformer for Operator Learning.pdf:application/pdf;Snapshot:C\:\\Users\\luke-\\Zotero\\storage\\AT35HJB8\\2302.html:text/html},
}

@misc{ma_graph_2023,
	title = {Graph {Inductive} {Biases} in {Transformers} without {Message} {Passing}},
	url = {http://arxiv.org/abs/2305.17589},
	doi = {10.48550/arXiv.2305.17589},
	abstract = {Transformers for graph data are increasingly widely studied and successful in numerous learning tasks. Graph inductive biases are crucial for Graph Transformers, and previous works incorporate them using message-passing modules and/or positional encodings. However, Graph Transformers that use message-passing inherit known issues of message-passing, and differ significantly from Transformers used in other domains, thus making transfer of research advances more difficult. On the other hand, Graph Transformers without message-passing often perform poorly on smaller datasets, where inductive biases are more crucial. To bridge this gap, we propose the Graph Inductive bias Transformer (GRIT) -- a new Graph Transformer that incorporates graph inductive biases without using message passing. GRIT is based on several architectural changes that are each theoretically and empirically justified, including: learned relative positional encodings initialized with random walk probabilities, a flexible attention mechanism that updates node and node-pair representations, and injection of degree information in each layer. We prove that GRIT is expressive -- it can express shortest path distances and various graph propagation matrices. GRIT achieves state-of-the-art empirical performance across a variety of graph datasets, thus showing the power that Graph Transformers without message-passing can deliver.},
	urldate = {2024-12-09},
	publisher = {arXiv},
	author = {Ma, Liheng and Lin, Chen and Lim, Derek and Romero-Soriano, Adriana and Dokania, Puneet K. and Coates, Mark and Torr, Philip and Lim, Ser-Nam},
	month = may,
	year = {2023},
	note = {arXiv:2305.17589 [cs]},
	keywords = {Computer Science - Artificial Intelligence, Computer Science - Machine Learning},
	annote = {Comment: Published as a conference paper at ICML 2023; 17 pages},
	file = {Preprint PDF:C\:\\Users\\luke-\\Zotero\\storage\\Z36R4U58\\Ma et al. - 2023 - Graph Inductive Biases in Transformers without Message Passing.pdf:application/pdf;Snapshot:C\:\\Users\\luke-\\Zotero\\storage\\PKHUGD6J\\2305.html:text/html},
}

@misc{delattre_efficient_2023,
	title = {Efficient {Bound} of {Lipschitz} {Constant} for {Convolutional} {Layers} by {Gram} {Iteration}},
	url = {http://arxiv.org/abs/2305.16173},
	doi = {10.48550/arXiv.2305.16173},
	abstract = {Since the control of the Lipschitz constant has a great impact on the training stability, generalization, and robustness of neural networks, the estimation of this value is nowadays a real scientific challenge. In this paper we introduce a precise, fast, and differentiable upper bound for the spectral norm of convolutional layers using circulant matrix theory and a new alternative to the Power iteration. Called the Gram iteration, our approach exhibits a superlinear convergence. First, we show through a comprehensive set of experiments that our approach outperforms other state-of-the-art methods in terms of precision, computational cost, and scalability. Then, it proves highly effective for the Lipschitz regularization of convolutional neural networks, with competitive results against concurrent approaches. Code is available at https://github.com/blaisedelattre/lip4conv.},
	urldate = {2024-12-11},
	publisher = {arXiv},
	author = {Delattre, Blaise and Barthélemy, Quentin and Araujo, Alexandre and Allauzen, Alexandre},
	month = jun,
	year = {2023},
	note = {arXiv:2305.16173 [cs]},
	keywords = {Computer Science - Artificial Intelligence, Computer Science - Machine Learning},
	annote = {Comment: ICML 2023},
	annote = {Comment: ICML 2023},
	annote = {The upper bound the spectral norm with an iterative algorithm. it also converges to the spectral norm.
},
	file = {Preprint PDF:C\:\\Users\\luke-\\Zotero\\storage\\MNCJY2LH\\Delattre et al. - 2023 - Efficient Bound of Lipschitz Constant for Convolutional Layers by Gram Iteration.pdf:application/pdf;Snapshot:C\:\\Users\\luke-\\Zotero\\storage\\JVUHLGJN\\2305.html:text/html},
}

@misc{su_roformer_2023,
	title = {{RoFormer}: {Enhanced} {Transformer} with {Rotary} {Position} {Embedding}},
	shorttitle = {{RoFormer}},
	url = {http://arxiv.org/abs/2104.09864},
	doi = {10.48550/arXiv.2104.09864},
	abstract = {Position encoding recently has shown effective in the transformer architecture. It enables valuable supervision for dependency modeling between elements at different positions of the sequence. In this paper, we first investigate various methods to integrate positional information into the learning process of transformer-based language models. Then, we propose a novel method named Rotary Position Embedding(RoPE) to effectively leverage the positional information. Specifically, the proposed RoPE encodes the absolute position with a rotation matrix and meanwhile incorporates the explicit relative position dependency in self-attention formulation. Notably, RoPE enables valuable properties, including the flexibility of sequence length, decaying inter-token dependency with increasing relative distances, and the capability of equipping the linear self-attention with relative position encoding. Finally, we evaluate the enhanced transformer with rotary position embedding, also called RoFormer, on various long text classification benchmark datasets. Our experiments show that it consistently overcomes its alternatives. Furthermore, we provide a theoretical analysis to explain some experimental results. RoFormer is already integrated into Huggingface: {\textbackslash}url\{https://huggingface.co/docs/transformers/model\_doc/roformer\}.},
	urldate = {2024-12-12},
	publisher = {arXiv},
	author = {Su, Jianlin and Lu, Yu and Pan, Shengfeng and Murtadha, Ahmed and Wen, Bo and Liu, Yunfeng},
	month = nov,
	year = {2023},
	note = {arXiv:2104.09864 [cs]},
	keywords = {Computer Science - Artificial Intelligence, Computer Science - Machine Learning, Computer Science - Computation and Language},
	annote = {Comment: fixed some typos},
	file = {Preprint PDF:C\:\\Users\\luke-\\Zotero\\storage\\VWNRLI7F\\Su et al. - 2023 - RoFormer Enhanced Transformer with Rotary Position Embedding.pdf:application/pdf;Snapshot:C\:\\Users\\luke-\\Zotero\\storage\\ACLRW59K\\2104.html:text/html},
}

@misc{le_mathematical_2024,
	title = {A {Mathematical} {Analysis} of {Neural} {Operator} {Behaviors}},
	url = {http://arxiv.org/abs/2410.21481},
	doi = {10.48550/arXiv.2410.21481},
	abstract = {Neural operators have emerged as transformative tools for learning mappings between infinite-dimensional function spaces, offering useful applications in solving complex partial differential equations (PDEs). This paper presents a rigorous mathematical framework for analyzing the behaviors of neural operators, with a focus on their stability, convergence, clustering dynamics, universality, and generalization error. By proposing a list of novel theorems, we provide stability bounds in Sobolev spaces and demonstrate clustering in function space via gradient flow interpretation, guiding neural operator design and optimization. Based on these theoretical gurantees, we aim to offer clear and unified guidance in a single setting for the future design of neural operator-based methods.},
	urldate = {2024-12-12},
	publisher = {arXiv},
	author = {Le, Vu-Anh and Dik, Mehmet},
	month = oct,
	year = {2024},
	note = {arXiv:2410.21481 [math]},
	keywords = {Computer Science - Machine Learning, Computer Science - Numerical Analysis, Mathematics - Numerical Analysis},
	annote = {Comment: 24 pages},
	file = {Preprint PDF:C\:\\Users\\luke-\\Zotero\\storage\\3KPBGWNC\\Le and Dik - 2024 - A Mathematical Analysis of Neural Operator Behaviors.pdf:application/pdf;Snapshot:C\:\\Users\\luke-\\Zotero\\storage\\2WQRBN7W\\2410.html:text/html},
}

@misc{gao_learning_2024,
	title = {Learning {Equivariant} {Non}-{Local} {Electron} {Density} {Functionals}},
	url = {http://arxiv.org/abs/2410.07972},
	doi = {10.48550/arXiv.2410.07972},
	abstract = {The accuracy of density functional theory hinges on the approximation of non-local contributions to the exchange-correlation (XC) functional. To date, machine-learned and human-designed approximations suffer from insufficient accuracy, limited scalability, or dependence on costly reference data. To address these issues, we introduce Equivariant Graph Exchange Correlation (EG-XC), a novel non-local XC functional based on equivariant graph neural networks. EG-XC combines semi-local functionals with a non-local feature density parametrized by an equivariant nuclei-centered point cloud representation of the electron density to capture long-range interactions. By differentiating through a self-consistent field solver, we train EG-XC requiring only energy targets. In our empirical evaluation, we find EG-XC to accurately reconstruct `gold-standard' CCSD(T) energies on MD17. On out-of-distribution conformations of 3BPA, EG-XC reduces the relative MAE by 35\% to 50\%. Remarkably, EG-XC excels in data efficiency and molecular size extrapolation on QM9, matching force fields trained on 5 times more and larger molecules. On identical training sets, EG-XC yields on average 51\% lower MAEs.},
	urldate = {2024-12-19},
	publisher = {arXiv},
	author = {Gao, Nicholas and Eberhard, Eike and Günnemann, Stephan},
	month = oct,
	year = {2024},
	note = {arXiv:2410.07972 [cs]
version: 1},
	keywords = {Computer Science - Machine Learning, Physics - Chemical Physics, Physics - Computational Physics},
	file = {Preprint PDF:C\:\\Users\\luke-\\Zotero\\storage\\IXI5UHZM\\Gao et al. - 2024 - Learning Equivariant Non-Local Electron Density Functionals.pdf:application/pdf;Snapshot:C\:\\Users\\luke-\\Zotero\\storage\\ZM53AVVY\\2410.html:text/html},
}

@article{fuchs_se3-transformers_2020,
	title = {{SE}(3)-{Transformers}: {3D} {Roto}-{Translation} {Equivariant} {Attention} {Networks}},
	doi = {10.48550/arXiv.2006.10503},
	journal = {arXiv e-prints},
	author = {Fuchs, Fabian B. and Worrall, Daniel E. and Fischer, Volker and Welling, Max},
	month = jun,
	year = {2020},
	note = {\_eprint: 2006.10503},
	keywords = {Statistics - Machine Learning, Computer Science - Machine Learning},
	pages = {arXiv:2006.10503},
	file = {Preprint PDF:C\:\\Users\\luke-\\Zotero\\storage\\3CJDBEVV\\Fuchs et al. - 2020 - SE(3)-Transformers 3D Roto-Translation Equivariant Attention Networks.pdf:application/pdf},
}

@article{kovachki_neural_2021,
	title = {Neural {Operator}: {Learning} {Maps} {Between} {Function} {Spaces}},
	doi = {10.48550/arXiv.2108.08481},
	journal = {arXiv e-prints},
	author = {Kovachki, Nikola and Li, Zongyi and Liu, Burigede and Azizzadenesheli, Kamyar and Bhattacharya, Kaushik and Stuart, Andrew and Anandkumar, Anima},
	month = aug,
	year = {2021},
	note = {\_eprint: 2108.08481},
	keywords = {Computer Science - Machine Learning, Mathematics - Numerical Analysis},
	pages = {arXiv:2108.08481},
}

@article{garcia_satorras_en_2021,
	title = {E(n) {Equivariant} {Graph} {Neural} {Networks}},
	doi = {10.48550/arXiv.2102.09844},
	journal = {arXiv e-prints},
	author = {Garcia Satorras, Victor and Hoogeboom, Emiel and Welling, Max},
	month = feb,
	year = {2021},
	note = {\_eprint: 2102.09844},
	keywords = {Statistics - Machine Learning, Computer Science - Machine Learning},
	pages = {arXiv:2102.09844},
}

@article{kohler_equivariant_2019,
	title = {Equivariant {Flows}: sampling configurations for multi-body systems with symmetric energies},
	doi = {10.48550/arXiv.1910.00753},
	journal = {arXiv e-prints},
	author = {Köhler, Jonas and Klein, Leon and Noé, Frank},
	month = oct,
	year = {2019},
	note = {\_eprint: 1910.00753},
	keywords = {Statistics - Machine Learning, Computer Science - Machine Learning, Physics - Chemical Physics, Physics - Computational Physics},
	pages = {arXiv:1910.00753},
}

@article{thomas_tensor_2018,
	title = {Tensor field networks: {Rotation}- and translation-equivariant neural networks for {3D} point clouds},
	doi = {10.48550/arXiv.1802.08219},
	journal = {arXiv e-prints},
	author = {Thomas, Nathaniel and Smidt, Tess and Kearnes, Steven and Yang, Lusann and Li, Li and Kohlhoff, Kai and Riley, Patrick},
	month = feb,
	year = {2018},
	note = {\_eprint: 1802.08219},
	keywords = {Computer Science - Artificial Intelligence, Computer Science - Neural and Evolutionary Computing, Computer Science - Machine Learning, Computer Science - Computer Vision and Pattern Recognition},
	pages = {arXiv:1802.08219},
}

@misc{helwig_group_2023,
	title = {Group {Equivariant} {Fourier} {Neural} {Operators} for {Partial} {Differential} {Equations}},
	url = {https://arxiv.org/abs/2306.05697},
	author = {Helwig, Jacob and Zhang, Xuan and Fu, Cong and Kurtin, Jerry and Wojtowytsch, Stephan and Ji, Shuiwang},
	year = {2023},
	note = {\_eprint: 2306.05697},
}

@misc{li_physics-informed_2023,
	title = {Physics-{Informed} {Neural} {Operator} for {Learning} {Partial} {Differential} {Equations}},
	url = {https://arxiv.org/abs/2111.03794},
	author = {Li, Zongyi and Zheng, Hongkai and Kovachki, Nikola and Jin, David and Chen, Haoxuan and Liu, Burigede and Azizzadenesheli, Kamyar and Anandkumar, Anima},
	year = {2023},
	note = {\_eprint: 2111.03794},
}

@misc{li_transformer_2023,
	title = {Transformer for {Partial} {Differential} {Equations}' {Operator} {Learning}},
	url = {https://arxiv.org/abs/2205.13671},
	author = {Li, Zijie and Meidani, Kazem and Farimani, Amir Barati},
	year = {2023},
	note = {\_eprint: 2205.13671},
}

@misc{bryutkin_hamlet_2024,
	title = {{HAMLET}: {Graph} {Transformer} {Neural} {Operator} for {Partial} {Differential} {Equations}},
	url = {https://arxiv.org/abs/2402.03541},
	author = {Bryutkin, Andrey and Huang, Jiahao and Deng, Zhongying and Yang, Guang and Schönlieb, Carola-Bibiane and Aviles-Rivero, Angelica},
	year = {2024},
	note = {\_eprint: 2402.03541},
}

@misc{geiger_e3nn_2022,
	title = {e3nn: {Euclidean} {Neural} {Networks}},
	url = {https://arxiv.org/abs/2207.09453},
	author = {Geiger, Mario and Smidt, Tess},
	year = {2022},
	note = {\_eprint: 2207.09453},
}

@article{chmiela_machine_2017,
	title = {Machine learning of accurate energy-conserving molecular force fields},
	volume = {3},
	url = {https://www.science.org/doi/abs/10.1126/sciadv.1603015},
	doi = {10.1126/sciadv.1603015},
	abstract = {The law of energy conservation is used to develop an efficient machine learning approach to construct accurate force fields. Using conservation of energy—a fundamental property of closed classical and quantum mechanical systems—we develop an efficient gradient-domain machine learning (GDML) approach to construct accurate molecular force fields using a restricted number of samples from ab initio molecular dynamics (AIMD) trajectories. The GDML implementation is able to reproduce global potential energy surfaces of intermediate-sized molecules with an accuracy of 0.3 kcal mol−1 for energies and 1 kcal mol−1 Å̊−1 for atomic forces using only 1000 conformational geometries for training. We demonstrate this accuracy for AIMD trajectories of molecules, including benzene, toluene, naphthalene, ethanol, uracil, and aspirin. The challenge of constructing conservative force fields is accomplished in our work by learning in a Hilbert space of vector-valued functions that obey the law of energy conservation. The GDML approach enables quantitative molecular dynamics simulations for molecules at a fraction of cost of explicit AIMD calculations, thereby allowing the construction of efficient force fields with the accuracy and transferability of high-level ab initio methods.},
	number = {5},
	journal = {Science Advances},
	author = {Chmiela, Stefan and Tkatchenko, Alexandre and Sauceda, Huziel E. and Poltavsky, Igor and Schütt, Kristof T. and Müller, Klaus-Robert},
	year = {2017},
	note = {\_eprint: https://www.science.org/doi/pdf/10.1126/sciadv.1603015},
	pages = {e1603015},
}

@misc{huang_equivariant_2022,
	title = {Equivariant {Graph} {Mechanics} {Networks} with {Constraints}},
	url = {https://arxiv.org/abs/2203.06442},
	author = {Huang, Wenbing and Han, Jiaqi and Rong, Yu and Xu, Tingyang and Sun, Fuchun and Huang, Junzhou},
	year = {2022},
	note = {\_eprint: 2203.06442},
}

@misc{bronstein_geometric_2021,
	title = {Geometric {Deep} {Learning}: {Grids}, {Groups}, {Graphs}, {Geodesics}, and {Gauges}},
	url = {https://arxiv.org/abs/2104.13478},
	author = {Bronstein, Michael M. and Bruna, Joan and Cohen, Taco and Veličković, Petar},
	year = {2021},
	note = {\_eprint: 2104.13478},
}

@misc{jordan_nanogpt_2025,
	title = {{NanoGPT} {Speedrun} 01/16/25},
	url = {https://github.com/KellerJordan/modded-nanogpt/blob/master/records/011625_Sub3Min/1d3bd93b-a69e-4118-aeb8-8184239d7566.txt},
	author = {Jordan, Keller and You, Jiacheng and Franz Louis, Cesista and Hogan, Brendon},
	month = jan,
	year = {2025},
}

@misc{zhou_value_2024,
	title = {Value {Residual} {Learning} {For} {Alleviating} {Attention} {Concentration} {In} {Transformers}},
	url = {https://arxiv.org/abs/2410.17897},
	author = {Zhou, Zhanchao and Wu, Tianyi and Jiang, Zhiyun and Lan, Zhenzhong},
	year = {2024},
	note = {\_eprint: 2410.17897},
}

@misc{jordan_nanogpt_2024,
	title = {{NanoGPT} {Speedrun} 11/06/24},
	url = {https://github.com/KellerJordan/modded-nanogpt/blob/master/records/011625_Sub3Min/1d3bd93b-a69e-4118-aeb8-8184239d7566.txt},
	author = {Jordan, Keller},
	month = jun,
	year = {2024},
}

@misc{brehmer_geometric_2023,
	title = {Geometric {Algebra} {Transformer}},
	url = {https://arxiv.org/abs/2305.18415},
	author = {Brehmer, Johann and Haan, Pim de and Behrends, Sönke and Cohen, Taco},
	year = {2023},
	note = {\_eprint: 2305.18415},
}

@misc{elhag_relaxed_2025,
	title = {Relaxed {Equivariance} via {Multitask} {Learning}},
	url = {https://arxiv.org/abs/2410.17878},
	author = {Elhag, Ahmed A. and Rusch, T. Konstantin and Giovanni, Francesco Di and Bronstein, Michael},
	year = {2025},
	note = {\_eprint: 2410.17878},
}

@misc{weiler_3d_2018,
	title = {{3D} {Steerable} {CNNs}: {Learning} {Rotationally} {Equivariant} {Features} in {Volumetric} {Data}},
	url = {https://arxiv.org/abs/1807.02547},
	author = {Weiler, Maurice and Geiger, Mario and Welling, Max and Boomsma, Wouter and Cohen, Taco},
	year = {2018},
	note = {\_eprint: 1807.02547},
}

@misc{wu_pointconv_2020,
	title = {{PointConv}: {Deep} {Convolutional} {Networks} on {3D} {Point} {Clouds}},
	url = {https://arxiv.org/abs/1811.07246},
	author = {Wu, Wenxuan and Qi, Zhongang and Fuxin, Li},
	year = {2020},
	note = {\_eprint: 1811.07246},
}

@misc{shi_learning_2021,
	title = {Learning {Gradient} {Fields} for {Molecular} {Conformation} {Generation}},
	url = {https://arxiv.org/abs/2105.03902},
	author = {Shi, Chence and Luo, Shitong and Xu, Minkai and Tang, Jian},
	year = {2021},
	note = {\_eprint: 2105.03902},
}

@misc{kim_pytorch_optimizer_2021,
	title = {pytorch\_optimizer: optimizer \& lr scheduler \& loss function collections in {PyTorch}},
	url = {https://github.com/kozistr/pytorch_optimizer},
	author = {Kim, Hyeongchan},
	month = sep,
	year = {2021},
}

@misc{jordan_muon_2024,
	title = {Muon: {An} optimizer for hidden layers in neural networks},
	url = {https://kellerjordan.github.io/posts/muon/},
	author = {Jordan, Keller and Jin, Yuchen and Boza, Vlado and You, Jiacheng and Cesista, Franz and Newhouse, Laker and Bernstein, Jeremy},
	year = {2024},
}

@article{dauparas_robust_2022,
	title = {Robust deep learning-based protein sequence design using {ProteinMPNN}},
	volume = {378},
	copyright = {Copyright © 2022 The Authors, some rights reserved; exclusive licensee American Association for the Advancement of Science. No claim to original U.S. Government Works},
	issn = {0036-8075},
	abstract = {Although deep learning has revolutionized protein structure prediction, almost all experimentally characterized de novo protein designs have been generated using physically based approaches such as Rosetta. Here, we describe a deep learning-based protein sequence design method, ProteinMPNN, that has outstanding performance in both in silico and experimental tests. On native protein backbones, ProteinMPNN has a sequence recovery of 52.4\% compared with 32.9\% for Rosetta. The amino acid sequence at different positions can be coupled between single or multiple chains, enabling application to a wide range of current protein design challenges. We demonstrate the broad utility and high accuracy of ProteinMPNN using x-ray crystallography, cryo-electron microscopy, and functional studies by rescuing previously failed designs, which were made using Rosetta or AlphaFold, of protein monomers, cyclic homo-oligomers, tetrahedral nanoparticles, and target-binding proteins.},
	language = {eng},
	number = {6615},
	journal = {Science (American Association for the Advancement of Science)},
	author = {Dauparas, J and Anishchenko, I and Bennett, N and Bai, H and Ragotte, R J and Milles, L F and Wicky, B I M and Courbet, A and de Haas, R J and Bethel, N and Leung, P J Y and Huddy, T F and Pellock, S and Tischer, D and Chan, F and Koepnick, B and Nguyen, H and Kang, A and Sankaran, B and Bera, A K and King, N P and Baker, D},
	year = {2022},
	note = {Backup Publisher: Lawrence Berkeley National Laboratory (LBNL), Berkeley, CA (United States)
Place: United States
Publisher: The American Association for the Advancement of Science},
	keywords = {Proteins, Amino acid sequence, Cryomicroscopy, Design, Oligomers},
	pages = {49--56},
}

@misc{bihani_egraffbench_2023,
	title = {{EGraFFBench}: {Evaluation} of {Equivariant} {Graph} {Neural} {Network} {Force} {Fields} for {Atomistic} {Simulations}},
	url = {https://arxiv.org/abs/2310.02428},
	author = {Bihani, Vaibhav and Pratiush, Utkarsh and Mannan, Sajid and Du, Tao and Chen, Zhimin and Miret, Santiago and Micoulaut, Matthieu and Smedskjaer, Morten M. and Ranu, Sayan and Krishnan, N. M. Anoop},
	year = {2023},
	note = {\_eprint: 2310.02428},
}

@article{christensen_role_2020,
	title = {On the role of gradients for machine learning of molecular energies and forces},
	volume = {1},
	copyright = {http://creativecommons.org/licenses/by/4.0},
	number = {4},
	journal = {Mach. Learn. Sci. Technol.},
	author = {Christensen, Anders S and von Lilienfeld, O Anatole},
	month = oct,
	year = {2020},
	note = {Publisher: IOP Publishing},
	pages = {045018},
}

@misc{musaelian_learning_2022,
	title = {Learning {Local} {Equivariant} {Representations} for {Large}-{Scale} {Atomistic} {Dynamics}},
	url = {https://arxiv.org/abs/2204.05249},
	author = {Musaelian, Albert and Batzner, Simon and Johansson, Anders and Sun, Lixin and Owen, Cameron J. and Kornbluth, Mordechai and Kozinsky, Boris},
	year = {2022},
	note = {\_eprint: 2204.05249},
}

@misc{batatia_design_2022,
	title = {The {Design} {Space} of {E}(3)-{Equivariant} {Atom}-{Centered} {Interatomic} {Potentials}},
	url = {https://arxiv.org/abs/2205.06643},
	author = {Batatia, Ilyes and Batzner, Simon and Kovács, Dávid Péter and Musaelian, Albert and Simm, Gregor N. C. and Drautz, Ralf and Ortner, Christoph and Kozinsky, Boris and Csányi, Gábor},
	year = {2022},
	note = {\_eprint: 2205.06643},
}

@article{batzner_e3-equivariant_2022,
	title = {E(3)-equivariant graph neural networks for data-efficient and accurate interatomic potentials},
	volume = {13},
	issn = {2041-1723},
	url = {http://dx.doi.org/10.1038/s41467-022-29939-5},
	doi = {10.1038/s41467-022-29939-5},
	number = {1},
	journal = {Nature Communications},
	author = {Batzner, Simon and Musaelian, Albert and Sun, Lixin and Geiger, Mario and Mailoa, Jonathan P. and Kornbluth, Mordechai and Molinari, Nicola and Smidt, Tess E. and Kozinsky, Boris},
	month = may,
	year = {2022},
	note = {Publisher: Springer Science and Business Media LLC},
}

@misc{xu_geodiff_2022,
	title = {{GeoDiff}: a {Geometric} {Diffusion} {Model} for {Molecular} {Conformation} {Generation}},
	url = {https://arxiv.org/abs/2203.02923},
	author = {Xu, Minkai and Yu, Lantao and Song, Yang and Shi, Chence and Ermon, Stefano and Tang, Jian},
	year = {2022},
	note = {\_eprint: 2203.02923},
}

@misc{bengio_scheduled_2015,
	title = {Scheduled {Sampling} for {Sequence} {Prediction} with {Recurrent} {Neural} {Networks}},
	url = {https://arxiv.org/abs/1506.03099},
	author = {Bengio, Samy and Vinyals, Oriol and Jaitly, Navdeep and Shazeer, Noam},
	year = {2015},
	note = {\_eprint: 1506.03099},
}

@article{taieb_bias_2016,
	title = {A {Bias} and {Variance} {Analysis} for {Multistep}-{Ahead} {Time} {Series} {Forecasting}},
	volume = {27},
	doi = {10.1109/TNNLS.2015.2411629},
	number = {1},
	journal = {IEEE Transactions on Neural Networks and Learning Systems},
	author = {Taieb, Souhaib Ben and Atiya, Amir F.},
	year = {2016},
	keywords = {Artificial neural networks, Bias, Computational modeling, machine learning, Analytical models, Forecasting, Joints, Monte Carlo simulation, multistep-ahead forecasting, nearest neighbors, neural networks (NNs), Predictive models, time series, Time series analysis, variance, variance.},
	pages = {62--76},
}

@misc{bergsma_sutranets_2023,
	title = {{SutraNets}: {Sub}-series {Autoregressive} {Networks} for {Long}-{Sequence}, {Probabilistic} {Forecasting}},
	url = {https://arxiv.org/abs/2312.14880},
	author = {Bergsma, Shane and Zeyl, Timothy and Guo, Lei},
	year = {2023},
	note = {\_eprint: 2312.14880},
}

@misc{bechler-speicher_position_2025,
	title = {Position: {Graph} {Learning} {Will} {Lose} {Relevance} {Due} {To} {Poor} {Benchmarks}},
	url = {https://arxiv.org/abs/2502.14546},
	author = {Bechler-Speicher, Maya and Finkelshtein, Ben and Frasca, Fabrizio and Müller, Luis and Tönshoff, Jan and Siraudin, Antoine and Zaverkin, Viktor and Bronstein, Michael M. and Niepert, Mathias and Perozzi, Bryan and Galkin, Mikhail and Morris, Christopher},
	year = {2025},
	note = {\_eprint: 2502.14546},
}

@article{bolton_pubchem3d_2011,
	title = {{PubChem3D}: a new resource for scientists.},
	volume = {3},
	issn = {1758-2946},
	doi = {10.1186/1758-2946-3-32},
	abstract = {BACKGROUND: PubChem is an open repository for small molecules and their experimental biological activity. PubChem integrates and provides search,  retrieval, visualization, analysis, and programmatic access tools in an effort to  maximize the utility of contributed information. There are many diverse chemical  structures with similar biological efficacies against targets available in  PubChem that are difficult to interrelate using traditional 2-D similarity  methods. A new layer called PubChem3D is added to PubChem to assist in this  analysis. DESCRIPTION: PubChem generates a 3-D conformer model description for  92.3\% of all records in the PubChem Compound database (when considering the  parent compound of salts). Each of these conformer models is sampled to remove  redundancy, guaranteeing a minimum (non-hydrogen atom pair-wise) RMSD between  conformers. A diverse conformer ordering gives a maximal description of the  conformational diversity of a molecule when only a subset of available conformers  is used. A pre-computed search per compound record gives immediate access to a  set of 3-D similar compounds (called "Similar Conformers") in PubChem and their  respective superpositions. Systematic augmentation of PubChem resources to  include a 3-D layer provides users with new capabilities to search, subset,  visualize, analyze, and download data.A series of retrospective studies help to  demonstrate important connections between chemical structures and their  biological function that are not obvious using 2-D similarity but are readily  apparent by 3-D similarity. CONCLUSIONS: The addition of PubChem3D to the  existing contents of PubChem is a considerable achievement, given the scope,  scale, and the fact that the resource is publicly accessible and free. With the  ability to uncover latent structure-activity relationships of chemical  structures, while complementing 2-D similarity analysis approaches, PubChem3D  represents a new resource for scientists to exploit when exploring the biological  annotations in PubChem.},
	language = {eng},
	number = {1},
	journal = {Journal of cheminformatics},
	author = {Bolton, Evan E. and Chen, Jie and Kim, Sunghwan and Han, Lianyi and He, Siqian and Shi, Wenyao and Simonyan, Vahan and Sun, Yan and Thiessen, Paul A. and Wang, Jiyao and Yu, Bo and Zhang, Jian and Bryant, Stephen H.},
	month = sep,
	year = {2011},
	pmid = {21933373},
	pmcid = {PMC3269824},
	note = {Place: England},
	pages = {32},
}

@misc{brody_how_2022,
	title = {How {Attentive} are {Graph} {Attention} {Networks}?},
	url = {https://arxiv.org/abs/2105.14491},
	author = {Brody, Shaked and Alon, Uri and Yahav, Eran},
	year = {2022},
	note = {\_eprint: 2105.14491},
}

@article{harris_array_2020,
	title = {Array programming with {NumPy}},
	volume = {585},
	url = {https://doi.org/10.1038/s41586-020-2649-2},
	doi = {10.1038/s41586-020-2649-2},
	number = {7825},
	journal = {Nature},
	author = {Harris, Charles R. and Millman, K. Jarrod and Walt, Stéfan J. van der and Gommers, Ralf and Virtanen, Pauli and Cournapeau, David and Wieser, Eric and Taylor, Julian and Berg, Sebastian and Smith, Nathaniel J. and Kern, Robert and Picus, Matti and Hoyer, Stephan and Kerkwijk, Marten H. van and Brett, Matthew and Haldane, Allan and Río, Jaime Fernández del and Wiebe, Mark and Peterson, Pearu and Gérard-Marchant, Pierre and Sheppard, Kevin and Reddy, Tyler and Weckesser, Warren and Abbasi, Hameer and Gohlke, Christoph and Oliphant, Travis E.},
	month = sep,
	year = {2020},
	note = {Publisher: Springer Science and Business Media LLC},
	pages = {357--362},
}

@misc{liao_equiformer_2023,
	title = {Equiformer: {Equivariant} {Graph} {Attention} {Transformer} for {3D} {Atomistic} {Graphs}},
	url = {https://arxiv.org/abs/2206.11990},
	author = {Liao, Yi-Lun and Smidt, Tess},
	year = {2023},
	note = {\_eprint: 2206.11990},
}

@misc{knutson_dynamic_2022,
	title = {Dynamic {Molecular} {Graph}-based {Implementation} for {Biophysical} {Properties} {Prediction}},
	url = {https://arxiv.org/abs/2212.09991},
	author = {Knutson, Carter and Panapitiya, Gihan and Varikoti, Rohith and Kumar, Neeraj},
	year = {2022},
	note = {\_eprint: 2212.09991},
	annote = {Use: Says static graphs bad

Top performing PLI focused GNNs, including those that manage 3D molecular data are still limited to translational invariant, and static models (Lim et al., 2019; Knutson et al., 2022; Gonczarek et al., 2016). This greatly hinders molecular graph representation and learning by depriving the model of important physical{\textbackslash}geometric properties.
},
}

@inproceedings{luo_predicting_2021,
	title = {Predicting {Molecular} {Conformation} via {Dynamic} {Graph} {Score} {Matching}},
	volume = {34},
	url = {https://proceedings.neurips.cc/paper_files/paper/2021/file/a45a1d12ee0fb7f1f872ab91da18f899-Paper.pdf},
	booktitle = {Advances in {Neural} {Information} {Processing} {Systems}},
	publisher = {Curran Associates, Inc.},
	author = {Luo, Shitong and Shi, Chence and Xu, Minkai and Tang, Jian},
	editor = {Ranzato, M. and Beygelzimer, A. and Dauphin, Y. and Liang, P. S. and Vaughan, J. Wortman},
	year = {2021},
	pages = {19784--19795},
	annote = {Says: “To summarize, all of the previous methods focus mainly on modeling the local interactions based on the static input molecular graphs (or augmented graphs by adding auxiliary edges between atoms that are two- and three-hops away) and overlook the long-range non-bonded interactions between atoms.”
},
}

@article{neese_software_2022,
	title = {Software update: {The} {ORCA} program system—{Version} 5.0},
	volume = {12},
	url = {https://wires.onlinelibrary.wiley.com/doi/abs/10.1002/wcms.1606},
	doi = {https://doi.org/10.1002/wcms.1606},
	abstract = {Abstract Version 5.0 of the ORCA quantum chemistry program suite was released in July 2021. ORCA 5.0 represents a major improvement over all previous versions of ORCA and features (1) highly improved performance, (2) increased numerical robustness, (3) a host of new functionality, and (4) greatly improved user friendliness. The article describes the most salient features of the program. This article is categorized under: Electronic Structure Theory {\textgreater} Ab Initio Electronic Structure Methods Data Science {\textgreater} Computer Algorithms and Programming Software {\textgreater} Quantum Chemistry},
	number = {5},
	journal = {WIREs Computational Molecular Science},
	author = {Neese, Frank},
	year = {2022},
	note = {\_eprint: https://wires.onlinelibrary.wiley.com/doi/pdf/10.1002/wcms.1606},
	keywords = {density functional theory, electron correlation, QM/MM, quantum chemistry, theoretical spectroscopy},
	pages = {e1606},
}

@article{wittmann_extension_2024,
	title = {Extension of the {D3} and {D4} {London} dispersion corrections to the full actinides series},
	volume = {26},
	issn = {1463-9084},
	url = {http://dx.doi.org/10.1039/D4CP01514B},
	doi = {10.1039/d4cp01514b},
	number = {32},
	journal = {Physical Chemistry Chemical Physics},
	author = {Wittmann, Lukas and Gordiy, Igor and Friede, Marvin and Helmich-Paris, Benjamin and Grimme, Stefan and Hansen, Andreas and Bursch, Markus},
	year = {2024},
	note = {Publisher: Royal Society of Chemistry (RSC)},
	pages = {21379--21394},
	annote = {D4 Dispersion Corrections
},
}

@article{caldeweyher_extension_2020,
	title = {Extension and evaluation of the {D4} {London}-dispersion model for periodic systems},
	volume = {22},
	issn = {1463-9084},
	url = {http://dx.doi.org/10.1039/D0CP00502A},
	doi = {10.1039/d0cp00502a},
	number = {16},
	journal = {Physical Chemistry Chemical Physics},
	author = {Caldeweyher, Eike and Mewes, Jan-Michael and Ehlert, Sebastian and Grimme, Stefan},
	year = {2020},
	note = {Publisher: Royal Society of Chemistry (RSC)},
	pages = {8499--8512},
	annote = {D4 Dispersion Correction
},
}

@article{caldeweyher_generally_2019,
	title = {A generally applicable atomic-charge dependent {London} dispersion correction},
	volume = {150},
	issn = {1089-7690},
	url = {http://dx.doi.org/10.1063/1.5090222},
	doi = {10.1063/1.5090222},
	number = {15},
	journal = {The Journal of Chemical Physics},
	author = {Caldeweyher, Eike and Ehlert, Sebastian and Hansen, Andreas and Neugebauer, Hagen and Spicher, Sebastian and Bannwarth, Christoph and Grimme, Stefan},
	month = apr,
	year = {2019},
	note = {Publisher: AIP Publishing},
	annote = {D4 Dispersion corrections
},
}

@article{weigend_balanced_2005,
	title = {Balanced basis sets of split valence, triple zeta valence and quadruple zeta valence quality for {H} to {Rn}: {Design} and assessment of accuracy},
	volume = {7},
	issn = {1463-9084},
	url = {http://dx.doi.org/10.1039/B508541A},
	doi = {10.1039/b508541a},
	number = {18},
	journal = {Physical Chemistry Chemical Physics},
	author = {Weigend, Florian and Ahlrichs, Reinhart},
	year = {2005},
	note = {Publisher: Royal Society of Chemistry (RSC)},
	pages = {3297},
	annote = {Def 2 citation 2
},
}

@article{perdew_generalized_1996,
	title = {Generalized {Gradient} {Approximation} {Made} {Simple}},
	volume = {77},
	issn = {1079-7114},
	url = {http://dx.doi.org/10.1103/PhysRevLett.77.3865},
	doi = {10.1103/physrevlett.77.3865},
	number = {18},
	journal = {Physical Review Letters},
	author = {Perdew, John P. and Burke, Kieron and Ernzerhof, Matthias},
	month = oct,
	year = {1996},
	note = {Publisher: American Physical Society (APS)},
	pages = {3865--3868},
	annote = {PBE
},
}

@article{muller_omegab97x-3c_2023,
	title = {{omegaB97X}-3c: {A} composite range-separated hybrid {DFT} method with a molecule-optimized polarized valence double-zeta basis set},
	volume = {158},
	issn = {1089-7690},
	url = {http://dx.doi.org/10.1063/5.0133026},
	doi = {10.1063/5.0133026},
	number = {1},
	journal = {The Journal of Chemical Physics},
	author = {Müller, Marcel and Hansen, Andreas and Grimme, Stefan},
	month = jan,
	year = {2023},
	note = {Publisher: AIP Publishing},
	annote = {Omega DFT method
},
}

@inproceedings{li_scaling_2024,
	title = {Scaling {Molecular} {Dynamics} with ab initio {Accuracy} to 149 {Nanoseconds} per {Day}},
	url = {http://dx.doi.org/10.1109/SC41406.2024.00036},
	doi = {10.1109/sc41406.2024.00036},
	booktitle = {{SC24}: {International} {Conference} for {High} {Performance} {Computing}, {Networking}, {Storage} and {Analysis}},
	publisher = {IEEE},
	author = {Li, Jianxiong and Li, Boyang and Guo, Zhuoqiang and Li, Mingzhen and Li, Enji and Liu, Lijun and Yuan, Guojun and Wang, Zhan and Tan, Guangming and Jia, Weile},
	month = nov,
	year = {2024},
	pages = {1--15},
	annote = {O(N{\textasciicircum}3) citation
},
}

@article{stein_double-hybrid_2020,
	title = {Double-{Hybrid} {DFT} {Functionals} for the {Condensed} {Phase}: {Gaussian} and {Plane} {Waves} {Implementation} and {Evaluation}},
	volume = {25},
	issn = {1420-3049},
	url = {https://www.mdpi.com/1420-3049/25/21/5174},
	doi = {10.3390/molecules25215174},
	abstract = {Intermolecular interactions play an important role for the understanding of catalysis, biochemistry and pharmacy. Double-hybrid density functionals (DHDFs) combine the proper treatment of short-range interactions of common density functionals with the correct description of long-range interactions of wave-function correlation methods. Up to now, there are only a few benchmark studies available examining the performance of DHDFs in condensed phase. We studied the performance of a small but diverse selection of DHDFs implemented within Gaussian and plane waves formalism on cohesive energies of four representative dispersion interaction dominated crystal structures. We found that the PWRB95 and ωB97X-2 functionals provide an excellent description of long-ranged interactions in solids. In addition, we identified numerical issues due to the extreme grid dependence of the underlying density functional for PWRB95. The basis set superposition error (BSSE) and convergence with respect to the super cell size are discussed for two different large basis sets.},
	number = {21},
	journal = {Molecules},
	author = {Stein, Frederick and Hutter, Jürg and Rybkin, Vladimir V.},
	year = {2020},
	annote = {Citation for O(N{\textasciicircum}5) scaling of DFT
},
}

@article{kresse_efficiency_1996,
	title = {Efficiency of ab-initio total energy calculations for metals and semiconductors using a plane-wave basis set},
	volume = {6},
	issn = {0927-0256},
	url = {https://www.sciencedirect.com/science/article/pii/0927025696000080},
	doi = {https://doi.org/10.1016/0927-0256(96)00008-0},
	abstract = {We present a detailed description and comparison of algorithms for performing ab-initio quantum-mechanical calculations using pseudopotentials and a plane-wave basis set. We will discuss: (a) partial occupancies within the framework of the linear tetrahedron method and the finite temperature density-functional theory, (b) iterative methods for the diagonalization of the Kohn-Sham Hamiltonian and a discussion of an efficient iterative method based on the ideas of Pulay's residual minimization, which is close to an order Natoms2 scaling even for relatively large systems, (c) efficient Broyden-like and Pulay-like mixing methods for the charge density including a new special ‘preconditioning’ optimized for a plane-wave basis set, (d) conjugate gradient methods for minimizing the electronic free energy with respect to all degrees of freedom simultaneously. We have implemented these algorithms within a powerful package called VAMP (Vienna ab-initio molecular-dynamics package). The program and the techniques have been used successfully for a large number of different systems (liquid and amorphous semiconductors, liquid simple and transition metals, metallic and semi-conducting surfaces, phonons in simple metals, transition metals and semiconductors) and turned out to be very reliable.},
	number = {1},
	journal = {Computational Materials Science},
	author = {Kresse, G. and Furthmüller, J.},
	year = {1996},
	pages = {15--50},
	annote = {Citation for O(N{\textasciicircum}3)
},
}

@misc{greg_landrum_rdkitrdkit_2025,
	title = {rdkit/rdkit: 2025\_03\_2 ({Q1} 2025) {Release}},
	copyright = {BSD 3-Clause "New" or "Revised" License},
	url = {https://zenodo.org/doi/10.5281/zenodo.591637},
	publisher = {Zenodo},
	author = {Greg Landrum and Paolo Tosco and Brian Kelley and Ricardo Rodriguez and David Cosgrove and Riccardo Vianello and sriniker and Peter Gedeck and Gareth Jones and Eisuke Kawashima and NadineSchneider and Dan Nealschneider and Andrew Dalke and Matt Swain and Brian Cole and Samo Turk and Aleksandr Savelev and tadhurst-cdd and Alain Vaucher and Maciej Wójcikowski and Ichiru Take and Rachel Walker and Vincent F. Scalfani and Hussein Faara and Kazuya Ujihara and Daniel Probst and Juuso Lehtivarjo and guillaume godin and Axel Pahl and Niels Maeder},
	year = {2025},
	doi = {10.5281/ZENODO.591637},
	annote = {MACCS keys
},
}

@article{lobato_highs_2021,
	title = {Highs and {Lows} of {Bond} {Lengths}: {Is} {There} {Any} {Limit}?},
	volume = {60},
	issn = {1521-3773},
	url = {http://dx.doi.org/10.1002/anie.202102967},
	doi = {10.1002/anie.202102967},
	number = {31},
	journal = {Angewandte Chemie International Edition},
	author = {Lobato, Alvaro and Salvadó, Miguel A. and Recio, J. Manuel and Taravillo, Mercedes and Baonza, Valentín G.},
	month = jun,
	year = {2021},
	note = {Publisher: Wiley},
	pages = {17028--17036},
}

@misc{dwivedi_graph_2022,
	title = {Graph {Neural} {Networks} with {Learnable} {Structural} and {Positional} {Representations}},
	url = {https://arxiv.org/abs/2110.07875},
	author = {Dwivedi, Vijay Prakash and Luu, Anh Tuan and Laurent, Thomas and Bengio, Yoshua and Bresson, Xavier},
	year = {2022},
	note = {\_eprint: 2110.07875},
}

@article{harper_research_2004,
	title = {({Research} {Papers}) {Design} of a {Compound} {Screening} {Collection} for use in {High} {Throughput} {Screening}},
	volume = {7},
	issn = {1386-2073},
	url = {http://dx.doi.org/10.2174/138620704772884832},
	doi = {10.2174/138620704772884832},
	number = {1},
	journal = {Combinatorial Chemistry \&amp; High Throughput Screening},
	author = {Harper, G and Pickett, S and Green, D.},
	month = feb,
	year = {2004},
	note = {Publisher: Bentham Science Publishers Ltd.},
	pages = {63--70},
	annote = {Uses a 0.85 Tanimoto similarity to expand a dataset
},
}

@article{menke_natural_2021,
	title = {Natural product scores and fingerprints extracted from artificial neural networks},
	volume = {19},
	issn = {2001-0370},
	url = {http://dx.doi.org/10.1016/j.csbj.2021.07.032},
	doi = {10.1016/j.csbj.2021.07.032},
	journal = {Computational and Structural Biotechnology Journal},
	author = {Menke, Janosch and Massa, Joana and Koch, Oliver},
	year = {2021},
	note = {Publisher: Elsevier BV},
	pages = {4593--4602},
	annote = {Adds extra compounds to their dataset with a similarity threshold of 0.5 for addition
},
}

@article{matter_selecting_1997,
	title = {Selecting {Optimally} {Diverse} {Compounds} from {Structure} {Databases}: {A} {Validation} {Study} of {Two}-{Dimensional} and {Three}-{Dimensional} {Molecular} {Descriptors}},
	volume = {40},
	issn = {1520-4804},
	url = {http://dx.doi.org/10.1021/jm960352},
	doi = {10.1021/jm960352+},
	number = {8},
	journal = {Journal of Medicinal Chemistry},
	author = {Matter, Hans},
	month = apr,
	year = {1997},
	note = {Publisher: American Chemical Society (ACS)},
	pages = {1219--1229},
	annote = {Early paper suggesting using similarity search to construct chemical datasets/screening libraries
},
}

@article{eastman_spice_2023,
	title = {{SPICE}, {A} {Dataset} of {Drug}-like {Molecules} and {Peptides} for {Training} {Machine} {Learning} {Potentials}},
	volume = {10},
	issn = {2052-4463},
	url = {http://dx.doi.org/10.1038/s41597-022-01882-6},
	doi = {10.1038/s41597-022-01882-6},
	number = {1},
	journal = {Scientific Data},
	author = {Eastman, Peter and Behara, Pavan Kumar and Dotson, David L. and Galvelis, Raimondas and Herr, John E. and Horton, Josh T. and Mao, Yuezhi and Chodera, John D. and Pritchard, Benjamin P. and Wang, Yuanqing and De Fabritiis, Gianni and Markland, Thomas E.},
	month = jan,
	year = {2023},
	note = {Publisher: Springer Science and Business Media LLC},
	annote = {Build a dataset by exlcuding compounds based on excessive tanimoto similarity
},
}

@misc{zhang_activity_2023,
	title = {Activity {Cliff} {Prediction}: {Dataset} and {Benchmark}},
	shorttitle = {Activity {Cliff} {Prediction}},
	url = {http://arxiv.org/abs/2302.07541},
	doi = {10.48550/arXiv.2302.07541},
	abstract = {Activity cliffs (ACs), which are generally defined as pairs of structurally similar molecules that are active against the same bio-target but significantly different in the binding potency, are of great importance to drug discovery. Up to date, the AC prediction problem, i.e., to predict whether a pair of molecules exhibit the AC relationship, has not yet been fully explored. In this paper, we first introduce ACNet, a large-scale dataset for AC prediction. ACNet curates over 400K Matched Molecular Pairs (MMPs) against 190 targets, including over 20K MMP-cliffs and 380K non-AC MMPs, and provides five subsets for model development and evaluation. Then, we propose a baseline framework to benchmark the predictive performance of molecular representations encoded by deep neural networks for AC prediction, and 16 models are evaluated in experiments. Our experimental results show that deep learning models can achieve good performance when the models are trained on tasks with adequate amount of data, while the imbalanced, low-data and out-of-distribution features of the ACNet dataset still make it challenging for deep neural networks to cope with. In addition, the traditional ECFP method shows a natural advantage on MMP-cliff prediction, and outperforms other deep learning models on most of the data subsets. To the best of our knowledge, our work constructs the first large-scale dataset for AC prediction, which may stimulate the study of AC prediction models and prompt further breakthroughs in AI-aided drug discovery. The codes and dataset can be accessed by https://drugai.github.io/ACNet/.},
	urldate = {2025-07-26},
	publisher = {arXiv},
	author = {Zhang, Ziqiao and Zhao, Bangyi and Xie, Ailin and Bian, Yatao and Zhou, Shuigeng},
	month = feb,
	year = {2023},
	note = {arXiv:2302.07541 [q-bio]},
	keywords = {Computer Science - Machine Learning, Quantitative Biology - Biomolecules},
	annote = {Add to dataset based on matched molecular pairing, probably over 0.85 similarity
},
	file = {Preprint PDF:C\:\\Users\\luke-\\Zotero\\storage\\AIXULG6R\\Zhang et al. - 2023 - Activity Cliff Prediction Dataset and Benchmark.pdf:application/pdf;Snapshot:C\:\\Users\\luke-\\Zotero\\storage\\IYF24UD9\\2302.html:text/html},
}

@misc{hao_dpot_2024,
	title = {{DPOT}: {Auto}-{Regressive} {Denoising} {Operator} {Transformer} for {Large}-{Scale} {PDE} {Pre}-{Training}},
	shorttitle = {{DPOT}},
	url = {http://arxiv.org/abs/2403.03542},
	doi = {10.48550/arXiv.2403.03542},
	abstract = {Pre-training has been investigated to improve the efficiency and performance of training neural operators in data-scarce settings. However, it is largely in its infancy due to the inherent complexity and diversity, such as long trajectories, multiple scales and varying dimensions of partial differential equations (PDEs) data. In this paper, we present a new auto-regressive denoising pre-training strategy, which allows for more stable and efficient pre-training on PDE data and generalizes to various downstream tasks. Moreover, by designing a flexible and scalable model architecture based on Fourier attention, we can easily scale up the model for large-scale pre-training. We train our PDE foundation model with up to 0.5B parameters on 10+ PDE datasets with more than 100k trajectories. Extensive experiments show that we achieve SOTA on these benchmarks and validate the strong generalizability of our model to significantly enhance performance on diverse downstream PDE tasks like 3D data. Code is available at {\textbackslash}url\{https://github.com/thu-ml/DPOT\}.},
	urldate = {2025-07-27},
	publisher = {arXiv},
	author = {Hao, Zhongkai and Su, Chang and Liu, Songming and Berner, Julius and Ying, Chengyang and Su, Hang and Anandkumar, Anima and Song, Jian and Zhu, Jun},
	month = may,
	year = {2024},
	note = {arXiv:2403.03542 [cs]},
	keywords = {Computer Science - Machine Learning, Computer Science - Numerical Analysis, Mathematics - Numerical Analysis},
	file = {Preprint PDF:C\:\\Users\\luke-\\Zotero\\storage\\SHVZZRBK\\Hao et al. - 2024 - DPOT Auto-Regressive Denoising Operator Transformer for Large-Scale PDE Pre-Training.pdf:application/pdf;Snapshot:C\:\\Users\\luke-\\Zotero\\storage\\M3EWJ7G9\\2403.html:text/html},
}

@article{chmiela_accurate_2023,
	title = {Accurate global machine learning force fields for molecules with hundreds of atoms},
	volume = {9},
	issn = {2375-2548},
	url = {http://dx.doi.org/10.1126/sciadv.adf0873},
	doi = {10.1126/sciadv.adf0873},
	number = {2},
	journal = {Science Advances},
	author = {Chmiela, Stefan and Vassilev-Galindo, Valentin and Unke, Oliver T. and Kabylda, Adil and Sauceda, Huziel E. and Tkatchenko, Alexandre and Müller, Klaus-Robert},
	month = jan,
	year = {2023},
	note = {Publisher: American Association for the Advancement of Science (AAAS)},
	annote = {MD22 Dataset
},
}

@article{nv_polyunsaturated_2003,
	title = {Polyunsaturated docosahexaenoic vs docosapentaenoic acid-differences in lipid matrix properties from the loss of one double bond},
	volume = {125},
	issn = {0002-7863},
	url = {https://pubmed.ncbi.nlm.nih.gov/12785780/},
	doi = {10.1021/ja029029o},
	abstract = {Insufficient supply to the developing brain of docosahexaenoic acid (22:6n3, DHA), or its omega-3 fatty acid precursors, results in replacement of DHA with docosapentaenoic acid (22:5n6, DPA), an omega-6 fatty acid that is lacking a double bond near the chain's methyl end. We investigated membranes …},
	language = {en},
	number = {21},
	urldate = {2025-07-27},
	journal = {Journal of the American Chemical Society},
	author = {Nv, Eldho and Se, Feller and S, Tristram-Nagle and Iv, Polozov and K, Gawrisch},
	month = may,
	year = {2003},
	pmid = {12785780},
	note = {Publisher: J Am Chem Soc},
	file = {Snapshot:C\:\\Users\\luke-\\Zotero\\storage\\X8N7FH69\\12785780.html:text/html},
}

@misc{alon_bottleneck_2021,
	title = {On the {Bottleneck} of {Graph} {Neural} {Networks} and its {Practical} {Implications}},
	url = {http://arxiv.org/abs/2006.05205},
	doi = {10.48550/arXiv.2006.05205},
	abstract = {Since the proposal of the graph neural network (GNN) by Gori et al. (2005) and Scarselli et al. (2008), one of the major problems in training GNNs was their struggle to propagate information between distant nodes in the graph. We propose a new explanation for this problem: GNNs are susceptible to a bottleneck when aggregating messages across a long path. This bottleneck causes the over-squashing of exponentially growing information into fixed-size vectors. As a result, GNNs fail to propagate messages originating from distant nodes and perform poorly when the prediction task depends on long-range interaction. In this paper, we highlight the inherent problem of over-squashing in GNNs: we demonstrate that the bottleneck hinders popular GNNs from fitting long-range signals in the training data; we further show that GNNs that absorb incoming edges equally, such as GCN and GIN, are more susceptible to over-squashing than GAT and GGNN; finally, we show that prior work, which extensively tuned GNN models of long-range problems, suffers from over-squashing, and that breaking the bottleneck improves their state-of-the-art results without any tuning or additional weights. Our code is available at https://github.com/tech-srl/bottleneck/ .},
	urldate = {2025-07-29},
	publisher = {arXiv},
	author = {Alon, Uri and Yahav, Eran},
	month = mar,
	year = {2021},
	note = {arXiv:2006.05205 [cs]},
	keywords = {Statistics - Machine Learning, Computer Science - Machine Learning},
	annote = {Comment: Accepted to ICLR'2021},
	file = {Preprint PDF:C\:\\Users\\luke-\\Zotero\\storage\\4MMZTKPU\\Alon and Yahav - 2021 - On the Bottleneck of Graph Neural Networks and its Practical Implications.pdf:application/pdf;Snapshot:C\:\\Users\\luke-\\Zotero\\storage\\7LNUKTAF\\2006.html:text/html},
}

@misc{kosmala_ewald-based_2023,
	title = {Ewald-based {Long}-{Range} {Message} {Passing} for {Molecular} {Graphs}},
	url = {http://arxiv.org/abs/2303.04791},
	doi = {10.48550/arXiv.2303.04791},
	abstract = {Neural architectures that learn potential energy surfaces from molecular data have undergone fast improvement in recent years. A key driver of this success is the Message Passing Neural Network (MPNN) paradigm. Its favorable scaling with system size partly relies upon a spatial distance limit on messages. While this focus on locality is a useful inductive bias, it also impedes the learning of long-range interactions such as electrostatics and van der Waals forces. To address this drawback, we propose Ewald message passing: a nonlocal Fourier space scheme which limits interactions via a cutoff on frequency instead of distance, and is theoretically well-founded in the Ewald summation method. It can serve as an augmentation on top of existing MPNN architectures as it is computationally inexpensive and agnostic to architectural details. We test the approach with four baseline models and two datasets containing diverse periodic (OC20) and aperiodic structures (OE62). We observe robust improvements in energy mean absolute errors across all models and datasets, averaging 10\% on OC20 and 16\% on OE62. Our analysis shows an outsize impact of these improvements on structures with high long-range contributions to the ground truth energy.},
	urldate = {2025-07-29},
	publisher = {arXiv},
	author = {Kosmala, Arthur and Gasteiger, Johannes and Gao, Nicholas and Günnemann, Stephan},
	month = jun,
	year = {2023},
	note = {arXiv:2303.04791 [cs]},
	keywords = {Computer Science - Machine Learning, Physics - Chemical Physics, Physics - Computational Physics, Condensed Matter - Materials Science},
	annote = {Comment: Published at the 40th International Conference on Machine Learning (ICML 2023)},
	file = {Preprint PDF:C\:\\Users\\luke-\\Zotero\\storage\\AAV8WQ64\\Kosmala et al. - 2023 - Ewald-based Long-Range Message Passing for Molecular Graphs.pdf:application/pdf;Snapshot:C\:\\Users\\luke-\\Zotero\\storage\\ATMSCNGW\\2303.html:text/html},
}

@misc{kaba_equivariance_2023,
	title = {Equivariance with {Learned} {Canonicalization} {Functions}},
	url = {http://arxiv.org/abs/2211.06489},
	doi = {10.48550/arXiv.2211.06489},
	abstract = {Symmetry-based neural networks often constrain the architecture in order to achieve invariance or equivariance to a group of transformations. In this paper, we propose an alternative that avoids this architectural constraint by learning to produce canonical representations of the data. These canonicalization functions can readily be plugged into non-equivariant backbone architectures. We offer explicit ways to implement them for some groups of interest. We show that this approach enjoys universality while providing interpretable insights. Our main hypothesis, supported by our empirical results, is that learning a small neural network to perform canonicalization is better than using predefined heuristics. Our experiments show that learning the canonicalization function is competitive with existing techniques for learning equivariant functions across many tasks, including image classification, \$N\$-body dynamics prediction, point cloud classification and part segmentation, while being faster across the board.},
	urldate = {2025-09-07},
	publisher = {arXiv},
	author = {Kaba, Sékou-Oumar and Mondal, Arnab Kumar and Zhang, Yan and Bengio, Yoshua and Ravanbakhsh, Siamak},
	month = jul,
	year = {2023},
	note = {arXiv:2211.06489 [cs]},
	keywords = {Computer Science - Artificial Intelligence, Computer Science - Machine Learning},
	annote = {Comment: 21 pages, 5 figures},
	file = {Preprint PDF:C\:\\Users\\luke-\\Zotero\\storage\\8MX9LPEU\\Kaba et al. - 2023 - Equivariance with Learned Canonicalization Functions.pdf:application/pdf;Snapshot:C\:\\Users\\luke-\\Zotero\\storage\\XV28B3Y8\\2211.html:text/html},
}

@misc{guo_scaffold_2024,
	title = {Scaffold {Splits} {Overestimate} {Virtual} {Screening} {Performance}},
	url = {http://arxiv.org/abs/2406.00873},
	doi = {10.48550/arXiv.2406.00873},
	abstract = {Virtual Screening (VS) of vast compound libraries guided by Artificial Intelligence (AI) models is a highly productive approach to early drug discovery. Data splitting is crucial for better benchmarking of such AI models. Traditional random data splits produce similar molecules between training and test sets, conflicting with the reality of VS libraries which mostly contain structurally distinct compounds. Scaffold split, grouping molecules by shared core structure, is widely considered to reflect this real-world scenario. However, here we show that the scaffold split also overestimates VS performance. The reason is that molecules with different chemical scaffolds are often similar, which hence introduces unrealistically high similarities between training molecules and test molecules following a scaffold split. Our study examined three representative AI models on 60 NCI-60 datasets, each with approximately 30,000 to 50,000 molecules tested on a different cancer cell line. Each dataset was split with three methods: scaffold, Butina clustering and the more accurate Uniform Manifold Approximation and Projection (UMAP) clustering. Regardless of the model, model performance is much worse with UMAP splits from the results of the 2100 models trained and evaluated for each algorithm and split. These robust results demonstrate the need for more realistic data splits to tune, compare, and select models for VS. For the same reason, avoiding the scaffold split is also recommended for other molecular property prediction problems. The code to reproduce these results is available at https://github.com/ScaffoldSplitsOverestimateVS},
	urldate = {2025-09-07},
	publisher = {arXiv},
	author = {Guo, Qianrong and Hernandez-Hernandez, Saiveth and Ballester, Pedro J.},
	month = jun,
	year = {2024},
	note = {arXiv:2406.00873 [q-bio]},
	keywords = {Computer Science - Artificial Intelligence, Computer Science - Machine Learning, Quantitative Biology - Quantitative Methods, Quantitative Biology - Biomolecules, Computer Science - Computational Engineering, Finance, and Science},
	file = {Preprint PDF:C\:\\Users\\luke-\\Zotero\\storage\\PVR7FHFW\\Guo et al. - 2024 - Scaffold Splits Overestimate Virtual Screening Performance.pdf:application/pdf;Snapshot:C\:\\Users\\luke-\\Zotero\\storage\\P5EIRPWH\\2406.html:text/html},
}

@article{dror_biomolecular_2012,
	title = {Biomolecular simulation: a computational microscope for molecular biology},
	volume = {41},
	issn = {1936-1238},
	shorttitle = {Biomolecular simulation},
	doi = {10.1146/annurev-biophys-042910-155245},
	abstract = {Molecular dynamics simulations capture the behavior of biological macromolecules in full atomic detail, but their computational demands, combined with the challenge of appropriately modeling the relevant physics, have historically restricted their length and accuracy. Dramatic recent improvements in achievable simulation speed and the underlying physical models have enabled atomic-level simulations on timescales as long as milliseconds that capture key biochemical processes such as protein folding, drug binding, membrane transport, and the conformational changes critical to protein function. Such simulation may serve as a computational microscope, revealing biomolecular mechanisms at spatial and temporal scales that are difficult to observe experimentally. We describe the rapidly evolving state of the art for atomic-level biomolecular simulation, illustrate the types of biological discoveries that can now be made through simulation, and discuss challenges motivating continued innovation in this field.},
	language = {eng},
	journal = {Annual Review of Biophysics},
	author = {Dror, Ron O. and Dirks, Robert M. and Grossman, J. P. and Xu, Huafeng and Shaw, David E.},
	year = {2012},
	pmid = {22577825},
	keywords = {Animals, Biological Transport, Cell Membrane, Humans, Models, Molecular, Molecular Biology, Molecular Dynamics Simulation, Protein Conformation, Protein Folding, Proteins},
	pages = {429--452},
}

@article{de_vivo_role_2016,
	title = {Role of {Molecular} {Dynamics} and {Related} {Methods} in {Drug} {Discovery}},
	volume = {59},
	issn = {0022-2623},
	url = {https://doi.org/10.1021/acs.jmedchem.5b01684},
	doi = {10.1021/acs.jmedchem.5b01684},
	abstract = {Molecular dynamics (MD) and related methods are close to becoming routine computational tools for drug discovery. Their main advantage is in explicitly treating structural flexibility and entropic effects. This allows a more accurate estimate of the thermodynamics and kinetics associated with drug–target recognition and binding, as better algorithms and hardware architectures increase their use. Here, we review the theoretical background of MD and enhanced sampling methods, focusing on free-energy perturbation, metadynamics, steered MD, and other methods most consistently used to study drug–target binding. We discuss unbiased MD simulations that nowadays allow the observation of unsupervised ligand–target binding, assessing how these approaches help optimizing target affinity and drug residence time toward improved drug efficacy. Further issues discussed include allosteric modulation and the role of water molecules in ligand binding and optimization. We conclude by calling for more prospective studies to attest to these methods’ utility in discovering novel drug candidates.},
	number = {9},
	urldate = {2025-09-14},
	journal = {Journal of Medicinal Chemistry},
	author = {De Vivo, Marco and Masetti, Matteo and Bottegoni, Giovanni and Cavalli, Andrea},
	month = may,
	year = {2016},
	note = {Publisher: American Chemical Society},
	pages = {4035--4061},
	file = {Full Text PDF:C\:\\Users\\luke-\\Zotero\\storage\\SDJGDVI2\\De Vivo et al. - 2016 - Role of Molecular Dynamics and Related Methods in Drug Discovery.pdf:application/pdf},
}

@misc{batatia_mace_2023,
	title = {{MACE}: {Higher} {Order} {Equivariant} {Message} {Passing} {Neural} {Networks} for {Fast} and {Accurate} {Force} {Fields}},
	shorttitle = {{MACE}},
	url = {http://arxiv.org/abs/2206.07697},
	doi = {10.48550/arXiv.2206.07697},
	abstract = {Creating fast and accurate force fields is a long-standing challenge in computational chemistry and materials science. Recently, several equivariant message passing neural networks (MPNNs) have been shown to outperform models built using other approaches in terms of accuracy. However, most MPNNs suffer from high computational cost and poor scalability. We propose that these limitations arise because MPNNs only pass two-body messages leading to a direct relationship between the number of layers and the expressivity of the network. In this work, we introduce MACE, a new equivariant MPNN model that uses higher body order messages. In particular, we show that using four-body messages reduces the required number of message passing iterations to just two, resulting in a fast and highly parallelizable model, reaching or exceeding state-of-the-art accuracy on the rMD17, 3BPA, and AcAc benchmark tasks. We also demonstrate that using higher order messages leads to an improved steepness of the learning curves.},
	urldate = {2025-09-14},
	publisher = {arXiv},
	author = {Batatia, Ilyes and Kovács, Dávid Péter and Simm, Gregor N. C. and Ortner, Christoph and Csányi, Gábor},
	month = jan,
	year = {2023},
	note = {arXiv:2206.07697 [stat]},
	keywords = {Computer Science - Machine Learning, Condensed Matter - Materials Science, Physics - Chemical Physics, Statistics - Machine Learning},
	annote = {Comment: Advances in Neural Information Processing Systems, 2022},
	file = {Preprint PDF:C\:\\Users\\luke-\\Zotero\\storage\\F2V7Q2DT\\Batatia et al. - 2023 - MACE Higher Order Equivariant Message Passing Neural Networks for Fast and Accurate Force Fields.pdf:application/pdf;Snapshot:C\:\\Users\\luke-\\Zotero\\storage\\AQGGG9K6\\2206.html:text/html},
}

@misc{schreiner_implicit_2023,
	title = {Implicit {Transfer} {Operator} {Learning}: {Multiple} {Time}-{Resolution} {Surrogates} for {Molecular} {Dynamics}},
	shorttitle = {Implicit {Transfer} {Operator} {Learning}},
	url = {http://arxiv.org/abs/2305.18046},
	doi = {10.48550/arXiv.2305.18046},
	abstract = {Computing properties of molecular systems rely on estimating expectations of the (unnormalized) Boltzmann distribution. Molecular dynamics (MD) is a broadly adopted technique to approximate such quantities. However, stable simulations rely on very small integration time-steps (\$10{\textasciicircum}\{-15\}{\textbackslash},{\textbackslash}mathrm\{s\}\$), whereas convergence of some moments, e.g. binding free energy or rates, might rely on sampling processes on time-scales as long as \$10{\textasciicircum}\{-1\}{\textbackslash}, {\textbackslash}mathrm\{s\}\$, and these simulations must be repeated for every molecular system independently. Here, we present Implict Transfer Operator (ITO) Learning, a framework to learn surrogates of the simulation process with multiple time-resolutions. We implement ITO with denoising diffusion probabilistic models with a new SE(3) equivariant architecture and show the resulting models can generate self-consistent stochastic dynamics across multiple time-scales, even when the system is only partially observed. Finally, we present a coarse-grained CG-SE3-ITO model which can quantitatively model all-atom molecular dynamics using only coarse molecular representations. As such, ITO provides an important step towards multiple time- and space-resolution acceleration of MD. Code is available at {\textbackslash}href\{https://github.com/olsson-group/ito\}\{https://github.com/olsson-group/ito\}.},
	urldate = {2025-09-14},
	publisher = {arXiv},
	author = {Schreiner, Mathias and Winther, Ole and Olsson, Simon},
	month = oct,
	year = {2023},
	note = {arXiv:2305.18046 [physics]},
	keywords = {Physics - Chemical Physics, Statistics - Machine Learning},
	annote = {Comment: 23 pages, 12 figures, 4 tables, NeurIPS 2023},
	file = {Preprint PDF:C\:\\Users\\luke-\\Zotero\\storage\\4SRIPZAZ\\Schreiner et al. - 2023 - Implicit Transfer Operator Learning Multiple Time-Resolution Surrogates for Molecular Dynamics.pdf:application/pdf;Snapshot:C\:\\Users\\luke-\\Zotero\\storage\\B4KEXYQ8\\2305.html:text/html},
}

@misc{zhou_strategies_2024,
    title = {Strategies for {Pretraining} {Neural} {Operators}},
    url = {http://arxiv.org/abs/2406.08473},
    doi = {10.48550/arXiv.2406.08473},
    abstract = {Pretraining for partial differential equation (PDE) modeling has recently shown promise in scaling neural operators across datasets to improve generalizability and performance. Despite these advances, our understanding of how pretraining affects neural operators is still limited; studies generally propose tailored architectures and datasets that make it challenging to compare or examine different pretraining frameworks. To address this, we compare various pretraining methods without optimizing architecture choices to characterize pretraining dynamics on different models and datasets as well as to understand its scaling and generalization behavior. We find that pretraining is highly dependent on model and dataset choices, but in general transfer learning or physics-based pretraining strategies work best. In addition, pretraining performance can be further improved by using data augmentations. Lastly, pretraining can be additionally beneficial when fine-tuning in scarce data regimes or when generalizing to downstream data similar to the pretraining distribution. Through providing insights into pretraining neural operators for physics prediction, we hope to motivate future work in developing and evaluating pretraining methods for PDEs.},
    urldate = {2025-09-16},
    publisher = {arXiv},
    author = {Zhou, Anthony and Lorsung, Cooper and Hemmasian, AmirPouya and Farimani, Amir Barati},
    month = oct,
    year = {2024},
    note = {arXiv:2406.08473 [cs]},
    keywords = {Computer Science - Machine Learning},
}

@misc{damian_label_2021,
    title = {Label {Noise} {SGD} {Provably} {Prefers} {Flat} {Global} {Minimizers}},
    url = {http://arxiv.org/abs/2106.06530},
    doi = {10.48550/arXiv.2106.06530},
    abstract = {In overparametrized models, the noise in stochastic gradient descent (SGD) implicitly regularizes the optimization trajectory and determines which local minimum SGD converges to. Motivated by empirical studies that demonstrate that training with noisy labels improves generalization, we study the implicit regularization effect of SGD with label noise. We show that SGD with label noise converges to a stationary point of a regularized loss \$L({\textbackslash}theta) +{\textbackslash}lambda R({\textbackslash}theta)\$, where \$L({\textbackslash}theta)\$ is the training loss, \${\textbackslash}lambda\$ is an effective regularization parameter depending on the step size, strength of the label noise, and the batch size, and \$R({\textbackslash}theta)\$ is an explicit regularizer that penalizes sharp minimizers. Our analysis uncovers an additional regularization effect of large learning rates beyond the linear scaling rule that penalizes large eigenvalues of the Hessian more than small ones. We also prove extensions to classification with general loss functions, SGD with momentum, and SGD with general noise covariance, significantly strengthening the prior work of Blanc et al. to global convergence and large learning rates and of HaoChen et al. to general models.},
    urldate = {2025-09-16},
    publisher = {arXiv},
    author = {Damian, Alex and Ma, Tengyu and Lee, Jason D.},
    month = dec,
    year = {2021},
    note = {arXiv:2106.06530 [cs]},
    keywords = {Computer Science - Information Theory, Computer Science - Machine Learning, Mathematics - Information Theory, Mathematics - Optimization and Control, Statistics - Machine Learning},
}

@misc{haochen_shape_2020,
    title = {Shape {Matters}: {Understanding} the {Implicit} {Bias} of the {Noise} {Covariance}},
    shorttitle = {Shape {Matters}},
    url = {http://arxiv.org/abs/2006.08680},
    doi = {10.48550/arXiv.2006.08680},
    abstract = {The noise in stochastic gradient descent (SGD) provides a crucial implicit regularization effect for training overparameterized models. Prior theoretical work largely focuses on spherical Gaussian noise, whereas empirical studies demonstrate the phenomenon that parameter-dependent noise -- induced by mini-batches or label perturbation -- is far more effective than Gaussian noise. This paper theoretically characterizes this phenomenon on a quadratically-parameterized model introduced by Vaskevicius et el. and Woodworth et el. We show that in an over-parameterized setting, SGD with label noise recovers the sparse ground-truth with an arbitrary initialization, whereas SGD with Gaussian noise or gradient descent overfits to dense solutions with large norms. Our analysis reveals that parameter-dependent noise introduces a bias towards local minima with smaller noise variance, whereas spherical Gaussian noise does not. Code for our project is publicly available.},
    urldate = {2025-09-16},
    publisher = {arXiv},
    author = {HaoChen, Jeff Z. and Wei, Colin and Lee, Jason D. and Ma, Tengyu},
    month = jun,
    year = {2020},
    note = {arXiv:2006.08680 [cs]},
    keywords = {Computer Science - Machine Learning, Statistics - Machine Learning},
}

@article{thiemann2025force,
  title={Force-free molecular dynamics through autoregressive equivariant networks},
  author={Thiemann, Fabian L and Resch{\"u}tzegger, Thiago and Esposito, Massimiliano and Taddese, Tseden and Olarte-Plata, Juan D and Martelli, Fausto},
  journal={arXiv preprint arXiv:2503.23794},
  year={2025}
}

@article{kmiecik_coarse-grained_2016,
    title = {Coarse-{Grained} {Protein} {Models} and {Their} {Applications}},
    volume = {116},
    issn = {1520-6890},
    doi = {10.1021/acs.chemrev.6b00163},
    abstract = {The traditional computational modeling of protein structure, dynamics, and interactions remains difficult for many protein systems. It is mostly due to the size of protein conformational spaces and required simulation time scales that are still too large to be studied in atomistic detail. Lowering the level of protein representation from all-atom to coarse-grained opens up new possibilities for studying protein systems. In this review we provide an overview of coarse-grained models focusing on their design, including choices of representation, models of energy functions, sampling of conformational space, and applications in the modeling of protein structure, dynamics, and interactions. A more detailed description is given for applications of coarse-grained models suitable for efficient combinations with all-atom simulations in multiscale modeling strategies.},
    language = {eng},
    number = {14},
    journal = {Chemical Reviews},
    author = {Kmiecik, Sebastian and Gront, Dominik and Kolinski, Michal and Wieteska, Lukasz and Dawid, Aleksandra Elzbieta and Kolinski, Andrzej},
    month = jul,
    year = {2016},
    pmid = {27333362},
    keywords = {Membrane Proteins, Models, Molecular, Molecular Docking Simulation, Molecular Dynamics Simulation, Monte Carlo Method, Peptides, Protein Conformation, Protein Folding},
    pages = {7898--7936},
}

@misc{zheng_learning_2021,
    title = {Learning {Large}-{Time}-{Step} {Molecular} {Dynamics} with {Graph} {Neural} {Networks}},
    url = {http://arxiv.org/abs/2111.15176},
    doi = {10.48550/arXiv.2111.15176},
    abstract = {Molecular dynamics (MD) simulation predicts the trajectory of atoms by solving Newton's equation of motion with a numeric integrator. Due to physical constraints, the time step of the integrator need to be small to maintain sufficient precision. This limits the efficiency of simulation. To this end, we introduce a graph neural network (GNN) based model, MDNet, to predict the evolution of coordinates and momentum with large time steps. In addition, MDNet can easily scale to a larger system, due to its linear complexity with respect to the system size. We demonstrate the performance of MDNet on a 4000-atom system with large time steps, and show that MDNet can predict good equilibrium and transport properties, well aligned with standard MD simulations.},
    urldate = {2025-11-15},
    publisher = {arXiv},
    author = {Zheng, Tianze and Gao, Weihao and Wang, Chong},
    month = dec,
    year = {2021},
    note = {arXiv:2111.15176 [physics]},
    keywords = {Computer Science - Machine Learning, Physics - Computational Physics},
}

@misc{bigi_flashmd_2025,
    title = {{FlashMD}: long-stride, universal prediction of molecular dynamics},
    shorttitle = {{FlashMD}},
    url = {http://arxiv.org/abs/2505.19350},
    doi = {10.48550/arXiv.2505.19350},
    abstract = {Molecular dynamics (MD) provides insights into atomic-scale processes by integrating over time the equations that describe the motion of atoms under the action of interatomic forces. Machine learning models have substantially accelerated MD by providing inexpensive predictions of the forces, but they remain constrained to minuscule time integration steps, which are required by the fast time scale of atomic motion. In this work, we propose FlashMD, a method to predict the evolution of positions and momenta over strides that are between one and two orders of magnitude longer than typical MD time steps. We incorporate considerations on the mathematical and physical properties of Hamiltonian dynamics in the architecture, generalize the approach to allow the simulation of any thermodynamic ensemble, and carefully assess the possible failure modes of such a long-stride MD approach. We validate FlashMD's accuracy in reproducing equilibrium and time-dependent properties, using both system-specific and general-purpose models, extending the ability of MD simulation to reach the long time scales needed to model microscopic processes of high scientific and technological relevance.},
    urldate = {2025-11-15},
    publisher = {arXiv},
    author = {Bigi, Filippo and Chong, Sanggyu and Kristiadi, Agustinus and Ceriotti, Michele},
    month = oct,
    year = {2025},
    note = {arXiv:2505.19350 [physics]},
    keywords = {Computer Science - Machine Learning, Physics - Chemical Physics},
}

@misc{klein_timewarp_2023,
    title = {Timewarp: {Transferable} {Acceleration} of {Molecular} {Dynamics} by {Learning} {Time}-{Coarsened} {Dynamics}},
    shorttitle = {Timewarp},
    url = {http://arxiv.org/abs/2302.01170},
    doi = {10.48550/arXiv.2302.01170},
    abstract = {Molecular dynamics (MD) simulation is a widely used technique to simulate molecular systems, most commonly at the all-atom resolution where equations of motion are integrated with timesteps on the order of femtoseconds (\$1{\textbackslash}textrm\{fs\}=10{\textasciicircum}\{-15\}{\textbackslash}textrm\{s\}\$). MD is often used to compute equilibrium properties, which requires sampling from an equilibrium distribution such as the Boltzmann distribution. However, many important processes, such as binding and folding, occur over timescales of milliseconds or beyond, and cannot be efficiently sampled with conventional MD. Furthermore, new MD simulations need to be performed for each molecular system studied. We present Timewarp, an enhanced sampling method which uses a normalising flow as a proposal distribution in a Markov chain Monte Carlo method targeting the Boltzmann distribution. The flow is trained offline on MD trajectories and learns to make large steps in time, simulating the molecular dynamics of \$10{\textasciicircum}\{5\} - 10{\textasciicircum}\{6\}{\textbackslash}:{\textbackslash}textrm\{fs\}\$. Crucially, Timewarp is transferable between molecular systems: once trained, we show that it generalises to unseen small peptides (2-4 amino acids) at all-atom resolution, exploring their metastable states and providing wall-clock acceleration of sampling compared to standard MD. Our method constitutes an important step towards general, transferable algorithms for accelerating MD.},
    urldate = {2025-11-15},
    publisher = {arXiv},
    author = {Klein, Leon and Foong, Andrew Y. K. and Fjelde, Tor Erlend and Mlodozeniec, Bruno and Brockschmidt, Marc and Nowozin, Sebastian and Noé, Frank and Tomioka, Ryota},
    month = dec,
    year = {2023},
    note = {arXiv:2302.01170 [stat]},
    keywords = {Computer Science - Machine Learning, Condensed Matter - Statistical Mechanics, Physics - Chemical Physics, Statistics - Machine Learning},
}

@article{hsu_score_2024,
    title = {Score {Dynamics}: {Scaling} {Molecular} {Dynamics} with {Picoseconds} {Time} {Steps} via {Conditional} {Diffusion} {Model}},
    volume = {20},
    copyright = {https://creativecommons.org/licenses/by-nc-nd/4.0/},
    issn = {1549-9618, 1549-9626},
    shorttitle = {Score {Dynamics}},
    url = {https://pubs.acs.org/doi/10.1021/acs.jctc.3c01361},
    doi = {10.1021/acs.jctc.3c01361},
    language = {en},
    number = {6},
    urldate = {2025-11-15},
    journal = {Journal of Chemical Theory and Computation},
    author = {Hsu, Tim and Sadigh, Babak and Bulatov, Vasily and Zhou, Fei},
    month = mar,
    year = {2024},
    pages = {2335--2348},
}

@misc{yu_unisim_2025,
    title = {{UniSim}: {A} {Unified} {Simulator} for {Time}-{Coarsened} {Dynamics} of {Biomolecules}},
    shorttitle = {{UniSim}},
    url = {http://arxiv.org/abs/2506.03157},
    doi = {10.48550/arXiv.2506.03157},
    abstract = {Molecular Dynamics (MD) simulations are essential for understanding the atomic-level behavior of molecular systems, giving insights into their transitions and interactions. However, classical MD techniques are limited by the trade-off between accuracy and efficiency, while recent deep learning-based improvements have mostly focused on single-domain molecules, lacking transferability to unfamiliar molecular systems. Therefore, we propose {\textbackslash}textbf\{Uni\}fied {\textbackslash}textbf\{Sim\}ulator (UniSim), which leverages cross-domain knowledge to enhance the understanding of atomic interactions. First, we employ a multi-head pretraining approach to learn a unified atomic representation model from a large and diverse set of molecular data. Then, based on the stochastic interpolant framework, we learn the state transition patterns over long timesteps from MD trajectories, and introduce a force guidance module for rapidly adapting to different chemical environments. Our experiments demonstrate that UniSim achieves highly competitive performance across small molecules, peptides, and proteins.},
    urldate = {2025-11-15},
    publisher = {arXiv},
    author = {Yu, Ziyang and Huang, Wenbing and Liu, Yang},
    month = oct,
    year = {2025},
    note = {arXiv:2506.03157 [q-bio]},
    keywords = {Computer Science - Machine Learning, Quantitative Biology - Biomolecules},
}

@article{kovacs_mace-off_2025,
    title = {{MACE}-{OFF}: {Short}-{Range} {Transferable} {Machine} {Learning} {Force} {Fields} for {Organic} {Molecules}},
    volume = {147},
    issn = {0002-7863},
    shorttitle = {{MACE}-{OFF}},
    url = {https://doi.org/10.1021/jacs.4c07099},
    doi = {10.1021/jacs.4c07099},
    abstract = {Classical empirical force fields have dominated biomolecular simulations for over 50 years. Although widely used in drug discovery, crystal structure prediction, and biomolecular dynamics, they generally lack the accuracy and transferability required for first-principles predictive modeling. In this paper, we introduce MACE-OFF, a series of short-range transferable force fields for organic molecules created using state-of-the-art machine learning technology and first-principles reference data computed with a high level of quantum mechanical theory. MACE-OFF demonstrates the remarkable capabilities of short-range models by accurately predicting a wide variety of gas- and condensed-phase properties of molecular systems. It produces accurate, easy-to-converge dihedral torsion scans of unseen molecules as well as reliable descriptions of molecular crystals and liquids, including quantum nuclear effects. We further demonstrate the capabilities of MACE-OFF by determining free energy surfaces in explicit solvent as well as the folding dynamics of peptides and nanosecond simulations of a fully solvated protein. These developments enable first-principles simulations of molecular systems for the broader chemistry community at high accuracy and relatively low computational cost.},
    number = {21},
    urldate = {2025-11-19},
    journal = {Journal of the American Chemical Society},
    author = {Kovács, Dávid Péter and Moore, J. Harry and Browning, Nicholas J. and Batatia, Ilyes and Horton, Joshua T. and Pu, Yixuan and Kapil, Venkat and Witt, William C. and Magdău, Ioan-Bogdan and Cole, Daniel J. and Csányi, Gábor},
    month = may,
    year = {2025},
    note = {Publisher: American Chemical Society},
    pages = {17598--17611},
}

@article{schwerdtfeger_100_2024,
    title = {100 {Years} of the {Lennard}-{Jones} {Potential}},
    volume = {20},
    issn = {1549-9618},
    url = {https://doi.org/10.1021/acs.jctc.4c00135},
    doi = {10.1021/acs.jctc.4c00135},
    abstract = {It is now 100 years since Lennard-Jones published his first paper introducing the now famous potential that bears his name. It is therefore timely to reflect on the many achievements, as well as the limitations, of this potential in the theory of atomic and molecular interactions, where applications range from descriptions of intermolecular forces to molecules, clusters, and condensed matter.},
    number = {9},
    urldate = {2025-11-19},
    journal = {Journal of Chemical Theory and Computation},
    author = {Schwerdtfeger, Peter and Wales, David J.},
    month = may,
    year = {2024},
    note = {Publisher: American Chemical Society},
    pages = {3379--3405},
}

@article{larsen_atomic_2017,
    title = {The atomic simulation environment—a {Python} library for working with atoms},
    volume = {29},
    url = {http://stacks.iop.org/0953-8984/29/i=27/a=273002},
    abstract = {The atomic simulation environment (ASE) is a software package written in the Python programming language with the aim of setting up, steering, and analyzing atomistic simulations. In ASE, tasks are fully scripted in Python. The powerful syntax of Python combined with the NumPy array library make it possible to perform very complex simulation tasks. For example, a sequence of calculations may be performed with the use of a simple ‘for-loop’ construction. Calculations of energy, forces, stresses and other quantities are performed through interfaces to many external electronic structure codes or force fields using a uniform interface. On top of this calculator interface, ASE provides modules for performing many standard simulation tasks such as structure optimization, molecular dynamics, handling of constraints and performing nudged elastic band calculations.},
    number = {27},
    journal = {Journal of Physics: Condensed Matter},
    author = {Larsen, Ask Hjorth and Mortensen, Jens Jørgen and Blomqvist, Jakob and Castelli, Ivano E. and Christensen, Rune and {Marcin Dułak} and Friis, Jesper and Groves, Michael N. and Hammer, Bjørk and Hargus, Cory and Hermes, Eric D. and Jennings, Paul C. and Jensen, Peter Bjerre and Kermode, James and Kitchin, John R. and Kolsbjerg, Esben Leonhard and Kubal, Joseph and {Kristen Kaasbjerg} and Lysgaard, Steen and Maronsson, Jón Bergmann and Maxson, Tristan and Olsen, Thomas and Pastewka, Lars and {Andrew Peterson} and Rostgaard, Carsten and Schiøtz, Jakob and Schütt, Ole and Strange, Mikkel and Thygesen, Kristian S. and {Tejs Vegge} and Vilhelmsen, Lasse and Walter, Michael and Zeng, Zhenhua and Jacobsen, Karsten W.},
    year = {2017},
    pages = {273002},
}

@article{case_ambertools_2023,
    title = {{AmberTools}},
    volume = {63},
    issn = {1549-9596},
    url = {https://doi.org/10.1021/acs.jcim.3c01153},
    doi = {10.1021/acs.jcim.3c01153},
    abstract = {AmberTools is a free and open-source collection of programs used to set up, run, and analyze molecular simulations. The newer features contained within AmberTools23 are briefly described in this Application note.},
    number = {20},
    urldate = {2025-11-23},
    journal = {Journal of Chemical Information and Modeling},
    author = {Case, David A. and Aktulga, Hasan Metin and Belfon, Kellon and Cerutti, David S. and Cisneros, G. Andrés and Cruzeiro, Vinícius Wilian D. and Forouzesh, Negin and Giese, Timothy J. and Götz, Andreas W. and Gohlke, Holger and Izadi, Saeed and Kasavajhala, Koushik and Kaymak, Mehmet C. and King, Edward and Kurtzman, Tom and Lee, Tai-Sung and Li, Pengfei and Liu, Jian and Luchko, Tyler and Luo, Ray and Manathunga, Madushanka and Machado, Matias R. and Nguyen, Hai Minh and O’Hearn, Kurt A. and Onufriev, Alexey V. and Pan, Feng and Pantano, Sergio and Qi, Ruxi and Rahnamoun, Ali and Risheh, Ali and Schott-Verdugo, Stephan and Shajan, Akhil and Swails, Jason and Wang, Junmei and Wei, Haixin and Wu, Xiongwu and Wu, Yongxian and Zhang, Shi and Zhao, Shiji and Zhu, Qiang and Cheatham, Thomas E. III and Roe, Daniel R. and Roitberg, Adrian and Simmerling, Carlos and York, Darrin M. and Nagan, Maria C. and Merz, Kenneth M. Jr.},
    month = oct,
    year = {2023},
    note = {Publisher: American Chemical Society},
    pages = {6183--6191},
}

\appendix
\clearpage

\renewcommand{\contentsname}{\textsc{Appendices}}
\addcontentsline{toc}{section}{Appendices}  
\addtocontents{toc}{\protect\setcounter{tocdepth}{4}} 
\tableofcontents

\clearpage
\section{LLM Acknowledgment}

The authors acknowledge the use of LLM for grammar corrections and for improving the clarity of the manuscript. The LLM was not used for generating original ideas, content, or experimental results. All conceptual contributions, analyses, and conclusions presented in this work are entirely from the authors.

\section{Background}

This section provides an introduction to the preliminaries of group theory. 

\subsection{Groups}
\label{sec:groups}
A group \( (G, \circ) \) consists of a non-empty set \( G \) and a binary operation \(\circ: G \times G \to G\) satisfying the following axioms:
\begin{enumerate}
    \item \textbf{Closure:} For all \( a, b \in G \), the result of the operation \( a \circ b \) is also in \( G \): \( a \circ b \in G \).
    \item \textbf{Identity Element:} There exists an element \( e \in G \) such that, for all \( a \in G \), \( a \circ e = e \circ a = a \).
    \item \textbf{Associativity:} For all \( a, b, c \in G \), \( (a \circ b) \circ c = a \circ (b \circ c) \).
    \item \textbf{Inverses:} For each \( a \in G \), there exists an element \( a^{-1} \in G \) such that \( a \circ a^{-1} = a^{-1} \circ a = e \).
\end{enumerate}

In general, not all groups are abelian. That is, the binary operation \(\circ\) does not necessarily commute:  
$
g \circ h = h \circ g,  \forall g,h \in G.
$

\subsection{Group Representations}
\label{sec:group_representations}
A group representation is a homomorphism  \(\rho: G \to GL(V)\) that assigns an \(n \times n\) matrix to each group element \(g \in G\), realizing it as a linear transformation. Representations must preserve the binary operation for all members of the group \(G\) such that:
\[
\rho(g \circ h) = \rho(g) \rho (h), \quad \forall g,h \in G.
\]

A representation \(\rho(g)\) is reducible if it can be represented as the direct sum of other representations:
\[
\rho(g) = \rho_1(g) \oplus \rho_2(g), \quad \forall g \in G.
\]

For example, a reducible \(4 \times 4\) representation of \(SU(2)\) can be decomposed into two \(2 \times 2\) sub-representations:
\[
\rho(g) =
\begin{bmatrix}
\rho_1(g) & 0 \\
0 & \rho_2(g)
\end{bmatrix}, \quad \forall g \in SU(2),
\]

where \(\rho_1(g)\) and \(\rho_2(g)\) are the following irreducible representations of \(SU(2)\):

\[
\rho_1(g) =
\begin{bmatrix}
e^{i\theta} & 0 \\
0 & e^{-i\theta}
\end{bmatrix}, \quad
\rho_2(g) =
\begin{bmatrix}
e^{i\phi} & 0 \\
0 & e^{-i\phi}
\end{bmatrix}.
\]
By contrast, irreducible representations or \textit{irreps} cannot be represented as such a direct sum. Formally, they have no non-trivial invariant subspaces \(W \subset V\) such that \(\rho(g)W \subset W, \forall g \in G\).

Representing inputs as irreps ensures equivariance by constraining each feature to transform predictably under group actions. Given \( V = \bigoplus_{i} V_i \) with irreps \( V_i \), the transformation of an input \( x \in V \) under \( g \in G \) is:  
\[
\rho(g)x = \bigoplus_{i} \rho_i(g)x_i.
\]  
Each component \( x_i \) transforms independently according to \(\rho_i\), preserving symmetry. Scalars remain invariant, while vectors rotate according to standard representations. This decomposition prevents the mixing of differently transforming features, ensuring that all subsequent operations, linear or non-linear, respect the group’s symmetry, thereby maintaining equivariance throughout the network.



Intuitively, the tensor products capture interactions between features in a manner akin to multiplication, producing a higher-dimensional representation.
Crucially, this new representation is reducible, so we may decompose it into irreps:
\[
V \otimes V \cong \bigoplus_k V_k.
\]
It is this decomposition that allows the network to project onto individual irreps, achieving non-trivial feature mixing whilst preserving symmetry constraints.

\section{Datasets}
\label{sec:dataset_construction}

We present a visualization of a sample trajectory of uracil from three datasets in Figure \ref{fig:uracil_trajectory}.


\begin{figure}[h]
    \centering
    \includegraphics[width=0.8\linewidth]{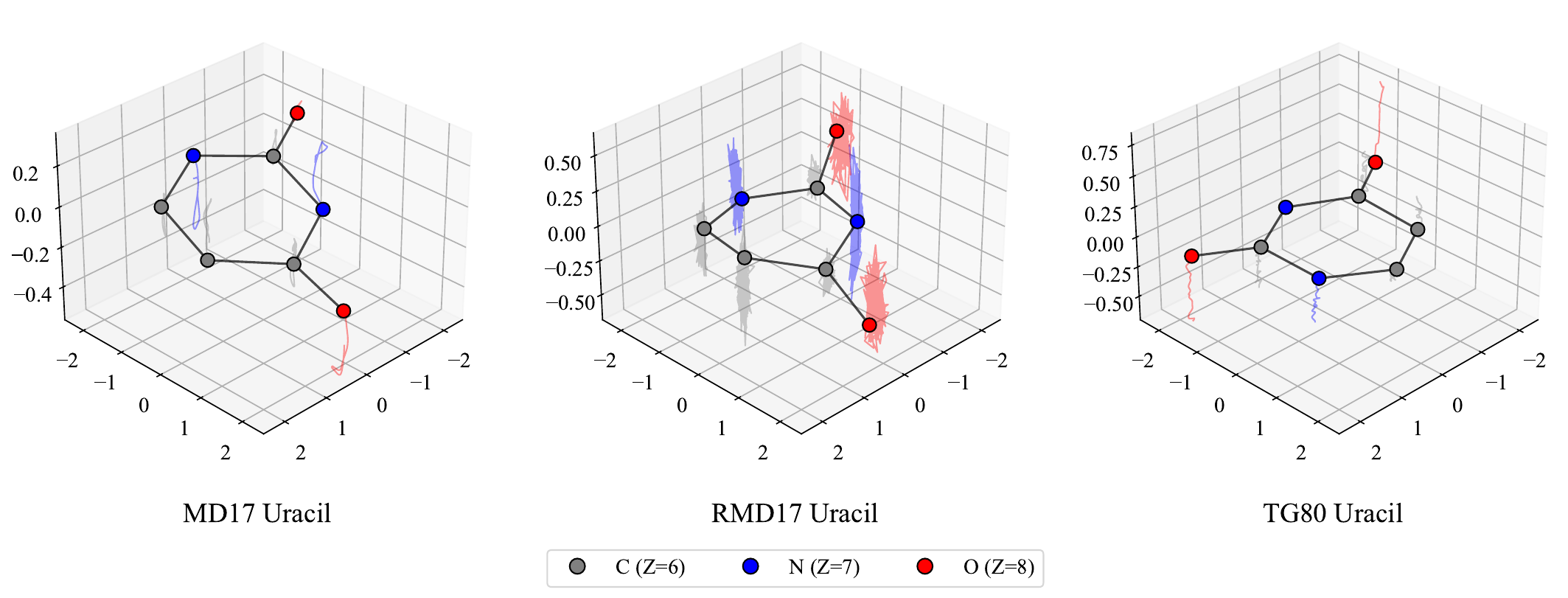}
    \caption{3000 timesteps of uracil trajectory from MD17, RMD17, and TG80.}
    \label{fig:uracil_trajectory}
\end{figure}

\subsection{Licences}
\begin{table}[ht]
\centering
\caption{Dataset sources and licenses. We release TG80 under the MIT license.}
\label{tab:dataset_sources}
\rowcolors{2}{gray!10}{white}
\resizebox{\linewidth}{!}{
    \begin{tabular}{lll}
    \toprule
    \textbf{Dataset} & \textbf{Source} & \textbf{License} \\
    \midrule
    MD17            & \url{https://www.sgdml.org/} & \texttt{CC BY 4.0}\\
    RMD17           & \url{https://archive.materialscloud.org/record/2020.82} & \texttt{CC Zero V1.0 Universal} \\
    MD22           & \url{https://www.sgdml.org/} & \texttt{CC BY 4.0} \\
    TG80            & To be released at \url{URL} & \texttt{MIT}\\
    \bottomrule
    \end{tabular}}
\end{table}

\subsection{Model Inputs and the Dataloader}
\label{sec:dataloader}
Our compound representations follow \cite{shi_learning_2021, xu_equivariant_2024}. We model hydrogen atoms implicitly and concatenate the position and velocity norms for each node \(i\) with their respective vectors. Unlike their implementations, we avoid explicit graph construction and do not include edge labels describing atomic bond geometries.

We duplicate all frames \(\mathcal{G}^{(t)} \to \{\mathcal{G}^{(t)}\}^{P}\) during dataset initialization, producing a five-fold improvement in throughput compared to previous dataloaders in \Cref{tab:dataloader_perf}. 

\begin{table}[ht]
    \centering
    \caption{Mean time (seconds) to produce 10\,000 batches over 100 benchmark runs. Batch size = 100, 500 samples, \(\Delta t = 3000\), 500 warmup batches.}
    \label{tab:dataloader_perf}
    \begin{tabular}{lcccc}
        \toprule
         & Aspirin & Ethanol & Naphthalene & Toluene \\
         \midrule
         \gls{egno} & \(0.060{\scriptstyle \pm0.024}\) & \(0.024{\scriptstyle \pm0.016}\) & \(0.056{\scriptstyle \pm0.024}\) & \(0.039{\scriptstyle \pm0.024}\) \\
         \gls{atoms} & \(\textbf{0.005}{\scriptstyle \pm0.002}\) & \(\textbf{0.007}{\scriptstyle \pm0.002}\) & \(\textbf{0.008}{\scriptstyle \pm0.004}\) & \(\textbf{0.006}{\scriptstyle \pm0.002}\) \\
         \bottomrule
    \end{tabular}
\end{table}

\newpage
\subsection{Numerical Stability}
We evaluate the numerical stability of MD17, RMD17, and TG80. MD17 benzene exhibits substantial center-of-mass drift in \Cref{fig:MD17_numerical_quality}, which is also partially visible in the consistent motion trails shown in \Cref{fig:MD17_trajectories}. RMD17 exhibits improved stability, with no center-of-mass drift exceeding \(1\times10^4\). TG80 shows the lowest drift of all datasets, and expectedly includes more molecules with high per-step drift (due to more complex sterically hindered geometries).
\begin{figure}[H]
    \begin{subfigure}{0.49\textwidth}
        \centering
        \includegraphics[width=\linewidth]{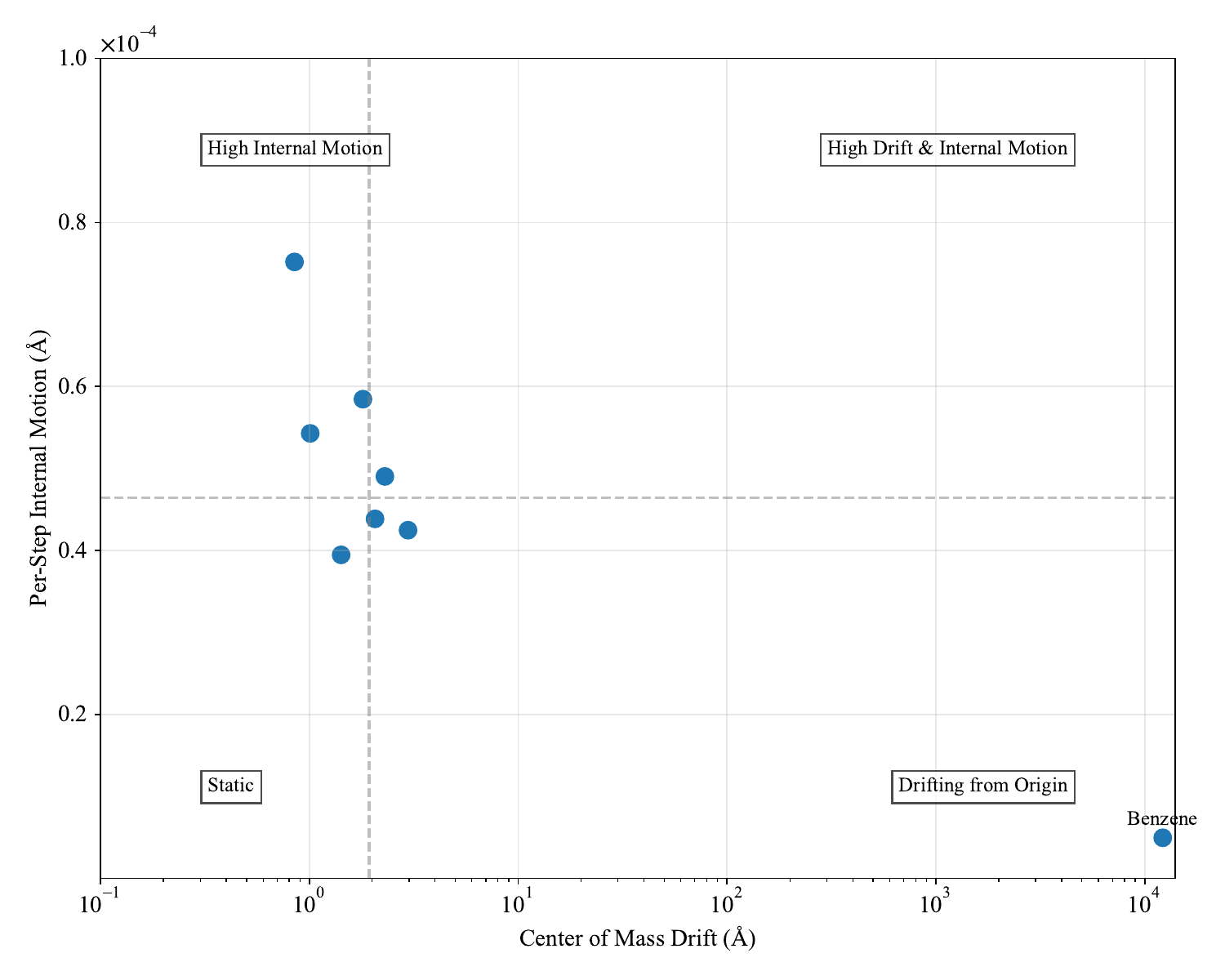}
        \caption{MD17 molecules are largely consistent, except for benzene, which exhibits substantial drift.}
        \label{fig:MD17_numerical_quality}
    \end{subfigure}
    \hspace{0.02\textwidth}
    \begin{subfigure}{0.49\textwidth}
        \centering
        \includegraphics[width=\linewidth]{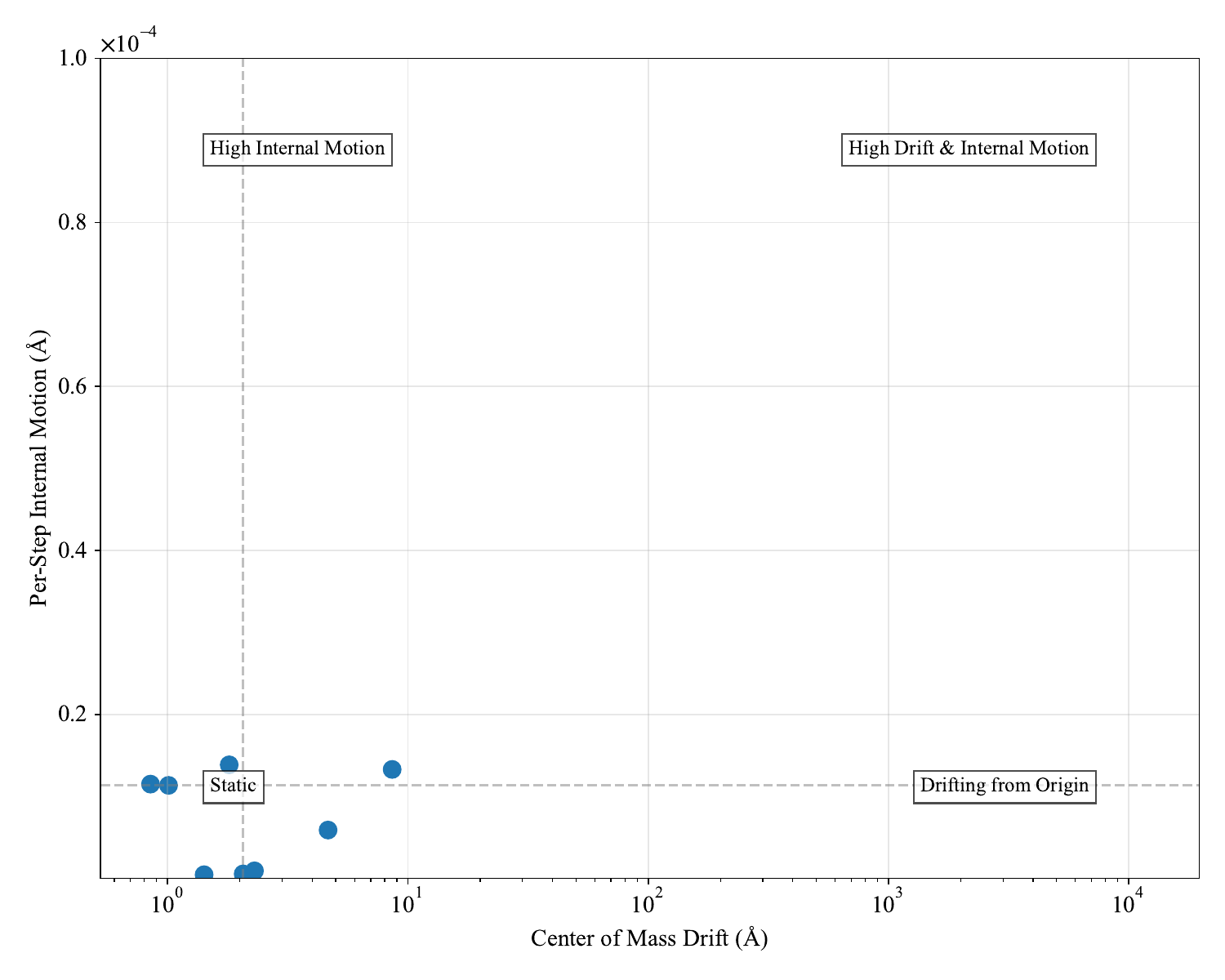}
        \caption{RMD17 molecules are more numerically stable, supporting their use in future benchmarks.}
        \label{fig:RMD17_numerical_quality}
    \end{subfigure}
    \begin{subfigure}{\textwidth}
        \centering
        \includegraphics[width=\linewidth]{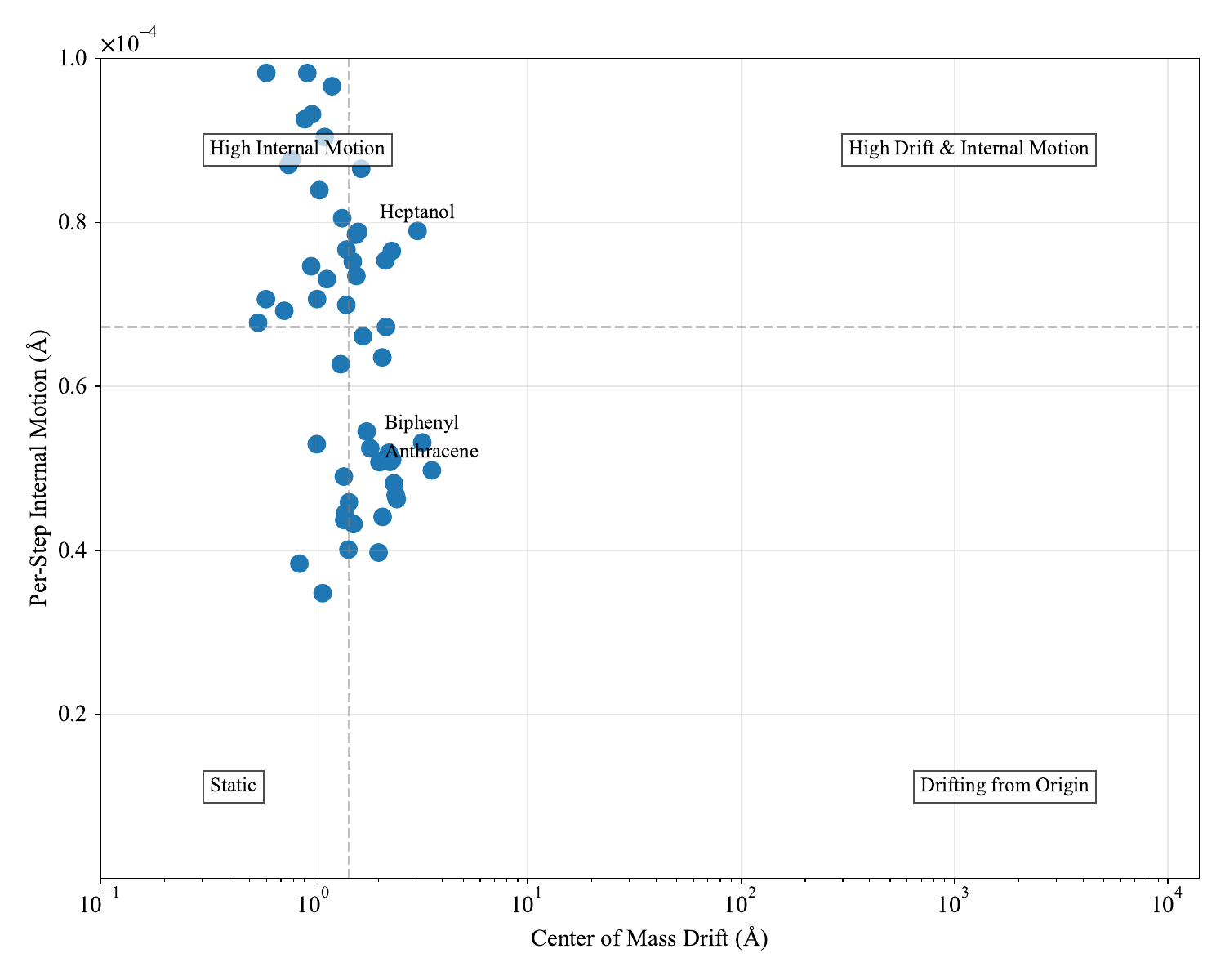}
        \caption{TG80 molecules exhibit the lowest centre-of-mass drift among the evaluated MD datasets.}
        \label{fig:TG80_numerical_quality}
    \end{subfigure}
    \caption{Comparison of numerical stability across MD17, RMD17, and TG80 datasets. Dashed lines denote the mean centre-of-mass drift and per-step motion; datapoints exceeding two standard deviations are annotated.}
    \label{fig:numerical_stability}
\end{figure}

\subsection{TG80 Generation Algorithm}
\label{sec:tg80_algorithm}
{We first recall the definition of Tanimoto \(\operatorname{T}\) similarity between two bit vectors \(X,Y\) as
\[
\operatorname{T}(X,Y) = \frac{|X \cap Y|}{|X \cup Y|},
\]
which is identical to the definition of the Jaccard similarity in this case \cite{rogers_computer_1960}.}

To generate TG80, we randomly shuffled the PubChem dataset, then iterated through all compounds until 40 were found that matched the following criteria:
\begin{wrapfigure}{r}{0.4\linewidth}
  \vspace{1em}
  \centering
  \includegraphics[width=\linewidth]{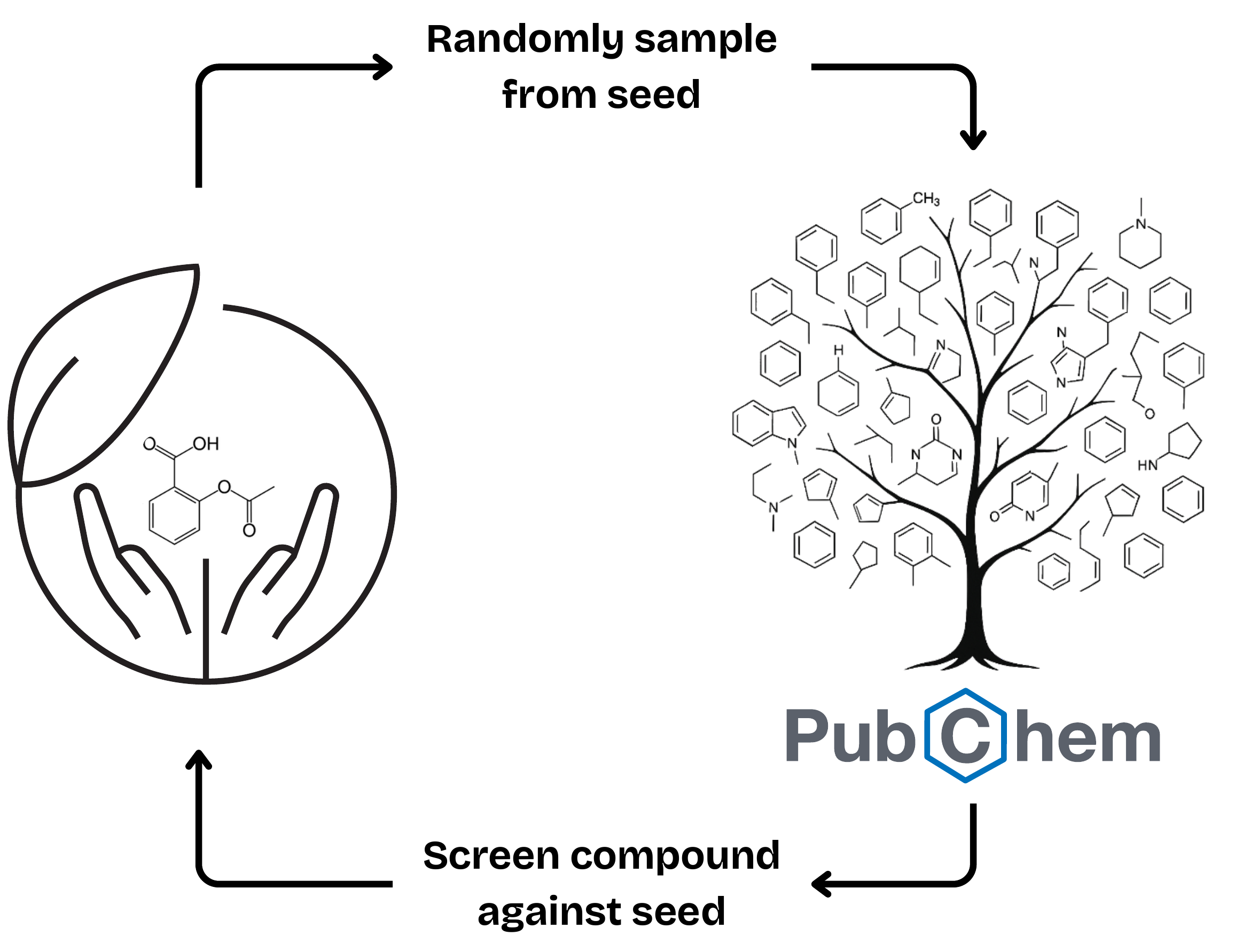}
  \caption{{Construction of TG80 from an initial seed using the PubChem database.}}
  \label{fig:dataset_construction}
  \vspace{-5em}
\end{wrapfigure}
{\begin{enumerate}
    \item \gls{smiles} encode a valid molecular structure
    \item No more heavy atoms than the corresponding seed molecule
    \item Only contain \(\{\text{C, H, O, N}\}\) atoms
    \item No more than five oxygen atoms
    \item No more than three nitrogen atoms
    \item No disconnected molecular fragments (e.g., salts)
    \item Tanimoto similarity to at least one seed molecule greater than 0.875, less than 0.925
    \item Tanimoto similarity to a previously selected molecule is no more than 0.2
\end{enumerate}}
This controlled selection procedure generates structurally analogous subsets around each seed molecule whilst preventing convergence to highly similar molecules across different seed groups.

{Only 2,488 of the 173 million in the PubChem library satisfied the filtration criteria above. This low yield largely reflects the cumulative effect of criterion 8: as more molecules are added, it becomes harder to find candidates sufficiently dissimilar to all prior selections. Given that the average Tanimoto similarity to our seed set was just 0.1492, the 0.875 threshold was highly selective. Dataset generation code is available at ANONYMIZED.}

\subsection{Molecular Dynamics Simulations}
We present a complete overview of the \gls{dft} parameters used to generate MD17 \cite{chmiela_machine_2017}, RMD17 \cite{christensen_role_2020}, MD22 \cite{chmiela_accurate_2023}, and TG80.
\begin{table}[H]
    \centering
    \caption{An overview of the methodologies used to generate the \gls{md} datasets featured.}
    \label{tab:dft_techniques}
    \begin{tabular}{lccccc}
        \toprule
        & \gls{dft} Functional & Dispersion Corrections & Basis set & Timestep & Temperature \\
        \midrule
        MD17 & PBE & TS & NAO & 0.5 fs & 500K  \\
        RMD17 & PBE & None & def2-SVP & 0.5 fs & 500K  \\
        MD22 & PBE & MBD & NAO & 1.0 fs & 500K \\
        TG80 & PBE & \(\Delta 4\) & def2-SVP & 1.0 fs & 300K \\
        \bottomrule
    \end{tabular}
\end{table}

\newpage
\section{Architectural Details}
In this section, we discuss the finer architectural details of \gls{atoms} and the architectural adjustments made for our ablations.
\subsection{Fully Equivariant ATOM}
\label{sec:fully_equivariant}
To achieve the full equivariance discussed in \Cref{fig:MD17_ablation}, we employ a canonicalization network approach, which removes Euclidean gauge before learning and then reinstates it afterwards \cite{kaba_equivariance_2023}. This preserves equivariance of the whole network, even with the use of non-equivariant architectures in the trunk. This provides a controlled comparison between equivariant and non-equivariant \gls{atoms}: it enforces equivariance by construction while otherwise keeping the core architecture unchanged.

We first make data translation equivalent by centering
\begin{equation}
    \mu = \frac{1}{N}\sum_{i=1}^N x_i,
    \qquad
    \bar x_i = x_i - \mu.
\end{equation}
We then remove rotations by aligning to the second moment
\begin{equation}
    S = \tfrac{1}{N}\sum_{i=1}^N \bar x_i \bar x_i^\top
    = \sum_{k=1}^3 \lambda_k e_k e_k^\top
    \quad (\lambda_1 \ge \lambda_2 \ge \lambda_3),
\end{equation}
and choose \(e_1\) as the principal axis and orthonormalise
\begin{equation}
e_2 \leftarrow \frac{e_2 - (e_2^\top e_1)e_1}{\|e_2 - (e_2^\top e_1)e_1\|}, \qquad e_3  = e_1 \times e_2.
\end{equation}
We can then form \(Q = [e_1, e_2, e_3] \in \mathrm{SO}(3)\) and canonicalise
\begin{equation}
    \tilde x_i = (x_i-\mu)\,Q,\qquad \tilde v_i = v_i\,Q.
\end{equation}
We fix the eigenvector sign ambiguity using the chirality pseudoscalar \(c_0=\sum_{i=1}^N x_i\times v_i\) at the reference time (flip \(e_1\) to satisfy \(e_1^\top c_0\ge 0\), then adjust \(e_2,e_3\) jointly to keep right-handedness).
Let \(F\) be an arbitrary trunk acting in the canonical frame; with per-atom canonical outputs \(\hat y_i=F(\{\tilde x_j,\tilde v_j\}_{j=1}^N)_i\), we decanonicalise by
\begin{equation}
    y_i=\hat y_i\,Q^\top+\mu.
\end{equation}

This results in exact \(\mathrm{SE}(3)\)-equivariance \citep{kaba_equivariance_2023} and permits non-equivariant trunks.

\subsection{Random-Walk Positional Encodings}
\label{sec:positional_encoding}
In the multitask case, we add row-normalized \gls{rwpe} to equip \gls{atoms} and \gls{egno} with multiscale connectivity features, enhancing their ability to distinguish non-isomorphic graphs \cite{dwivedi_graph_2022, ma_graph_2023}. We first form a \(\varepsilon\)-neighborhood graph from our pointclouds as:
\begin{equation}
    G = (V,E), \quad V = \{i\}, \quad E = \{(i,j): \|(x,y,z)_i-(x,y,z)_j\|_2 < \varepsilon\}.
\end{equation}
We set \(\varepsilon = 1.6\), as covalent bonds typically range from 1.14 \AA{} to 2.0 \AA{} in length \cite{lobato_highs_2021} and highlight that this construction does not necessitate prior knowledge of the graph structure.

Let $\mathbf{A} \in \mathbb{R}^{n \times n}$ denote the adjacency matrix of this graph, and let $\mathbf{D} = \text{diag}(\mathbf{A}\mathbf{1})$ represent its degree matrix. We construct the random walk transition matrix as \(\mathbf{M} = \mathbf{D}^{-1} \mathbf{A}\) then compute matrix powers of $\mathbf{M}$ up to a maximum walk-length $K$, defining the self-return probabilities for each node as
\begin{equation}
    p_i^{(k)} = \left(\mathbf{M}^k\right)_{ii}, \quad k = 1, \dots, K.
\end{equation}

These probabilities are collected into vectors $\mathbf{p}_i \in \mathbb{R}^{K}$ and concatenated with the phase space to form \(\hat{\mathbf{z}} = \left(\mathbf{v} \parallel \mathcal{Z} \parallel \mathbf{p}\right) \in W_\text{in}\). Here, the input feature space is redefined as \(W_\text{in} = V_\text{in} \oplus \rho_0^\text{even} \oplus \left(\rho_0^\text{even} \otimes \mathbb{R}^{K}\right)\), and the subsequent equivariant maps are modified in kind.

\subsection{Value-residual Learning}
\label{sec:value_residuals}
We employ value-residual learning wherein each transformer block receives the output of the first block via a residual connection to stabilize training and information flow through the network \cite{zhou_value_2024}. Inspired by \cite{jordan_nanogpt_2024}, we add a learned coefficient to weight this residual. Here, \(v\) denotes the current block's value output, and \(v_1\) represents the initial block's value. A learnable parameter \(\alpha\) is passed through a sigmoid to obtain the weighting coefficient:
\begin{equation}
    \lambda = \sigma(\alpha).
\end{equation}
The combined output is then given by:
\begin{equation}
    v = \lambda\,v + (1 - \lambda)\,v_1.
\end{equation}
In practice, we lock the first block's \(\lambda\) value to \(0.5\).

\subsection{Delta-prediction}
\label{sec:delta_prediction}
When delta-prediction is enabled, as in \Cref{fig:MD17_ablation}, we incorporate the initial positions \(\mathbf{x}\) as a residual term, reformulating the model as an operator that learns a displacement field rather than predicting absolute positions. We express this as:
\begin{equation}
    \mathbf{x}^\dagger = \operatorname{Project}(\mathbf{x}_{\text{out}}) + \mathbf{x}.
\end{equation}
Although this approach is implemented in both \gls{egnn} and \gls{egno}, we found it was disabled by default in the codebase of the latter \cite{garcia_satorras_en_2021, xu_equivariant_2024}. Based on empirical results from our ablations, \Cref{fig:MD17_ablation}, we argue there is sufficient evidence to discourage the use of delta-prediction in neural operator-based molecular dynamics simulations.

\section{Further Experiments}
\label{app:further_experiment}
We conduct further experiments on single-task and multitask learning. We consider performance under tail discretization and report results on the RMD17 dataset. For multitask learning, we report performance under random cluster assignment and S2S metrics for the experiments in \Cref{sec:multitask_learning}.

\subsection{Further Single-task Learning Experiments}

\label{app:tail_discretization}
\textbf{MD17 with tail discretization} We find the performance of both \gls{egno} and \gls{atoms} on MD17 with tail discretization remains similar to the performance under uniform discretization discussed in \cref{tab:md17_results}.

\begin{table}[H]
    \centering
    \caption{EGNO and ATOM with final frame sampling. Upper part: S2S \gls{mse}. Lower part: S2T \gls{mse}.}
    \label{tab:md17_finalframe_results}
    \resizebox{\linewidth}{!}{
    \begin{tabular}{lcccccccc}
        \toprule
         & Aspirin & Benzene & Ethanol & Malonaldehyde & Naphthalene & Salicylic & Toluene & Uracil \\
        \midrule
        \gls{egno} & \(9.66{\scriptstyle \pm0.12}\) & \(39.09{\scriptstyle \pm2.35}\) & \(4.57{\scriptstyle \pm0.01}\) & \(\mathbf{12.92}{\scriptstyle \pm0.00}\) & \(0.39{\scriptstyle \pm0.00}\) & \(0.88{\scriptstyle \pm0.01}\) & \(10.99{\scriptstyle \pm0.00}\) & \(0.60{\scriptstyle \pm0.00}\) \\
        \gls{atoms} & \(\mathbf{6.38}{\scriptstyle \pm0.17}\) & \(\mathbf{39.03}{\scriptstyle \pm3.32}\) & \(\mathbf{3.62}{\scriptstyle \pm0.08}\) & \(15.26{\scriptstyle \pm0.65}\) & \(\mathbf{0.39}{\scriptstyle \pm0.00}\) & \(\mathbf{0.83}{\scriptstyle \pm0.01}\) & \(\mathbf{5.26}{\scriptstyle \pm0.79}\) & \(\mathbf{0.55}{\scriptstyle \pm0.00}\) \\
        \midrule
        \rowcolor{gray!20} Gap & \(+33.97\%\) & \(+0.15\%\) & \(+20.85\%\) & \(-18.06\%\) & \(+1.62\%\) & \(+4.75\%\) & \(+52.13\%\) & \(+9.28\%\) \\
        \midrule
        \gls{egno} & \(9.66{\scriptstyle \pm0.11}\) & \(39.15{\scriptstyle \pm2.28}\) & \(4.57{\scriptstyle \pm0.01}\) & \(\mathbf{12.92}{\scriptstyle \pm0.01}\) & \(0.39{\scriptstyle \pm0.00}\) & \(0.88{\scriptstyle \pm0.01}\) & \(10.99{\scriptstyle \pm0.00}\) & \(0.60{\scriptstyle \pm0.00}\) \\
        \gls{atoms} & \(\mathbf{6.38}{\scriptstyle \pm0.17}\) & \(\mathbf{39.03}{\scriptstyle \pm3.35}\) & \(\mathbf{3.63}{\scriptstyle \pm0.08}\) & \(15.21{\scriptstyle \pm0.60}\) & \(\mathbf{0.38}{\scriptstyle \pm0.00}\) & \(\mathbf{0.83}{\scriptstyle \pm0.01}\) & \(\mathbf{5.27}{\scriptstyle \pm0.79}\) & \(\mathbf{0.55}{\scriptstyle \pm0.00}\) \\
        \midrule
        \rowcolor{gray!20} Gap & \(+33.91\%\) & \(+0.30\%\) & \(+20.66\%\) & \(-17.71\%\) & \(+1.82\%\) & \(+5.02\%\) & \(+52.08\%\) & \(+9.44\%\) \\
        \bottomrule
    \end{tabular}
    }
\end{table}

\textbf{Revised MD17 Dataset}
\label{sect:rmd17}
We reach performance parity with \gls{egno} on RMD17, shown in \Cref{tab:rmd17_results}.
\nopagebreak
\begin{table}[H]
    \centering
    \caption{EGNO and ATOM with final frame sampling. Upper part: S2S \gls{mse}. Lower part: S2T \gls{mse}.}
    \label{tab:rmd17_finalframe_results}
    \resizebox{\linewidth}{!}{
    \begin{tabular}{lcccccccc}
        \toprule
         & Azobenzene & Ethanol & Malonaldehyde & Naphthalene & Paracetamol & Salicylic & Toluene & Uracil \\
        \midrule
        \gls{egno} & \(8.96{\scriptstyle \pm0.03}\) & \(\mathbf{23.26}{\scriptstyle \pm0.01}\) & \(\mathbf{40.11}{\scriptstyle \pm0.05}\) & \(1.42{\scriptstyle \pm0.00}\) & \(\mathbf{28.08}{\scriptstyle \pm0.01}\) & \(1.06{\scriptstyle \pm0.01}\) & \(\mathbf{28.28}{\scriptstyle \pm0.01}\) & \(0.88{\scriptstyle \pm0.00}\) \\
        \gls{atoms} & \(\mathbf{8.88}{\scriptstyle \pm0.05}\) & \(23.49{\scriptstyle \pm0.14}\) & \(40.29{\scriptstyle \pm0.13}\) & \(\mathbf{1.36}{\scriptstyle \pm0.00}\) & \(30.12{\scriptstyle \pm0.87}\) & \(\mathbf{1.03}{\scriptstyle \pm0.00}\) & \(28.56{\scriptstyle \pm0.04}\) & \(\mathbf{0.86}{\scriptstyle \pm0.00}\) \\
        \midrule
        \rowcolor{gray!20} Gap & \(+0.90\%\) & \(-0.99\%\) & \(-0.45\%\) & \(+3.93\%\) & \(-7.26\%\) & \(+3.10\%\) & \(-0.99\%\) & \(+1.90\%\) \\
        \midrule
        \gls{egno} & \(8.51{\scriptstyle \pm0.03}\) & \(\mathbf{23.61}{\scriptstyle \pm0.03}\) & \(\mathbf{40.32}{\scriptstyle \pm0.08}\) & \(1.42{\scriptstyle \pm0.00}\) & \(\mathbf{28.01}{\scriptstyle \pm0.02}\) & \(1.07{\scriptstyle \pm0.01}\) & \(\mathbf{28.23}{\scriptstyle \pm0.00}\) & \(0.87{\scriptstyle \pm0.00}\) \\
        \gls{atoms} & \(\mathbf{8.38}{\scriptstyle \pm0.05}\) & \(23.90{\scriptstyle \pm0.15}\) & \(40.67{\scriptstyle \pm0.17}\) & \(\mathbf{1.36}{\scriptstyle \pm0.00}\) & \(30.03{\scriptstyle \pm0.78}\) & \(\mathbf{1.04}{\scriptstyle \pm0.00}\) & \(28.58{\scriptstyle \pm0.05}\) & \(\mathbf{0.85}{\scriptstyle \pm0.00}\) \\
        \midrule
        \rowcolor{gray!20} Gap & \(+1.47\%\) & \(-1.27\%\) & \(-0.88\%\) & \(+4.39\%\) & \(-7.21\%\) & \(+2.78\%\) & \(-1.23\%\) & \(+2.00\%\) \\
        \bottomrule
    \end{tabular}
    }
\end{table}

\subsection{Further Multitask Learning Experiments}

\label{sect:random_split}
\textbf{Random-split cross-validation on TG80.} For completeness, we report multitask results under compound-level random cross-validation, where compounds are randomly assigned to the train, validation, and test sets. Relative to the more challenging out-of-domain (UMAP-based) split in \Cref{tab:tg80_umap}, \gls{egno} is comparatively stronger; nevertheless, \gls{atoms} maintains a consistent lead across folds, with mean improvements of 24.43\% on S2S and 23.93\% on S2T.

\begin{table}[H]
  \centering
  \caption{S2S \gls{mse} \((\times10^{-2})\) on TG80 across five random cluster assignments.}
  \resizebox{\linewidth}{!}{%
  \begin{tabular}{llccccc}
    \toprule
     &  & \textbf{Cluster 1} & \textbf{Cluster 2} & \textbf{Cluster 3} & \textbf{Cluster 4} & \textbf{Cluster 5} \\
     \midrule
    \multirow{2}{*}{OOD}
       & \gls{egno} & \(71.83{\scriptstyle \pm0.00}\) & \(76.92{\scriptstyle \pm0.00}\) & \(68.99{\scriptstyle \pm0.00}\) & \(101.27{\scriptstyle \pm0.00}\) & \(83.20{\scriptstyle \pm0.00}\) \\
       & \gls{atoms} & \(\mathbf{53.93}{\scriptstyle \pm0.00}\) & \(\mathbf{62.40}{\scriptstyle \pm0.00}\) & \(\mathbf{49.37}{\scriptstyle \pm0.00}\) & \(\mathbf{70.75}{\scriptstyle \pm0.00}\) & \(\mathbf{66.75}{\scriptstyle \pm0.00}\) \\
      \cmidrule(lr){2-7}
      \rowcolor{gray!20} & Gap & \(+24.92\%\) & \(+18.88\%\) & \(+28.45\%\) & \(+30.14\%\) & \(+19.77\%\) \\
    \bottomrule
  \end{tabular}
  }
  \label{tab:tg80_random_s2s}
\end{table}

\begin{table}[H]
  \centering
  \caption{S2T \gls{mse} \((\times10^{-2})\) on TG80 across five random cluster assignments.}
  \resizebox{\linewidth}{!}{%
  \begin{tabular}{llccccc}
    \toprule
     &  & \textbf{Cluster 1} & \textbf{Cluster 2} & \textbf{Cluster 3} & \textbf{Cluster 4} & \textbf{Cluster 5} \\
     \midrule
    \multirow{2}{*}{OOD}
       & \gls{egno} & \(63.23{\scriptstyle \pm0.00}\) & \(64.49{\scriptstyle \pm0.00}\) & \(59.18{\scriptstyle \pm0.00}\) & \(85.87{\scriptstyle \pm0.00}\) & \(69.46{\scriptstyle \pm0.00}\) \\
       & \gls{atoms} & \(\mathbf{46.09}{\scriptstyle \pm0.00}\) & \(\mathbf{54.47}{\scriptstyle \pm0.00}\) & \(\mathbf{42.90}{\scriptstyle \pm0.00}\) & \(\mathbf{55.64}{\scriptstyle \pm0.00}\) & \(\mathbf{59.55}{\scriptstyle \pm0.00}\) \\
      \cmidrule(lr){2-7}
      \rowcolor{gray!20} & Gap & \(+27.10\%\) & \(+15.54\%\) & \(+27.51\%\) & \(+35.21\%\) & \(+14.28\%\) \\
    \bottomrule
  \end{tabular}
  }
  \label{tab:tg80_random_s2t}
\end{table}

\textbf{Multitask S2S results on TG80 under UMAP cluster cross-validation.} The S2S side of the multitask learning results follow closely from their S2T counterparts presented in \Cref{sec:multitask_learning}.
\label{app:s2s_result}

\begin{table}[H]
  \centering
  \caption{S2S \gls{mse} \((\times10^{-2})\) on TG80 across five UMAP cluster assignments.}
  \resizebox{\linewidth}{!}{%
  \begin{tabular}{llccccc}
    \toprule
     &  & \textbf{Cluster 1} & \textbf{Cluster 2} & \textbf{Cluster 3} & \textbf{Cluster 4} & \textbf{Cluster 5} \\
     \midrule
     \multirow{3}{*}{ID}
      & \gls{egno}  & \(51.98{\scriptstyle \pm0.81}\) & \(95.86{\scriptstyle \pm0.53}\) & \(142.51{\scriptstyle \pm0.58}\) & \(155.25{\scriptstyle \pm0.67}\) & \(109.25{\scriptstyle \pm0.24}\) \\
      & \gls{atoms} & \(\mathbf{15.49}{\scriptstyle \pm1.04}\) & \(\mathbf{26.55}{\scriptstyle \pm2.13}\) & \(\mathbf{28.74}{\scriptstyle \pm2.40}\) & \(\mathbf{29.81}{\scriptstyle \pm2.72}\) & \(\mathbf{26.33}{\scriptstyle \pm1.98}\) \\
      \cmidrule(lr){2-7}
      \rowcolor{gray!20} & Gap (\%) & \(70.20\%\) & \(72.30\%\) & \(79.83\%\) & \(80.80\%\) & \(75.90\%\) \\
    \midrule
    \multirow{4}{*}{OOD}
      & \gls{egno}  & \(52.90{\scriptstyle \pm0.72}\) & \(114.14{\scriptstyle \pm13.21}\) & \(149.99{\scriptstyle \pm0.34}\) & \(163.47{\scriptstyle \pm1.00}\) & \(112.36{\scriptstyle \pm1.90}\) \\
      & EGNN\text{-}S & \(52.39{\scriptstyle \pm0.40}\) & \(16512.07{\scriptstyle \pm12314.09}\) & \(149.41{\scriptstyle \pm0.94}\) & \(663.54{\scriptstyle \pm865.23}\) & \(111.08{\scriptstyle \pm0.62}\) \\
      & EGNN\text{-}R & \(52.08{\scriptstyle \pm0.79}\) & \(\mathbf{108.89}{\scriptstyle \pm1.60}\) & \(148.67{\scriptstyle \pm0.73}\) & \(163.27{\scriptstyle \pm0.16}\) & \(109.94{\scriptstyle \pm0.31}\) \\
      & \gls{atoms} & \(\mathbf{41.97}{\scriptstyle \pm1.24}\) & \(127.95{\scriptstyle \pm122.67}\) & \(\mathbf{74.53}{\scriptstyle \pm4.82}\) & \(\mathbf{80.95}{\scriptstyle \pm1.21}\) & \(\mathbf{58.26}{\scriptstyle \pm1.68}\) \\
      \cmidrule(lr){2-7}
      \rowcolor{gray!20} & Gap (\%) & \(19.41\%\) & \(-17.50\%\) & \(49.87\%\) & \(50.42\%\) & \(47.01\%\) \\
    \bottomrule
  \end{tabular}
  }
  \label{tab:tg80_umap_s2s}
\end{table}

\newpage

\begin{wraptable}{r}{0.42\textwidth}
    \centering
    \caption{\gls{mse} \((\times10^{-2})\) on full-dataset TG80 \gls{atoms} and single-task \gls{atoms}. S2S upper, S2T lower.}
    \begin{tabular}{lc}
        \toprule
        & Formic Acid \\
        \midrule
        Single-task \gls{atoms} & 26.40 \\
        All-data \gls{atoms} & \textbf{22.39} \\
        \midrule
        \rowcolor{gray!20} Gap (\%) & \(15.19\%\) \\
        \midrule
        Single-task \gls{atoms} & \textbf{18.10} \\
        All-data \gls{atoms} & 18.72 \\
        \midrule
        \rowcolor{gray!20} Gap (\%) & \(-3.43\%\) \\
        \bottomrule
    \end{tabular}
    \label{tab:placeholder}
    \vspace{-0.5\baselineskip}
\end{wraptable}
\textbf{Multitask S2S Versus Single Task} We evaluate a multitask \gls{atoms} model trained on all available trajectories, pooling both ID and OOD clusters, against a single-task \gls{atoms} trained separately on each compound. The multitasking model achieves losses comparable to or lower than those of the single-task baselines, despite being trained with the same compute resources.



\subsection[Monte Carlo Estimation of Quasi-equivariance]{Monte Carlo Estimation of\\ Quasi-equivariance}
\label{app:quasi_equivariance_mc}
{For both the pretrained \gls{atoms} and the non-equivariant lifting variant, we estimate the quantity in \Cref{def:quasi_equivariance} by Monte Carlo, drawing \(N\) random timesteps \(x_n\) and, for each, \(R\) random rotations \(g_{n,r} \in G\). We approximate
\begin{equation*}
    \varepsilon \approx \frac{1}{N} \sum_{n=1}^{N} 
    \left\|
        \frac{1}{R} \sum_{r=1}^{R}
        \bigl(
            f\bigl(\phi(g_{n,r})(x_n)\bigr)
            - \rho(g_{n,r})\bigl(f(x_n)\bigr)
        \bigr)
    \right\|_2 .
\end{equation*}
We report our estimates of \(\varepsilon\) for various \gls{atoms} models trained on MD17 single task learning in \Cref{tab:equivariance_defect}. In all cases, \gls{atoms} shows a substantially lower equivariance defect, supporting our claim that our quasi-equivariant design achieves a kind of middle-ground in the trade-off between expressiveness and strict equivariance.
}

\begin{table}[H]
\centering
\caption{Estimates of Quasi-equivariance \(\varepsilon\) via Monte Carlo over 20 rotations and 10 timesteps with 2SD intervals.}
\label{tab:equivariance_defect}
\resizebox{\linewidth}{!}{%
    \begin{tabular}{lccccccc}
        \toprule
         & Aspirin & Ethanol & Malonaldehyde & Naphthalene & Salicylic & Toluene & Uracil \\
        \midrule 
        Non-equivariant \gls{atoms} & \(120.23\scriptstyle \pm153.71\) & \(102.50{\scriptstyle \pm 45.14}\) & \(26.94{\scriptstyle \pm16.04}\) & \(37.97{\scriptstyle \pm12.65}\) & \(37.81\scriptstyle \pm20.08\) & \(57822.95\scriptstyle \pm42872.97\) & \(32.29{\scriptstyle \pm35.80}\) \\

        \textbf{\gls{atoms}} & \(\mathbf{28.34}{\scriptstyle \pm25.10}\) & \(\mathbf{11.09}{\scriptstyle \pm9.64}\) & \(\mathbf{9.99}{\scriptstyle \pm9.82}\) & \(\mathbf{17.09}{\scriptstyle \pm13.03}\) & \(\mathbf{25.46}{\scriptstyle \pm21.47}\) & \(\mathbf{2744.07}{\scriptstyle \pm3108.52}\) & \(\mathbf{28.72}{\scriptstyle \pm19.20}\) \\
        \bottomrule
    \end{tabular}
}
\end{table}

\section{Experimental Details}
\label{sect:details}

\subsection{Software and Hardware Details}
All experiments were conducted using Python 3.12, NumPy 2.2.1 \cite{harris_array_2020}, PyTorch 2.5.1 \cite{paszke_pytorch_2019}, e3nn 0.5.6 \cite{geiger_e3nn_2022} and PyTorch Optimizer 3.5.0 \cite{kim_pytorch_optimizer_2021}. We use RDKit 2024.9.6 \cite{greg_landrum_rdkitrdkit_2025} and PubChemPy 1.0.4 to construct TG80. All single-task training was performed on an NVIDIA RTX 5080 (16 GB) with CUDA 12.4, running on Ubuntu 24.04. {We use Ase 3.26.0 \cite{larsen_atomic_2017} and MACE-Torch 0.3.14 \cite{kovacs_mace-off_2025} in the experiments of \Cref{app:inference_times}.}

\subsection{Training Times and Compute Requirements}
\label{sec:model_compute_time}
\textbf{Single-task training time} We roughly wall-clock normalised our \gls{atoms} and \gls{egno} parameter counts, resulting in respective learnable parameter counts of 754\,468 and 335\,770.

\begin{table}[H]
    \centering
    \caption{Compute cost of single-task training on all MD17 molecules over 1000 epochs. Both \gls{atoms} (335\,770 params) and \gls{egno} (754\,468 params) are under \texttt{torch.compile} on a Titan V.}
    \label{tab:md17_compute_cost}
    \resizebox{\linewidth}{!}{%
    \begin{tabular}{llcccccccc}
            \toprule
            Model & Metric & Azobenzene & Ethanol & Malonaldehyde & Naphthalene & Paracetamol & Salicylic & Toluene & Uracil \\
            \midrule
            \multirow{3}{*}{\gls{egno}}
             & Time (mins) & \(4.09\pm\scriptstyle{0.15}\) & \(3.62\pm\scriptstyle{0.42}\) & \(3.65\pm\scriptstyle{0.01}\) & \(5.01\pm\scriptstyle{0.02}\) & \(9.02\pm\scriptstyle{1.22}\) & \(5.16\pm\scriptstyle{0.03}\) & \(3.93\pm\scriptstyle{0.02}\) & \(4.35\pm\scriptstyle{0.04}\) \\
             & Total FLOPS (\(\times 10^{12}\)) & \(3681.24\) & \(3257.70\) & \(3282.09\) & \(4513.37\) & \(8114.44\) & \(4641.50\) & \(3539.92\) & \(3915.98\) \\
             & Epochs/min & \(244.48\) & \(276.27\) & \(274.22\) & \(199.41\) & \(110.91\) & \(193.90\) & \(254.24\) & \(229.83\) \\
            \midrule
            \multirow{3}{*}{\gls{atoms}}
             & Time (mins) & \(5.81\pm\scriptstyle{0.02}\) & \(5.79\pm\scriptstyle{0.06}\) & \(5.79\pm\scriptstyle{0.00}\) & \(5.86\pm\scriptstyle{0.01}\) & \(5.89\pm\scriptstyle{0.02}\) & \(5.85\pm\scriptstyle{0.01}\) & \(5.81\pm\scriptstyle{0.01}\) & \(5.83\pm\scriptstyle{0.02}\) \\
             & Total FLOPS (\(\times 10^{12}\)) & \(5226.49\) & \(5212.53\) & \(5213.50\) & \(5271.19\) & \(5297.84\) & \(5263.76\) & \(5224.91\) & \(5247.33\) \\
             & Epochs/min & \(172.20\) & \(172.66\) & \(172.63\) & \(170.74\) & \(169.88\) & \(170.98\) & \(172.25\) & \(171.52\) \\
            \midrule
            \rowcolor{gray!20}
            \multicolumn{2}{c}{Total FLOPS Reduction (\%)}  & \(-41.98\%\) & \(-60.01\%\) & \(-58.85\%\) & \(-16.79\%\) & \(+34.71\%\) & \(-13.41\%\) & \(-47.60\%\) & \(-34.00\%\) \\
            \bottomrule
        \end{tabular}
    }
\end{table}

\textbf{Multitask training time} In multitask training on TG80, our upsized \gls{atoms} model contained 3\,557\,840 parameters, compared to 335 770 for \gls{egno}. Despite this, \gls{atoms} only trained between 5\% and 30\% slower than \gls{egno}. This is perhaps unsurprising given the much higher FLOPS-utilization of the transformer architecture upon which \gls{atoms} is based.

\begin{table}[H]
    \centering
    \caption{Compute cost of single task training on five TG80 molecules over 1000 epochs. Both \gls{atoms} (335 770 params) and \gls{egno} (754 468 params) are under \texttt{torch.compile} on a Titan V.}
    \label{tab:tg80_compute_cost}
    \resizebox{\linewidth}{!}{%
    \begin{tabular}{llccccc}
            \toprule
            Model &  & Fold1 & Fold2 & Fold3 & Fold4 & Fold5 \\
            \midrule
            \multirow{3}{*}{\gls{egno}} & Time (mins) & \(9.61{\scriptstyle \pm1.21}\) & \(8.56{\scriptstyle \pm0.06}\) & \(9.04{\scriptstyle \pm0.11}\) & \(9.31{\scriptstyle \pm0.20}\) & \(8.98{\scriptstyle \pm0.01}\) \\
             & Total FLOPS ($\times10^{12}$) & 8645.66 & 7703.33 & 8136.98 & 8378.06 & 8084.98 \\
             & Epochs/min & \(104.10\) & \(116.83\) & \(110.61\) & \(107.42\) & \(111.32\) \\
            \midrule
            \multirow{3}{*}{\gls{atoms}} & Time (mins) & \(10.16{\scriptstyle \pm0.49}\) & \(10.55{\scriptstyle \pm0.02}\) & \(11.62{\scriptstyle \pm0.41}\) & \(12.38{\scriptstyle \pm0.12}\) & \(10.39{\scriptstyle \pm0.41}\) \\
             & Total FLOPS ($\times10^{12}$) & 9140.57 & 9497.79 & 10455.34 & 11141.73 & 9355.37 \\
             & Epochs/min & \(98.46\) & \(94.76\) & \(86.08\) & \(80.78\) & \(96.20\) \\
            \midrule
            \rowcolor{gray!20}
            \multicolumn{2}{c}{Total FLOPS Reduction (\%)}  & \(-5.72\%\) & \(-23.29\%\) & \(-28.49\%\) & \(-32.99\%\) & \(-15.71\%\) \\
            \bottomrule
        \end{tabular}
    }
\end{table}
\subsection{Inference Times}
\label{app:inference_times}
{We compare inference times on MD17 and MD22 across \gls{atoms}, the pretrained machine learning interaction potential MACE-OFF24 (Medium) \cite{kovacs_mace-off_2025}, and the classical Lennard-Jones potential \cite{larsen_atomic_2017, schwerdtfeger_100_2024} and the molecular forcefield. We report inference times in seconds with 2SD intervals. We exclude AMBER results on Uracil as we were unable to run simulations for this molecule \cite{case_ambertools_2023}.}

\begin{table}[H]
    \centering
    \caption{Seconds to produce timestep \(\Delta T = 3000\) of each MD17 trajectory at \texttt{float32} precision.}
    \label{tab:equivariance_defect}
    \resizebox{\linewidth}{!}{%
        \begin{tabular}{llrrrrrrr}
            \toprule
             &  & Aspirin & Ethanol & Malonaldehyde & Naphthalene & Salicylic & Toluene & Uracil \\
            \midrule 
            \multirow{4}{*}{\(\Delta T = 3000\)} 
              &  MACE-OFF
              & \(42.605\pm\scriptstyle{3.945}\) 
              & \(39.656\pm\scriptstyle{0.568}\) 
              & \(39.782\pm\scriptstyle{0.141}\) 
              & \(39.916\pm\scriptstyle{0.155}\) 
              & \(39.620\pm\scriptstyle{0.030}\) 
              & \(40.271\pm\scriptstyle{2.851}\) 
              & \(40.166\pm\scriptstyle{0.902}\) \\
              &  AMBER
              & \(37.746\pm\scriptstyle{0.536}\) 
              & \(38.512\pm\scriptstyle{1.243}\) 
              & \(36.806\pm\scriptstyle{0.194}\) 
              & \(38.512\pm\scriptstyle{1.243}\) 
              & \(37.621\pm\scriptstyle{1.412}\) 
              & \(38.061\pm\scriptstyle{0.848}\) 
              & \(-\) \\
            
              & Lennard-Jones
              & \(2.499\pm\scriptstyle{0.266}\) 
              & \(1.604\pm\scriptstyle{0.208}\) 
              & \(1.529\pm\scriptstyle{0.039}\) 
              & \(2.174\pm\scriptstyle{0.094}\) 
              & \(1.981\pm\scriptstyle{0.011}\) 
              & \(1.906\pm\scriptstyle{0.006}\) 
              & \(1.804\pm\scriptstyle{0.196}\) \\
    
              & \gls{atoms}
              & \(0.849\pm\scriptstyle{0.926}\) 
              & \(0.259\pm\scriptstyle{0.097}\) 
              & \(0.714\pm\scriptstyle{0.241}\) 
              & \(0.467\pm\scriptstyle{0.149}\) 
              & \(0.450\pm\scriptstyle{0.090}\) 
              & \(0.341\pm\scriptstyle{0.075}\) 
              & \(0.373\pm\scriptstyle{0.076}\) \\
              \midrule
            \multirow{3}{*}{\(\Delta T = 10\ 000\)} 
              & MACE-OFF
              & \(143.413\pm\scriptstyle{3.313}\) 
              & \(142.569\pm\scriptstyle{5.584}\) 
              & \(136.380\pm\scriptstyle{3.964}\) 
              & \(140.288\pm\scriptstyle{9.136}\) 
              & \(140.103\pm\scriptstyle{3.398}\) 
              & \(140.372\pm\scriptstyle{4.754}\) 
              & \(141.598\pm\scriptstyle{1.020}\) \\
              
              &   AMBER
              & \(133.900\pm\scriptstyle{3.815}\) 
              & \(128.385\pm\scriptstyle{3.812}\) 
              & \(121.095\pm\scriptstyle{0.968}\) 
              & \(121.607\pm\scriptstyle{1.195}\) 
              & \(120.319\pm\scriptstyle{1.591}\) 
              & \(120.800\pm\scriptstyle{0.398}\) 
              & \(-\) \\
            
              & Lennard-Jones
              & \(7.797\pm\scriptstyle{0.369}\) 
              & \(5.533\pm\scriptstyle{0.425}\) 
              & \(5.167\pm\scriptstyle{0.221}\) 
              & \(7.453\pm\scriptstyle{0.376}\) 
              & \(6.721\pm\scriptstyle{0.418}\) 
              & \(6.477\pm\scriptstyle{0.204}\) 
              & \(5.921\pm\scriptstyle{0.522}\) \\
    
              & \gls{atoms}
              & \(2.554\pm\scriptstyle{0.945}\) 
              & \(1.002\pm\scriptstyle{0.184}\) 
              & \(1.634\pm\scriptstyle{0.200}\) 
              & \(1.098\pm\scriptstyle{0.209}\) 
              & \(1.116\pm\scriptstyle{0.083}\) 
              & \(0.964\pm\scriptstyle{0.185}\) 
              & \(0.990\pm\scriptstyle{0.171}\) \\
            \bottomrule
        \end{tabular}
    }
\end{table}

\begin{table}[H]
    \centering
    \caption{Seconds to produce timestep \(\Delta T = 3000\) of each MD22 trajectory at \texttt{float32} precision.}
    \label{tab:equivariance_defect}
        \begin{tabular}{lrrr}
            \toprule
             & Ac-Ala3-NHME & DHA & Stachyose \\
            \midrule 
            MACE-OFF & \(44.737\pm\scriptstyle{4.615}\) & \(42.042\pm\scriptstyle{0.720}\) & \(47.913\pm\scriptstyle{0.546}\) \\
            
            Lennard-Jones & \(4.700\pm\scriptstyle{0.212}\) & \(3.790\pm\scriptstyle{0.246}\) & \(6.613\pm\scriptstyle{0.147}\) \\
    
            \gls{atoms} & \(1.302\pm\scriptstyle{0.766}\) & \(0.914\pm\scriptstyle{0.055}\) & \(3.075\pm\scriptstyle{0.080}\) \\
            \bottomrule
        \end{tabular}
\end{table}

\newpage
\subsection{ATOM Hyperparameters}
\label{appx:my_hyperparams}
We employ the same dataset splitting and discretization parameters reported in \textcite{xu_equivariant_2024} for the MD17. We set the batch size to \(192\), use the AdamW-AMSGrad optimizer \cite{loshchilov_decoupled_2017} with an \(\epsilon\) of \(1\times10^{-10}\) to avoid instability associated with the small gradients produced by zero-initialised weight matrices in early training \cite{jordan_nanogpt_2025}. During multitask training, we reduce the number of epochs to 250 and employ the Muon optimizer \cite{jordan_muon_2024, kim_pytorch_optimizer_2021}. We present a complete overview of our hyperparameters in \Cref{tab:atom_hparams}.

\begin{table}[h]
    \centering
    \caption{Hyperparameters for \gls{atoms}. MD17 hyperparameters are shared across all molecules unless otherwise noted.}
    \label{tab:atom_hparams}
    \resizebox{\linewidth}{!}{
    \begin{tabular}{cccc}
        Module & & MD17, RMD17, TG80 & TG80 Multitask \\
        \midrule
        Training & & \\
        & Batch size & \( 192 \) & \( 192 \) \\
        & Epochs & \( 1000 \) & \(250\) \\
        & Max grad norm & \( 1.0 \) & \(1.0\) \\
        & Label noise \(\sigma\) & \( 0.1 \) & \(0.1\) \\
        & \(\Delta t\) & \( 3000 \) & \( 10\ 000 \) \\
        & Timesteps \(P\) & \( 8 \) & \( 8 \) \\
        & Train/Val/Test & \( (500,3\,000,3\,000) \) & \( (6\,500,13\,000,13\,000) \) \\
        & \gls{rwpe} length & 8 & 8 \\
        \midrule
        Optimiser & & \\
        & Optimiser type & AdamW-AMSGrad & Muon  \\
        & Learning rate & \( 1\times10^{-3} \) & \( 1\times10^{-3} \) \\
        & \(\beta_1, \beta_2\) & \( 0.9, 0.999 \) & \( (0.9, 0.999) \) \\
        & Weight decay & \( 1\times10^{-5} \) & \( 1\times10^{-5} \) \\
        & \(\epsilon\) & \( 1\times10^{-10} \) & \( 1\times10^{-5} \) \\
        \midrule
        Model & & \\
        & Embedding dim & \( 128 \) & \( 256 \) \\
        & No. layers & \( 5 \) & \( 6 \) \\
        & No. attention heads & \( 8 \) & \( 8 \) \\
        & No. output heads & \(1\) & \(8\) \\
        & Attention dropout & \( 0.2 \) & \( 0.2 \) \\
        & RoPE frequency & \( 1000 \) & \( 1000 \)\\
        & MLP layers & \( 2 \) & \( 2 \) \\
        & MLP activation & SwiGLU & SwiGLU \\
        & MLP dropout & \( 0.0 \) & \( 0.0 \) \\
        & Norm type & RMS norm & RMS norm \\
        & Learnable value residuals & True & True \\
        \bottomrule
    \end{tabular}}
\end{table}
\newpage
\subsection{EGNO Hyperparameters and Experimental Details}
\label{appx:egno_hyperparams}
We generated the \gls{egno} results reported in \Cref{tab:md17_results} with the same discretization parameters and hyperparameters as used in their experiments. We reduce the number of epochs from \(10\ 000\) to \(2\ 500\), use a batch size of 192 with the AdamW-AMSGrad optimizer \cite{loshchilov_decoupled_2017}, and select the best validation loss epoch for testing. In the multitask case, we further reduce the number of epochs to 250 and employ the Muon optimizer \cite{jordan_muon_2024, kim_pytorch_optimizer_2021}. Complete hyperparameters are displayed in \Cref{tab:egno_hparams}.

\begin{table}[H]
    \centering
    \caption{Hyperparameter values for \gls{egno} across each benchmark dataset.}
    \label{tab:egno_hparams}
    \resizebox{\linewidth}{!}{
    \begin{tabular}{cccc}
        Module & & MD17, RMD17, TG80 & TG80 Multitask \\
        \midrule
        Training & & \\
        & Batch size & \( 192 \) & \( 192 \) \\
        & Epochs & \( 2500 \) & \( 250 \) \\
        & Max grad norm & Uncapped & Uncapped  \\
        & Label noise \(\sigma\) & \( 0.1 \) & \(0.1\) \\
        & \(\Delta t\) & \( 3000 \) & \(10\ 000\) \\
        & Timesteps \(P\) & \( 8 \) & \(8\) \\
        & Train/Val/Test & \( (500,3\,000,3\,000) \) & \( (6\,500,13\,000,13\,000) \) \\
        & \gls{rwpe} length & 8 & 8 \\
        \midrule
        Optimiser & & \\
        & Optimiser type & AdamW-AMSGrad & Muon \\
        & Learning rate & \( 1\times10^{-3} \) & \( 1\times10^{-3} \) \\
        & \(\beta_1, \beta_2\) & \( (0.9, 0.999) \) & \( (0.9, 0.999) \) \\
        & Weight decay & \( 1\times10^{-5} \) & \( 1\times10^{-5} \) \\
        & \(\epsilon\) & \( 1\times10^{-10} \) & \( 1\times10^{-5} \) \\
        \midrule
        Scheduler & & \\
        & Scheduler type & StepLR & StepLR \\
        & Step size & 2500 & 2500 \\
        & \(\gamma\) & 0.5 & 0.5 \\
        \midrule
        Model & & \\
        & Embedding dim & \( 64 \) & \( 64 \) \\
        & No. \gls{egno} layers & \( 5 \) & \( 5 \) \\
        & Temporal convolution activation & LeakyRELU & LeakyRELU \\
        & MLP layers & \( 2 \) & \( 2 \) \\
        & MLP activation & SiLU & SiLU \\
        & MLP dropout & \( 0 \) & \( 0 \) \\
        & Time embedding dim & 32 & 32 \\
        & Fourier modes & 2 & 2 \\
        \bottomrule
    \end{tabular}}
\end{table}

\newpage
\section{Propositions and Proofs}

\subsection{Kernel Integral form of Cross-attention}
\label{app:kernel}

\begin{proposition}
\label{prop:kernel}
The cross-attention is equivalent to a kernel integral operator, i.e., $\operatorname{softmax}(\operatorname{T-RoPE}(\mathbf Q) \operatorname{T-RoPE}(\mathbf K_i)^\top/\sqrt{d_h}) \mathbf V_i = \int \kappa_i(\mathbf{z}, \mathbf{x}) v_i(x) d \mu_N(\mathbf{x})$, where $\kappa_i$ denotes the kernel induced by softmax function, $v_i(\mathbf x)$ denotes the values as a function of $\mathbf x$, and $\mu_N$ denotes the empirical measure supported on $\{ \mathbf{x}_j\}_{j=1}^N$. 
\end{proposition}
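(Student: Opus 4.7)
The plan is to show that the softmax attention sum is simply the Lebesgue integral against an empirical measure on the point cloud. First I would fix notation by regarding the rows of $\mathbf Q$, $\mathbf K_i$, $\mathbf V_i$ as samples of point-cloud functions $q, k_i, v_i : \mathbb R^3 \to \mathbb R^{d_h}$ defined (or interpolated) on the atomic positions $\{\mathbf x_j\}_{j=1}^N$, and include the T-RoPE rotations inside these functions by letting $\tilde q(\mathbf z) := q(\mathbf z)\mathbf R_{p(\mathbf z)}$ and $\tilde k_i(\mathbf x) := k_i(\mathbf x)\mathbf R_{p(\mathbf x)}$, where $p(\cdot)$ returns the timestamp associated with the given token. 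This absorbs the temporal rotations without affecting the subsequent argument. The empirical measure is taken as $\mu_N = \sum_{j=1}^N \delta_{\mathbf x_j}$ supported on the atomic positions.

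Next I would define the softmax-induced kernel directly, namely
\begin{equation*}
\kappa_i(\mathbf z, \mathbf x)
\;=\;
\frac{\exp\bigl(\tilde q(\mathbf z)\,\tilde k_i(\mathbf x)^\top/\sqrt{d_h}\bigr)}
{\displaystyle\int \exp\bigl(\tilde q(\mathbf z)\,\tilde k_i(\mathbf x')^\top/\sqrt{d_h}\bigr)\, d\mu_N(\mathbf x')}.
\end{equation*}
The denominator is finite and strictly positive because $\mu_N$ has finite support, and by construction $\int \kappa_i(\mathbf z, \mathbf x)\, d\mu_N(\mathbf x) = 1$, so $\kappa_i(\mathbf z,\cdot)$ is a normalized kernel against $\mu_N$. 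The claimed identity then follows from the defining property of Dirac measures:
\begin{equation*}
\int \kappa_i(\mathbf z, \mathbf x)\, v_i(\mathbf x)\, d\mu_N(\mathbf x)
\;=\;
\sum_{j=1}^{N} \kappa_i(\mathbf z, \mathbf x_j)\, v_i(\mathbf x_j),
\end{equation*}
and the right-hand side is, entry-wise in $\mathbf z$, exactly the standard softmax attention output $\operatorname{softmax}\!\bigl(\operatorname{T-RoPE}(\mathbf Q)\operatorname{T-RoPE}(\mathbf K_i)^\top/\sqrt{d_h}\bigr)\mathbf V_i$.

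The argument is essentially a re-parametrization of discrete attention as integration against a purely atomic measure, so there is no genuine analytical obstacle. The only subtlety worth highlighting is bookkeeping around T-RoPE: because the rotations $\mathbf R_p$ depend only on the timestamp shared across all atoms at step $p$, they can be pushed inside $q, k_i$ without breaking the measurability or well-posedness of the kernel, and in particular the kernel remains a function of the pair $(\mathbf z, \mathbf x)$ alone once the temporal index is fixed. After establishing the identity, I would briefly remark on the properties of $\kappa_i$ inherited from this representation (continuity in $(\mathbf z,\mathbf x)$, positivity, row-stochasticity under $\mu_N$, and boundedness), which position heterogeneous attention within the neural operator framework of \cite{kovachki_neural_2021}.
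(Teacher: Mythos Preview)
Your proposal is correct and follows essentially the same route as the paper: both absorb the T-RoPE rotations into modified query/key maps $\tilde q,\tilde k_i$, define the softmax-normalized kernel with the partition function written as an integral against the empirical measure, and then read off the attention output as the corresponding kernel integral, noting row-stochasticity. The only cosmetic difference is that the paper normalizes the empirical measure by $1/N$ (so $\int g\,d\mu_N=\tfrac{1}{N}\sum_j g(\mathbf x_j)$) whereas you take $\mu_N=\sum_j\delta_{\mathbf x_j}$; since the same measure appears in both numerator and denominator of $\kappa_i$, the scaling cancels and the argument is unaffected.
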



\label{appx:kernel_integral_cross_att}
\begin{proof}[Proof of Proposition \ref{prop:kernel}]
Following \cite{gao_learning_2024} we may view our attention as a kernel integral transform by considering \(\mathbf{x}_i\) as being sampled from the continuum domain \(\Omega \subset \mathbb{R}^3\) for which we define the empirical measure with support on \(\{\mathbf{x}_j\}_{j=1}^N \subset \Omega\):
\begin{equation}
    {\mu_N} (\mathbf{x}) = \frac{1}{N} \sum_{i=1}^N \delta_{\mathbf{x}_j}, \qquad \int_\Omega g(\mathbf{x})d {\mu_N } (\mathbf x) =\frac{1}{N} \sum_{j=1}^N g(\mathbf{x}_j)
    \label{eq:measure}
\end{equation}
where \(\delta\) is the Dirac delta function \say{selecting} the values at \(\mathbf{x}_j\).
Given \gls{trope}-rotated query and key maps \(\tilde{q}_\theta(\mathbf{z})=R_{p(\mathbf{z})}q_\theta(\mathbf{z}_j)\), \(\tilde{k}_i(\mathbf{x}_j)=R_{p(\mathbf{x}_j)}k_{\theta,i}(\mathbf{x}_j)\) we form the data-dependent kernel for feature \(F\):
\begin{equation}
    \kappa_{\theta,i}(\mathbf{z}, \mathbf{x}_j)
    = 
    \frac{
      \exp\bigl(\langle \tilde q(\mathbf{z}),\,\tilde k_i(\mathbf{x}_j)\rangle \;/\;\sqrt{d_{h}}\bigr)
    }{
      \displaystyle
      \int_{\Omega}
        \exp\bigl(\langle \tilde q(\mathbf{z}_{j}),\,\tilde k_i(\mathbf{x}')\rangle \;/\;\sqrt{d_{h}}\bigr)
        \,d\mu_{N}(\mathbf{x}')
    }\,.
\end{equation}
Thus, for any \(F \in \mathcal{F}\) we may represent our cross-attention as the kernel integral operator:
\begin{equation}
    \bigl(\mathcal{K}_{\theta,i} \mathbf{v}_j\bigr)(\mathbf{z}) = \int_{\Omega} \kappa_{\theta,i}\bigl(\mathbf{z}, \mathbf{x}\bigr)\,\mathbf{v}_i(\mathbf{x})\,\mathrm{d}\mu_N(\mathbf{x}),
    \qquad
    \int_\Omega\kappa_{\theta,i}(\mathbf{z},\mathbf{x})\, d \mu_N(\mathbf{x})=1,
\end{equation}
which is row-stochastic under the measure in \Cref{eq:measure}.     
\end{proof}

We remark that the kernel fails to satisfy global Lipschitz continuity \cite{delattre_efficient_2023}, unlike \gls{fno} \cite{li_fourier_2021}, and certain generalization theorems fail as a result \cite{le_mathematical_2024}.

\newpage
\section{Trajectory Samples}
\FloatBarrier
\begin{figure}[!ht]
    \centering
    \begin{subfigure}{0.48\linewidth}
        \centering
        \includegraphics[width=\linewidth]{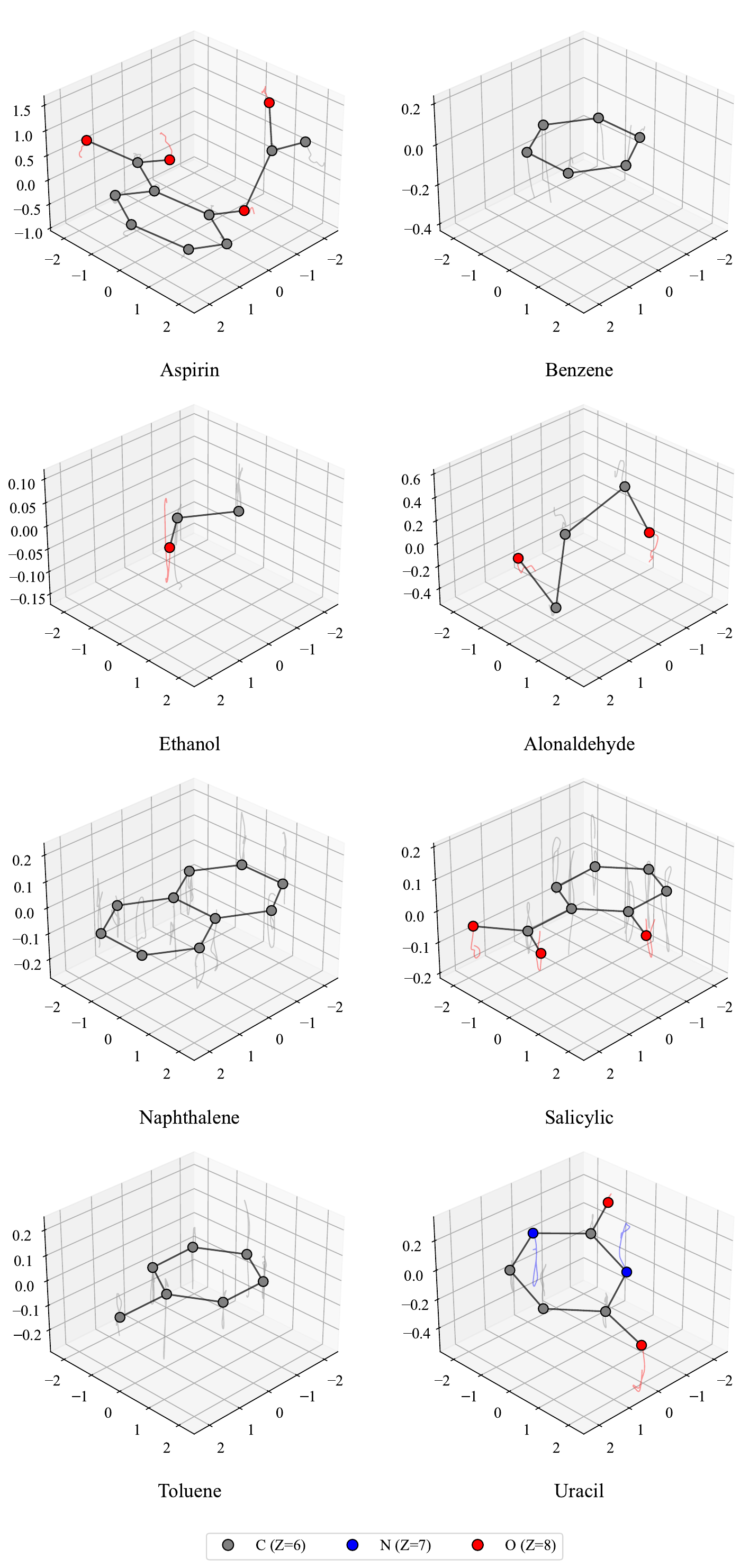}
        \caption{MD17 trajectories}
        \label{fig:MD17_trajectories}
    \end{subfigure}
    \hfill
    \begin{subfigure}{0.48\linewidth}
        \centering
        \includegraphics[width=\linewidth]{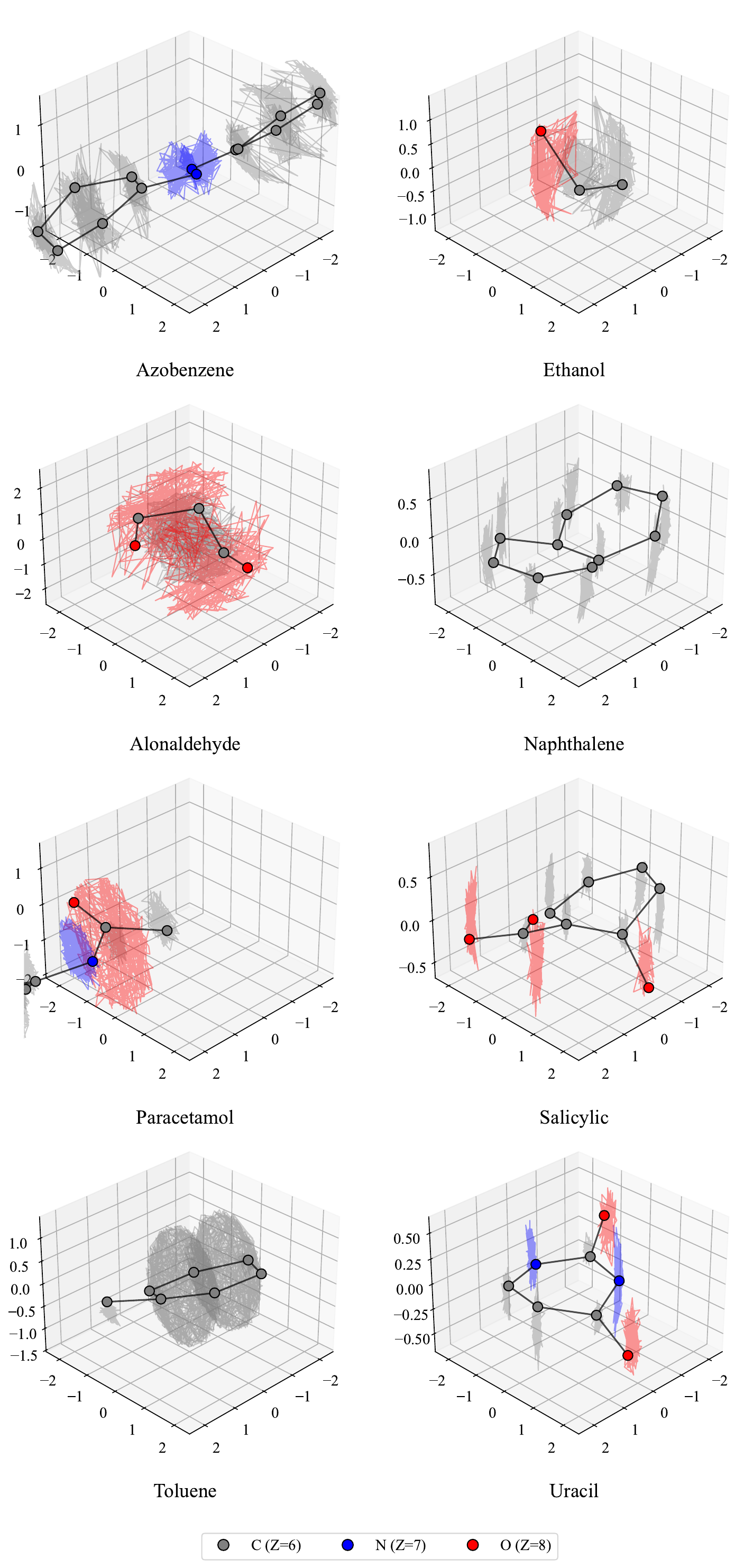}
        \caption{RMD17 trajectories}
        \label{fig:RMD17_trajectories}
    \end{subfigure}
    \caption{3000 steps \gls{md} trajectories from the MD17 and RMD17 datasets.}
    \label{fig:trajectories}
\end{figure}
\newpage
\begin{figure}[H]
    \centering
    \includegraphics[width=\linewidth]{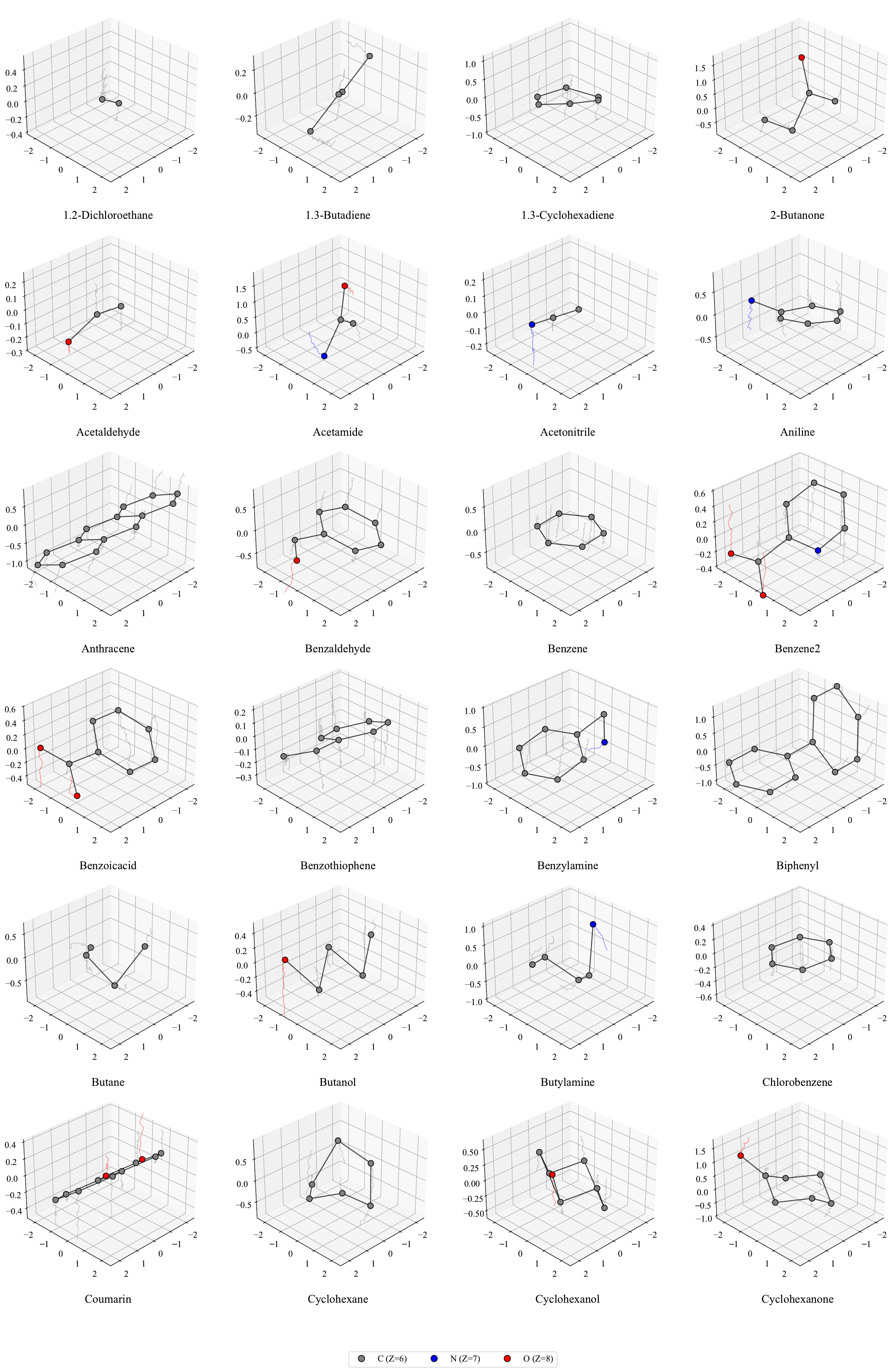}
    \label{fig:TG80_trajectories_1}
\end{figure}
\begin{figure}
    \centering
    \includegraphics[width=\linewidth]{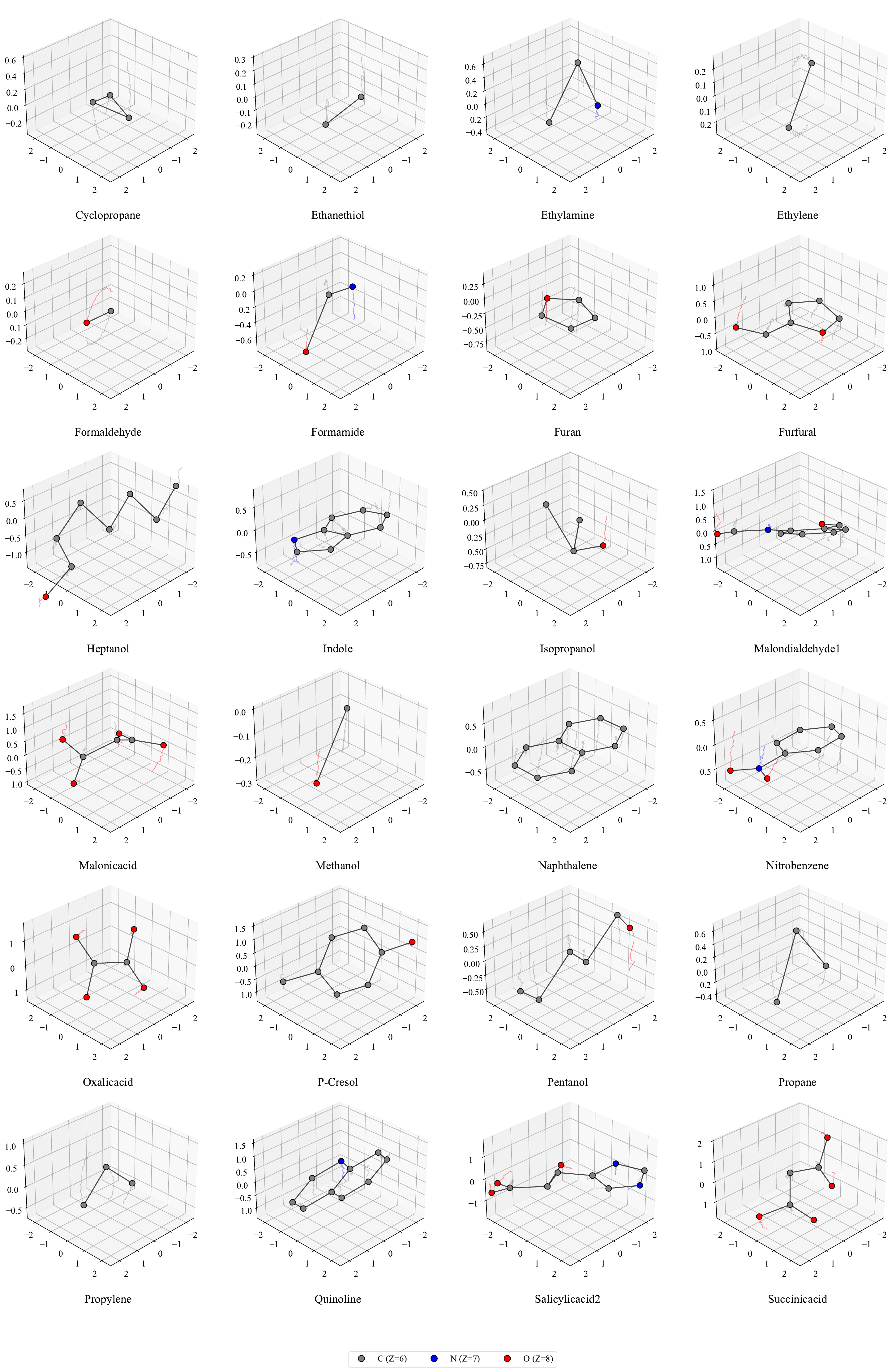}
    \label{fig:TG80_trajectories_2}
\end{figure}
\begin{figure}
    \centering
    \includegraphics[width=\linewidth]{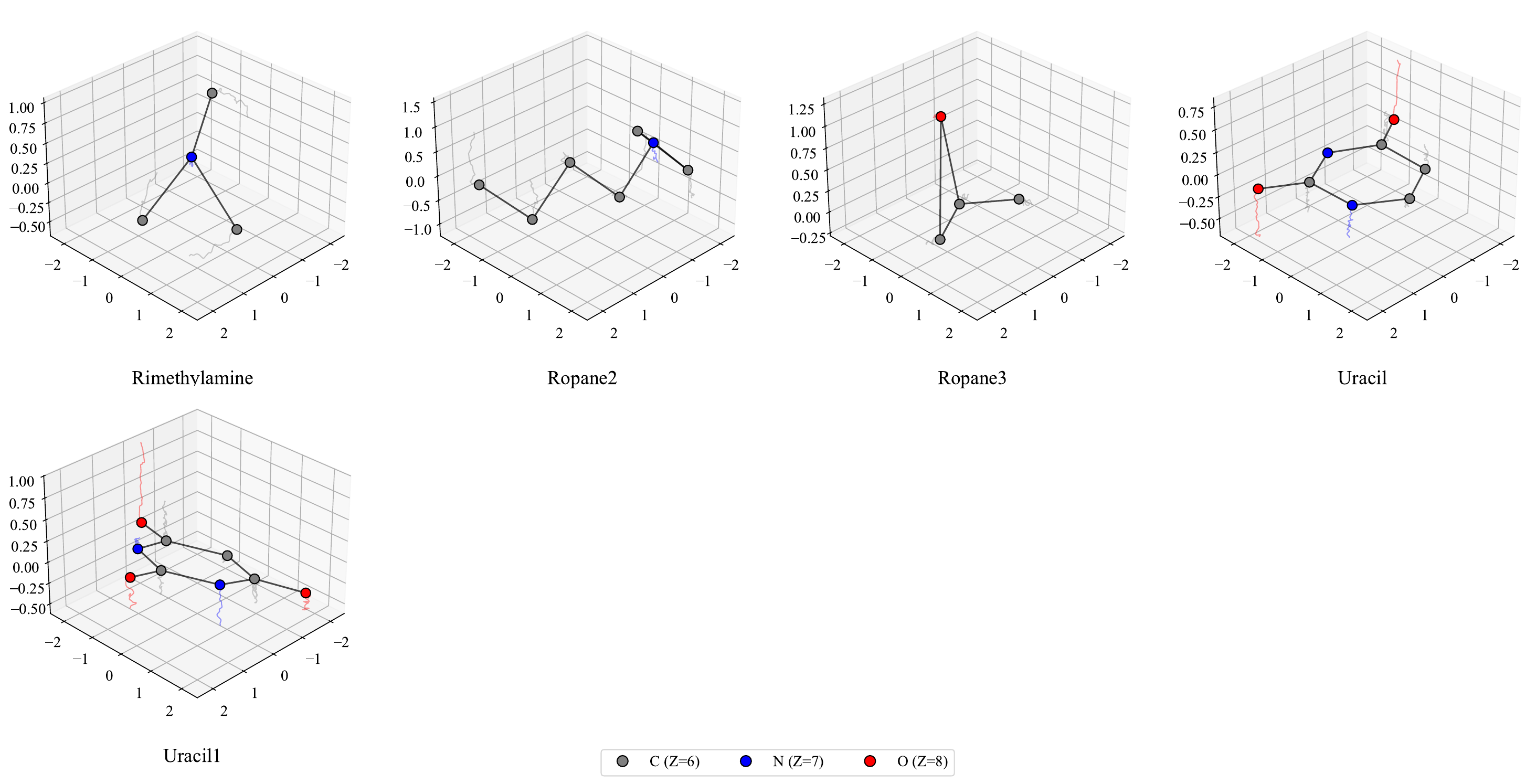}
    \label{fig:TG80_trajectories_3}
    \caption{3000-step \gls{md} trajectories from TG80. Molecules generated by our dataset expansion algorithm are named according to their seed molecule and the order of their selection.}
\end{figure}

\end{document}